\newcommand{\Dtrain}{\MC{D}^{\TN{offline}}}
\newcommand{\psar}{\mathbb{A}_{\TN{TS-Gen}}}
\newcommand{\piX}{\bs{\pi}^*(X_{1:T})}
\newcounter{phase}[algorithm]
\newlength{\phaserulewidth}
\newcommand{\setphaserulewidth}{\setlength{\phaserulewidth}}
\newtheorem{theorem}{Theorem} %
\newtheorem{lemma}{Lemma} %
\theoremstyle{definition}
\theoremstyle{plain}
\newtheorem{proposition}{Proposition}
\title{Contextual Thompson Sampling \\
via Generation of Missing Data}
\author{%
  Kelly W. Zhang \\
  Department of Mathematics\\
  Imperial College London\\ \texttt{kelly.zhang@imperial.ac.uk} \\
  \And
  Tiffany (Tianhui) Cai \\
  Department of Statistics \\
  Columbia University \\
  \texttt{tiffany.cai@columbia.edu} \\
  \AND
  Hongseok Namkoong \\
  Decision, Risk, and Operations \\
  Columbia Business School \\
  \texttt{namkoong@gsb.columbia.edu}
  \And
  Daniel Russo \\
  Decision, Risk, and Operations \\
  Columbia Business School \\
  \texttt{djr2174@columbia.edu}
}
\begin{document}

\maketitle

\begin{abstract}
We introduce a framework for Thompson sampling (TS) contextual bandit algorithms, in which the algorithm's ability to quantify uncertainty and make decisions depends on the quality of a generative model that is learned offline. Instead of viewing uncertainty in the environment as arising from unobservable latent parameters, our algorithm treats uncertainty as stemming from missing, but potentially observable outcomes (including both future and counterfactual outcomes). If these outcomes were all observed, one could simply make decisions using an ``oracle'' policy fit on the complete dataset. Inspired by this conceptualization, at each decision-time, our algorithm uses a generative model to probabilistically impute missing outcomes, fits a policy using the imputed complete dataset, and uses that policy to select the next action. We formally show that this algorithm is a generative formulation of TS and establish a state-of-the-art regret bound. Notably, our regret bound depends on the generative model only through the quality of its offline prediction loss, and applies to any method of fitting the ``oracle'' policy. 
\end{abstract}

\vspace{-1mm}
\section{Introduction}
\vspace{-1mm}
Recent advances in machine learning have transformed our ability to develop high quality predictive and generative models for complex data. This work introduces a framework for developing decision-making algorithms, specifically for contextual bandit problems, that can take advantage of these machine learning advances. By design, we assume the algorithm developer is able to apply these techniques (e.g., minimize a loss via gradient descent) and employ these methods as subroutines in our decision-making algorithm. Moreover, our theory formally connects the quality of effective (self-)supervised learning via loss minimization to the quality of decision-making.

Classically, Thompson sampling (TS) algorithms form a parametric model of the environment and consider the decision-maker's uncertainty as arising from unknown latent parameters of that model \citep{thompson1933likelihood,russo2020tutorial}. The primitive operations used by TS include i) specifying an informative prior for the latent parameter using domain knowledge, ii) sampling from the posterior distribution of the latent parameter, and iii) updating the posterior distribution as more data is collected. Unfortunately, it is well known that all three of these operations are non-trivial to perform with neural networks \citep{tran2020practical,goan2020bayesian}. In this work, we view missing, but potentially observable, counterfactual outcomes as the source of the decision-maker's uncertainty. This perspective allows us to replace the primitive operations required in the classical view with new ones that are more compatible with neural networks, namely the ability to i) effectively minimize an offline prediction loss, ii) autoregressively generate from a learned sequence model, and iii) fit a desired policy given access to a complete dataset (outcomes from all actions and decision-times). 

In the missing data view of uncertainty, if we had a complete dataset, there is no uncertainty because we could simply use the entire dataset to fit a desired ``oracle'' policy to use to make optimal decisions for that task. Inspired by this idea, at each decision time our algorithm imputes missing outcomes using a pretrained generative model, fits a desired policy using the imputed complete dataset, and selects the best action according to the fitted policy. \emph{We show that this algorithm is a generative implementation of TS.} We demonstrate empirically how to learn a generative model to impute missing outcomes using standard machine learning tools in \textit{meta-bandit} settings, where the algorithm learns from data from previous tasks to perform well on a new task from the same distribution.

We prove a state-of-the-art regret bound for generative TS with three key properties, which each have significant practical implications. First, the generative model used to impute missing outcomes only affects our bound through the offline prediction loss of the model. This means that our theory is applicable to any imputation model architecture, and that the quality of the generative model can be easily optimized for and evaluated via offline training and validation. Second, our bound is unique in that it applies to any procedure for fitting a desired ``oracle'' policy. This allows one to easily adapt TS to decision-making problems with constraints, e.g., for fairness or balancing. Finally, our proof approach makes important improvements to previous information theoretic analyses, which may be broadly applicable: i) we accommodate infinite policy classes directly without discretization, and 
ii) our bound quantifies the benefit of prior task information, such as side information on the actions. Our results hold quite generally and do not require restrictions on generative model or policy class. We demonstrate a practical implementation of our framework in Sections~\ref{sec:ourAlg} and \ref{sec:experiments}.

\vspace{-1mm}
\section{Problem formulation}
\label{sec:probFormulation}
\vspace{-1mm}
\bo{Meta-contextual bandit problem.}
Let bandit tasks $\tau$ be sampled from an unknown distribution $p^*$: \vspace{-1mm}
\begin{align}
    \tau \sim p^*, ~~~\TN{where} ~~~ \tau = \{ Z_{\tau}, X_{1:T}, \{ Y_1^{(a)}, \dots, Y_T^{(a)} \}_{a \in \MC{A}_{\tau}} \},
    \label{eqn:taskDist}
\end{align}
\begin{wrapfigure}[10]{r}{0.4\textwidth}
  \centering
    \vspace{-4mm}
    \includegraphics[width=0.9\linewidth]{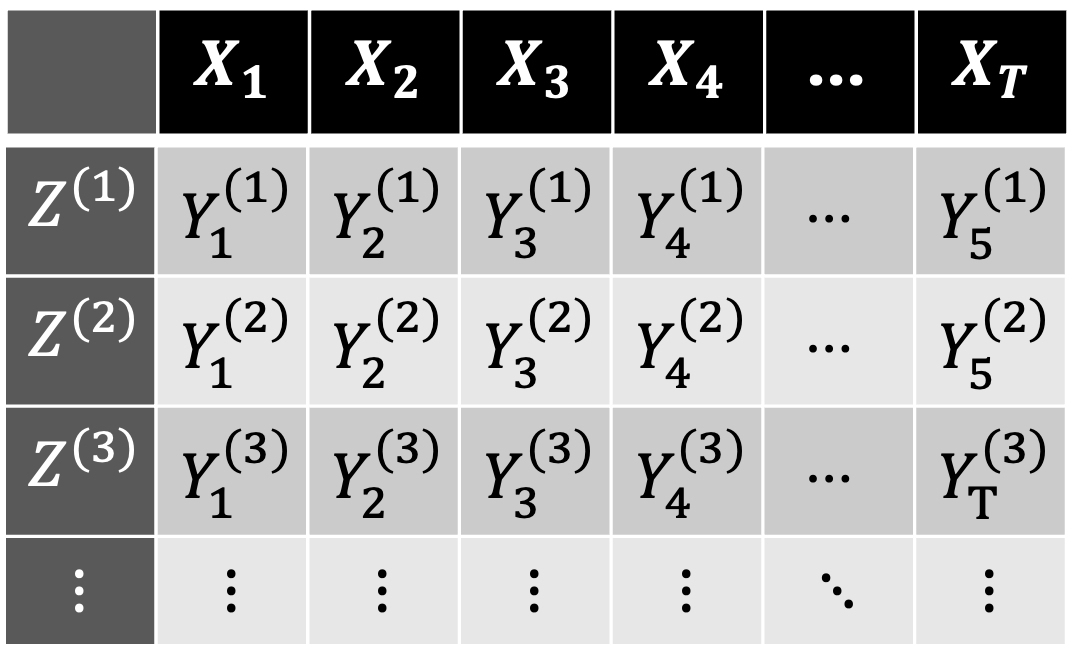}
    \vspace{-3mm}
    \caption{Potential outcomes table for a task $\tau$.} \label{fig:potential-outcomes}
\end{wrapfigure}
where each bandit task $\tau$ consists of prior task information $Z_{\tau}$, action space $\MC{A}_{\tau}$, context vectors $X_{1:T} = \{ X_1, \dots, X_T \}$, and potential outcomes $\{ Y_{1:T}^{(a)} \}_{a \in \MC{A}_{\tau}} = \{ Y_1^{(a)}, \dots, Y_T^{(a)} \}_{a \in \MC{A}_{\tau}}$ \citep{rubin2005causal}; see Figure \ref{fig:potential-outcomes} for a depiction.
We omit subscripting $X_t$ and $Y_t^{(a)}$ with $\tau$ to reduce clutter. Note, in contrast to the design-based inference literature \citep{neyman1992two}, which conditions on the potential outcomes and treats them as non-random, we assume the potential outcomes $\tau$ are drawn from a task distribution $p^*$. Informally, the agent's objective is to select actions to maximize the total expected reward for each encountered task. At the start of a task, the agent observes prior task information $Z_{\tau}$. For each timestep $t \in [1 \colon T]$, the agent observes context $X_t$, selects action $A_t \in \MC{A}_{\tau}$, observes outcome $Y_t = Y_t^{(A_t)}$, and computes reward $R(Y_t)$, for a fixed, known function $R$ in $[0,1]$. The history, $\HH_t = \{ Z_{\tau}, (X_1, A_1, Y_1),  \dots, (X_{t-1}, A_{t-1}, Y_{t-1}), X_t \}$,  includes the current context $X_t$. In contrast to much of the Bayesian contextual bandit literature \citep{LattimoreSz19,russo2020tutorial}, \bo{we do not make parametric assumptions} about the distribution of outcomes $Y$ conditional on contexts $X$ and prior task information $Z$.

The agent is able to learn both online \textit{within a single task} (i.e., over the $T$ total decision times), as well as meta-learn \textit{across different tasks} (e.g., learning how task prior information $Z_{\tau}$ may inform the distribution of $\{ Y_{1:T}^{(a)} \}_{a \in \MC{A}_{\tau}}$). The algorithm has access to training data collected from previous tasks, sampled from \eqref{eqn:taskDist}. These previous bandit tasks can be used by the algorithm to meta-learn across tasks, i.e., learn about the distribution $p^*$ itself to improve decision-making quality. 
Our algorithm's decision-making quality depends on how accurately the agent is able to model the task distribution, as well as the policy fitting procedure the algorithm designer chooses. Rather than relying on strong assumptions on the environment structure, we put the onus on the algorithm designer to i) learn a generative model that accurately captures the environment structure of the meta-bandit task at hand, and ii) choose a meaningful method for fitting a desired ``oracle'' policy, assuming access to a complete dataset. Since generative models learned offline routinely perform much better than expected according to existing theory, our theory focuses on formal reductions of decision-making quality to offline learning.
\begin{figure}[h]
    \centering
    \vspace{-3mm}
    \includegraphics[width=0.7\linewidth]{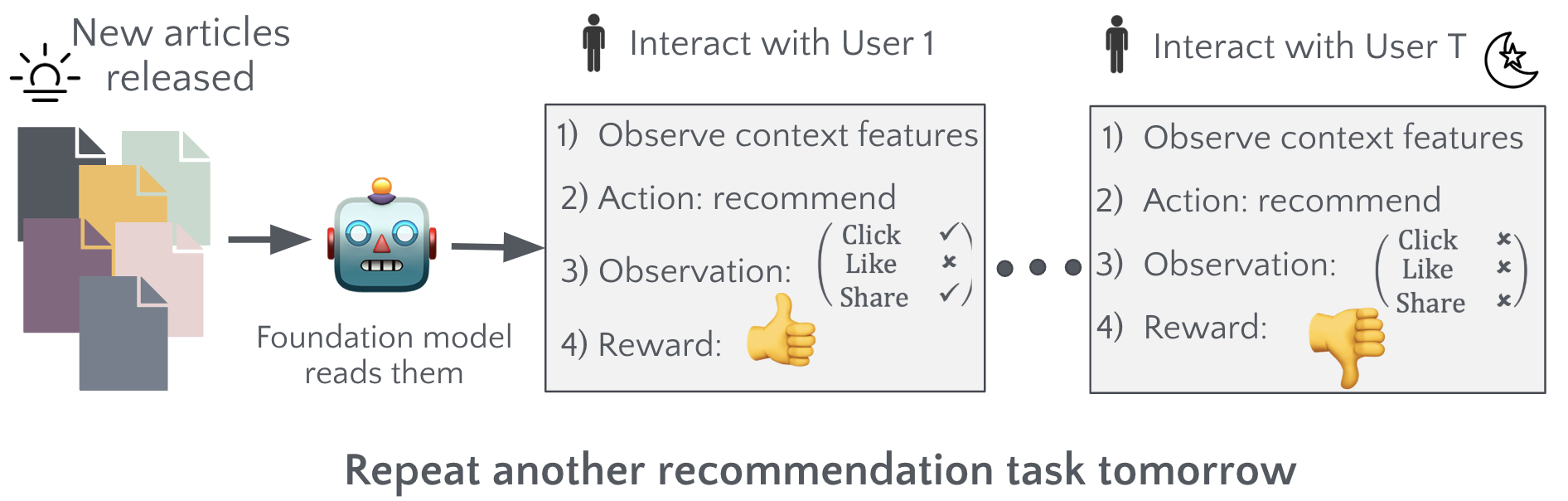}
    \caption{News recommendation meta contextual bandit problem.}
    \label{fig:news-recommendation}
    \vspace{-2mm}
\end{figure}

\bo{Motivating example: News recommendation.}
As depicted in Figure \ref{fig:news-recommendation}, a motivating meta-contextual bandit problem is cold-start news recommendations. Each day, a new set of articles $\MC{A}_{\tau}$ is released, which the agent recommends to users who arrive throughout the day. In contrast to \citet{li2010contextual}, our algorithm meta-learns across news recommendation tasks and uses the article text to improve cold-start decisions. We use $Z_{\tau} = (Z_{\tau}^{(a)})_{a \in \MC{A}_{\tau}}$ to denote the \textit{task-specific} prior information; for example, for article $a \in \MC{A}$, $Z_\tau^{(a)}$ could be the news article text or other article meta-data (category, style, etc.). The context variables $X_t$ consist of user-specific features, and $Y_t$ are recommendation outcomes observed following the $t^{\rm{th}}$ decision. The modern challenge in this setting is that incorporating news article text $Z_{\tau}$ can greatly improve the recommendation system's decisions, but a foundation model is needed to process this high dimensional text and inform decision-making. This motivates us to i) make very minimal structural assumptions on the relationship between prior information $Z_{\tau}$, context features $X_t$, and outcomes $Y_t$, and ii) develop an algorithm that can leverage foundation models. 

\bo{Policy fitting.}
The algorithm designer specifies a procedure for fitting a desired ``oracle'' policy given access to a complete bandit task dataset $\tau$. This fitting procedure outputs policies in a function class $\Pi$ where each $\pi \in \Pi$ defines a mapping from contexts $X_t$ to an action $a \in \MC{A}_{\tau}$ that does not vary over time. For notational simplicity, the policies in $\Pi$ are assumed to be non-stochastic. Note that we \textit{do not} require that this policy class is ``correct''. For a particular task $\tau$, we use $\pi^*(\cdotspace; \tau)$ to denote a ``best-fitting'' policy $\pi^* \in \Pi$, where the fitting criterion is defined by the algorithm designer. For example, consider a simple least squares criterion: $\TN{argmin}_{\pi \in \Pi} ~ \frac{1}{T} \sum_{t=1}^T \big\{ R \big(Y_t^{(\pi(X_t))} \big) - \max_{a \in \MC{A}_{\tau}} R \big(Y_t^{(a)} \big) \big\}^2$. 

One should think of $\pi^*(\cdotspace; \tau)$ as the policy one \emph{would} implement if abundant task data, $\tau$, were available. This could involve fitting a model, adding prompt tokens to condition a language model, or maximizing hindsight performance. This policy fitting can also incorporate constraints on the policy, e.g., to ensure fairness. We aim to match this policy’s performance via efficient interactive learning. %

\bo{Regret.} 
We consider a best-in-class style regret objective, which is common in the contextual bandit literature \citep{foster2020open,foster2019model,langford2007epoch,agarwal2017corralling}. The objective of the agent $\mathbb{A}$ is to make decisions to minimize the per-period regret against the best-in-hindsight policy $\pi^*(\cdot; \tau)$: \vspace{-2mm}
\begin{align}
    \label{eqn:regretContext}
    \Delta(\mathbb{A}) = \E \bigg[ \frac{1}{T} \sum_{t=1}^T \left\{ R \big(Y_{t}^{(\pi^*(X_t; \tau))} \big) - R \big( Y_{t}^{(A_t)} \big) \right\} \bigg].
\end{align}
\vspace{-4mm}

The "best-fitting" or "best-in-hindsight" policy $\pi^*(\cdot; \tau)$ is well-defined and is a well-established concept in the bandit literature, representing a generalization of the optimal policy. Refer, for example, to Section 2 of \citet{beygelzimer2011contextual} and Chapter 4 of \citet{bubeck2012regret} for very analogous objectives. %
We emphasize that our algorithm does not have access to the best-fitting policy, which would trivialize the problem. They have access to a means to compute the best-fitting policy if they had $\tau$, i.e., observed rewards of every arm in every context.

The expectation in \eqref{eqn:regretContext} averages over tasks $\tau \sim p^*$ and any randomness in how the algorithm selects actions. $\Delta(\mathbb{A})$ is the long-run per-period regret if the algorithm was deployed across many tasks. Note, increasing the complexity of the policy class $\Pi$ increases the average reward under the best-fitting policy, $\E \big[ \frac{1}{T} \sum_{t=1}^T R \big(Y_{t}^{(\pi^*(X_t; \tau))} \big)\big]$. However, this increased complexity also means that large sample sizes are required to learn $\pi^*(\cdotspace; \tau)$ accurately and will worsen our regret bound (Section \ref{sec:regret}).

\vspace{-1mm}
\section{Generative Thompson Sampling: General algorithm and regret bounds}
\vspace{-1mm}
\bo{Posterior sampling via imputing missing data.} 
In this work, we view missing data as the source of the decision-maker's uncertainty. This contrasts the classical approach of considering unknown model parameters as the source of uncertainty. As we will explore in the following sections, the missing data viewpoint is very amenable to modern deep learning methods, which can be used to train models that are able to impute missing data probabilistically in a calibrated fashion. First, consider an idealized setting in which we have the true meta task distribution $p^*$. Using $p^*$ we can form exact posteriors sample for task outcomes $\tau = \{ Z_{\tau}, X_{1:T}, \{Y_{1:T}^{(a)}\} \}$ given the history $\HH_t$:
\vspace{-0.4mm}
\begin{align}
    \label{eqn:taskPosteriorSample}
    \hat{\tau}_t \sim p^* \left( \tau \in \cdot \mid \HH_t \right). 
\end{align}
Above, we probabilistically generate values in $\tau$ that have not yet been observed in the history $\HH_t$; This consists of future contexts, future outcomes, and outcomes from previous timesteps for actions that were not selected. We discuss how to practically implement such sampling in Section \ref{sec:ourAlg}. Note, even when $p^*$ is known, $\hat{\tau}_t$ is simply a calibrated posterior sample and is not equivalent to the true $\tau$.

\begin{wrapfigure}[10]{r}{0.47\textwidth}
  \centering
  \vspace{-4mm}
    \includegraphics[width=0.9\linewidth]{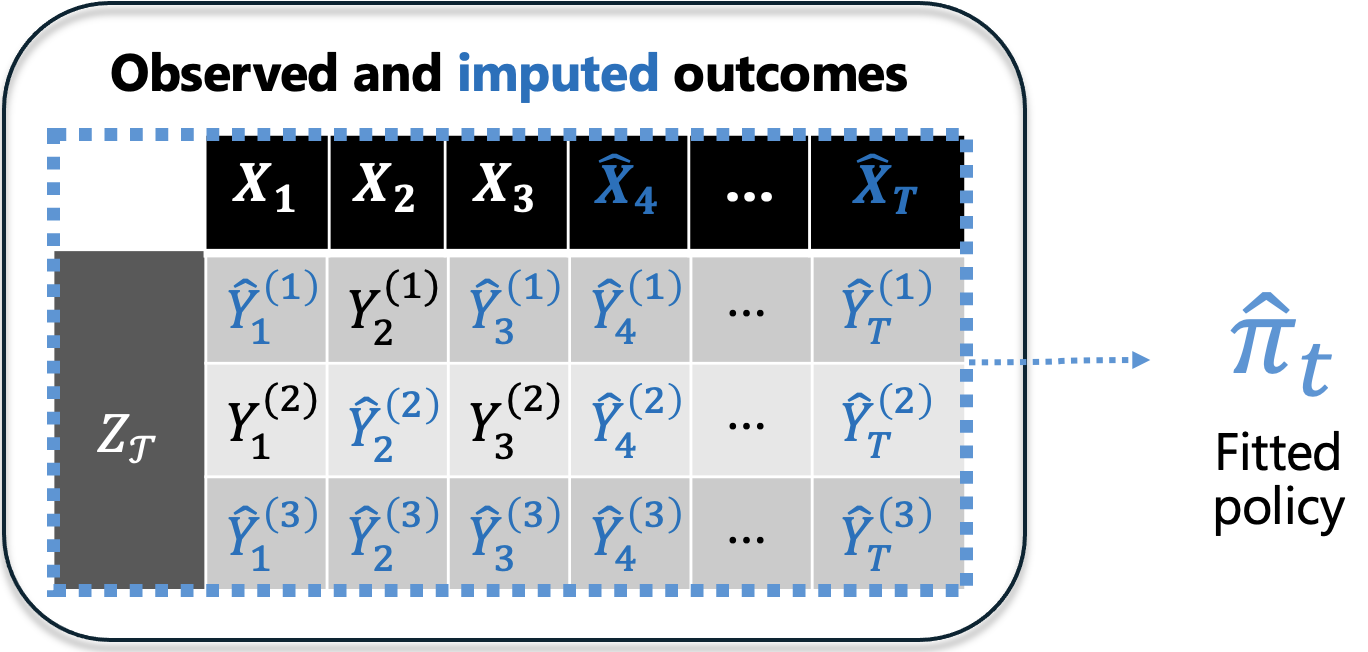}
    \vspace{-3mm}
    \caption{%
    The agent imputes missing outcomes and uses the imputed dataset to fit a policy.} \label{fig:fitted-policy}
\end{wrapfigure}
With this exact posterior sample, $\hat{\tau}_t$, we can form posterior samples of any statistic computed using $\hat{\tau}_t$. In particular, we are interested in sampling from the posterior distribution of the fitted policy $\pi^*(\cdotspace; \tau)$, which can be computed by finding the fitted policy for the sampled task dataset $\hat{\tau}_t$, i.e., $\pi^*(\cdotspace; \hat{\tau}_t)$. Posterior sampling of a best-fitting policy is a common subroutine used in Bayesian decision-making algorithms \citep{kaufmann2012bayesian, russo2018learning, ryzhov2012knowledge}. Thus, our posterior sampling approach can easily integrate with these existing Bayesian algorithms. 

In this work, we focus on Thompson sampling \citep{russo2016information,thompson1933likelihood}, i.e., probability matching, which selects actions according to the posterior probability that they are optimal. Thompson Sampling (TS) can be implemented with a single posterior sample per decision time. In our generative implementation of TS (Algorithm \ref{alg:Thompson}) at decision time $t$, after sampling $\hat{\tau}_t$ as in \eqref{eqn:taskPosteriorSample}, TS fits the policy $\pi^*(\cdotspace; \hat{\tau}_t)$, and selects the action $A_t \gets \pi^*(X_t; \hat{\tau}_t)$. See Figure \ref{fig:fitted-policy} for a depiction. Our algorithm generalizes TS by replacing the true reward-maximizing policy with a best-fitting policy under a given policy class $\Pi$. A more "standard" TS algorithm is recovered when the best-fitting policy is correctly specified. See the discussion below display \eqref{eqn:regretContext} for more on the best-fitting policy. 

\begin{algorithm}[h]
\caption{Generative Thompson Sampling}
\label{alg:Thompson}
\begin{algorithmic}[1]
   \REQUIRE Imputation model $p$, actions $\MC{A}_{\tau}$, task input $Z_{\tau}$.
   \FOR{$t \in \{1, \dots, T\}$}
        \STATE Observe context $X_t$ and append it to $\HH_t$
        \STATE Generate / sample $\hat{\tau}_t \sim p( \tau \in \cdot \mid \HH_t)$
        \STATE Fit the policy $\pi^*(\cdotspace; \hat{\tau}_t)$ \label{eq:ts_fitpolicy}
        \STATE Select the action $A_t \gets \pi^*(X_t; \hat{\tau}_t)$
        \STATE Observe outcome $Y_t \gets Y_t^{(A_t)}$
        \STATE Update history $\HH_{t+1} \gets \HH_t \cup \{ (X_t, A_t, Y_t)\}$ 
   \ENDFOR
\end{algorithmic}
\end{algorithm}

Under our generative TS Algorithm \ref{alg:Thompson}, the polices in $\Pi$ that are best-in-class optimal under some likely generation of $\hat{\tau}_t$ have a chance of being selected. Once no plausible sample of missing outcome $\hat{\tau}_t$ could result in an action being optimal, it is essentially written off. %
We formalize that our generative algorithm aligns with the abstract definition of Thompson Sampling (probability matching) in Proposition \ref{prop:probMatch} below when using the correct model $p^*$. See Chapter 36.5 of \citet{lattimore2020bandit} for further discussion of the probability matching definition of Thompson Sampling.
\begin{proposition}[Algorithm \ref{alg:Thompson} Implements Thompson Sampling]
    \label{prop:probMatch}
    Algorithm \ref{alg:Thompson} with imputation model $p^*$ implements Thompson Sampling (probability matching), i.e., the following holds almost surely:
    \begin{align*}
        \mathbb{P}(A_t = a \mid \mathcal{H}_t) 
        = \mathbb{P}(\pi^*(X_t; \tau) = a \mid \mathcal{H}_t).
    \end{align*}
\end{proposition}
\vspace{-1mm}
A key to proving Proposition \ref{prop:probMatch} is showing that $\mathbb{P}(\pi^*(X_t; \tau) = a \mid \mathcal{H}_t) = \mathbb{P}(\pi^*(X_t; \hat\tau_t) = a \mid \mathcal{H}_t)$, which holds when $\hat{\tau}_t$ is sampled from the true meta task distribution $p^*$ as in \eqref{eqn:taskPosteriorSample}. 
See Appendix \ref{app:probMatch} for our proof.

\vspace{-2mm}
\subsection{Regret when using a perfectly calibrated imputation model $p^*$.}
\label{sec:regret-perfect}
\vspace{-0.25mm}
We develop a novel analysis of contextual TS, which is applicable to infinite policy classes $\Pi$ with finite VC dimension. Our VC dimension bound resembles those from adversarial bandits, but for the first time, we show we can derive this using an information theoretic analysis. We first present a regret bound for Algorithm \ref{alg:Thompson} with a perfectly calibrated imputation model, $p^*$ from \eqref{eqn:taskDist}, and extend to approximate imputation models in Section \ref{sec:regret}. Note that assuming $p^*$ is known is akin to assuming the prior and likelihood of a Bayesian model are known, which is standard in Bayesian regret analyses.

\bo{Notation.}
Let $\piX := \{ \pi^*(X_t; \tau) \}_{t=1}^T$ be the best fitting policy evaluated at contexts $X_{1:T}$. 
Let $H(Y \mid X)$ denote the conditional entropy of $Y$ (discrete) given $X$; note $H(Y \mid X) = -\E [ \sum_y \PP( Y = y \mid X) \log \PP( Y = y \mid X) dy ]$ is a constant. Let $I(Y; X \mid Z)$ be the mutual information between $Y$ and $X$ conditional on $Z$; note $I(Y; X \mid Z)$ marginalizes $Z$ and is also a constant.
\begin{theorem}[Regret bound for Generative TS with a perfectly calibrated imputation model $p^*$]
    \label{thm:psarRegretPerfect}
    For Algorithm \ref{alg:Thompson} with imputation model $p^*$, $\mathbb{A}_{\rm{TS-Gen}}(p^*)$, 
    \vspace{-0.5mm}
    \begin{align*}
        \Delta( \mathbb{A}_{\rm{TS-Gen}}(p^*) ) \leq \sqrt{ \frac{|\MC{A}_\tau|}{2 T} \cdot H ( \piX \mid Z_{\tau} ) }.
    \end{align*}
    Moreover, $\Delta( \mathbb{A}_{\rm{TS-Gen}}(p^*) ) \leq \sqrt{ \frac{\bar{\Gamma}}{T} \cdot H ( \piX \mid Z_{\tau} ) }$, where $\bar{\Gamma}$ bounds the information ratio \citep{russo2016information}, i.e., $\bar{\Gamma} \geq \max_{t} \Gamma_t$ a.s. for
    $\Gamma_t := \frac{\E [ R(Y_t^{(\pi^*(X_t; \tau))}) - R(Y_t^{(A_t)}) \mid \HH_t]^2}{I( \pi^*(X_t; \tau); Y_t^{(A_t)}, A_t \mid \HH_t)}$.
\end{theorem}
\vspace{-1mm}
Note $\bar{\Gamma}$ can be smaller than $|\MC{A}_\tau|/2$ when feedback from one action informs learning about other actions (Appendix \ref{app:comparison}).
The entropy, $H( \piX \mid Z_\tau)$, quantifies the benefit of using prior information $Z$. Our bound automatically applies to infinite policy classes since it only depends on the entropy of the optimal policy evaluated at a finite number of contexts, $\piX$.

\bo{Upper bounding the condition entropy using VC dimension.} We can construct a coarse upper bound for the entropy $H \big( \piX \mid Z_{\tau} \big)$ using the VC dimension of the policy class $\Pi$. The VC dimension is a worst-case quantity that has to with the total number of possible assignments of actions given contexts. In contrast, entropy reflects uncertainty based on the task distribution (learned from past tasks) and the information $Z$ (e.g., article texts), as many assignments may be extremely unlikely to be optimal. Since VC dimension is only defined for binary functions, we use the multiclass generalization Nataranjan dimension \citep{natarajan1989learning} when $|\MC{A}_\tau| > 2$.
\vspace{-0.5mm}
\begin{proposition}[Complexity bound on entropy]\label{prop:VC}
    For policy class $\Pi$ over action space $\MC{A}_\tau$ with Nataranjan dimension $d$ (equivalent to VC dimension when $|\MC{A}_\tau| = 2$),
    \begin{align*}
        H ( \piX \mid Z_{\tau} )
        \leq H ( \piX  )
        = O( d \cdot \log (T \cdot |\MC{A}_\tau|) ).
    \end{align*}
\end{proposition}
\vspace{-2mm}
Note, our bound above depends on the Natarajan dimension of the policy class $\Pi$, \textbf{not} the Natarajan dimension of the generative sequence model $p^*$. Furthermore, the Natarajan dimension of $\Pi$ does not change with $T$ for stationary policies. A feature of our result is that our bound, when combined with Theorem \ref{thm:psarRegretPerfect}, can be used to derive regret bounds for a wide range of policy classes $\Pi$.

Using Proposition \ref{prop:VC}, our regret bound (Theorem \ref{thm:psarRegretPerfect}) resembles adversarial regret bounds that depend on VC dimension \citep{beygelzimer2011contextual}, showing for the first time how such a result can be established through information theoretic arguments.

\bo{Benefits of our approach and relationship to related work.} 
Regret bounds for contextual TS bandits with infinite policy classes have been of great interest in the literature. The predominant approach to generalizing information-theoretic analyses for TS beyond multi-armed bandits requires discretizing a latent parameter space \citep{dong2018information,gouverneur2024an,neu2022lifting,infotheoreticNonstationary} and uses cover-number arguments; our proof approach notably does not require any discretization. 
Furthermore, our bound can be applied broadly, while existing approaches like \citet{neu2022lifting,infotheoreticNonstationary} depends on the entropy of a latent environment parameter, which is only applicable to parametric bandits. 
By Proposition \ref{prop:VC}, \textit{our result can directly be applied to infinite policy classes by leveraging existing VC dimension bounds}, e.g., for decision trees \citep{asian2009calculating}. 
In parametric, stationary bandit settings, our result approximately matches (up to log factors) existing Bayesian regret bounds for linear logistic bandits \citep{neu2022lifting} and matches up to a factor of $\sqrt{d}$ and log factors bounds for linear non-contextual bandits \citep{russo2018learning,dong2018information} (Appendix \ref{app:comparison}). 
Finally, though we do not explore it much in this work, since we make minimal assumptions on $p^*$, Theorem \ref{thm:psarRegretPerfect} applies to nonstationary bandit environments. While the oracle policy $\pi^*$ cannot be time-varying, $\pi^*$ can \emph{effectively} vary over time by including the timestep $t$ as a context feature in $X_t$.

\vspace{-1mm}
\subsection{Regret when using an approximate imputation model $p_\theta$.}
\vspace{-1mm}
\label{sec:regret}
We now present a regret bound for generative TS with an approximate generative model $p_\theta$. The result is notable because $p_\theta$ only affects the regret bound through its offline prediction loss, which means the result can be applied to \textit{any} model class. Specifically, our regret bound will depend on the following population-level loss (the expectation below averages over the task distribution $p^*$): %
\begin{align}
     \ell(p_\theta) = - \E \big[ \log p_{\theta} \big( X_{1:T}, \{ Y_{1:t-1}^{(a)} \}_{a \in \MC{A}_\tau} \mid Z_{\tau} \big) \big]
     \label{eq:pop_loss}
.\end{align}
In Section \ref{sec:ourAlg}, we discuss training and sampling from learned generative imputation models in practice.
\begin{theorem}[Regret bound for Generative TS with an approximate imputation model]
    \label{thm:psarRegret}
    For Algorithm \ref{alg:Thompson} with imputation model $p_\theta$, $\mathbb{A}_{\rm{TS-Gen}}(p_\theta)$, \vspace{-0.5mm}
    \begin{align}
        \label{eqn:psarregret}
      \Delta \big( \psar(p_\theta) \big)
      & \leq 
        \underbrace{ 
        \sqrt{ \frac{|\MC{A}_{\tau}|}{2 T} \cdot H ( \piX \mid Z_\tau ) } }_{\TN{Regret bound for Thompson sampling}} 
        + \underbrace{ 
        \sqrt{ 2 \{ \ell(p_\theta) - \ell(p^*) \} } }_{\TN{Penalty for sub-optimal prediction}}.
    \end{align}
\end{theorem}
\vspace{-2mm}
What is particularly novel about Theorem \ref{thm:psarRegret} is that the analysis holds even when the imputation model $p_\theta$ is misspecified and does not correspond to proper Bayesian inference in any way. Comparing Theorem \ref{thm:psarRegret} to Theorem \ref{thm:psarRegretPerfect} from earlier, we can interpret the ``cost'' of using an approximate model $p_\theta$ as $\sqrt{ 2 \{ \ell(p_\theta) - \ell(p^*) \}}$; This penalty depends on how well $p_\theta$ approximates $p^*$. 

\bo{Scaling of loss penalty.} 
While tight theoretical bounds for the penalty term $\ell(p_\theta) - \ell(p^*)$ currently do not exist for complex models like neural networks, we can draw intuition from simpler settings. Consider a stationary, stochastic, Bayesian bandit problem. In this setting, for parametric Bayesian models, where $p_\theta$ and $p^*$ are exchangeable, posterior predictive distributions \citep{fortini2023prediction}, classic results by \citet{ClarkeBa90} show that the gap $\ell(p_\theta) - \ell(p^*)$ scales like $\log T$, under mild regularity conditions. This sublinear growth occurs because in this stationary setting, the Bayesian model $p_\theta$ is better able to approximate the next outcome as it observes more data (a phenomenon closely related to Bayesian consistency \citep{KleijnVa12} and how the effect of the prior eventually washes out). The difference $\ell(p_\theta) - \ell(p^*)$ also scales with the amount of data used to learn $p_\theta$; This is closely linked to empirical Bayes methods, i.e., approaches to meta-learn a prior distribution from data. When $p_\theta$ and $p^*$ correspond to posterior predictive distributions of Bayesian models with correctly specified likelihoods, $p_\theta$ and $p^*$ differ only in their initial prior distributions. Existing works bounding the regret of TS with misspecified priors are not directly comparable, as \citet{simchowitz2021bayesian} analyzes a modified version of TS that requires multiple posterior samples per decision time, and \citet{liu2016prior} bounds the frequentist regret.

\bo{Related work on generative TS algorithms.} \citet{wen2021predictions} consider a non-contextual, multi-armed TS algorithm that incorporates a generative outcome model. However, they require modeling latent environment parameters, and their bound requires a history-dependent KL divergence term to be small, which differs from our prediction loss penalty. 
\citet{psar2024} proves a regret bound with a similar prediction loss penalty generative TS algorithm with misspecified models for a much simpler multi-armed, non-contextual setting. They do not introduce the concept of a general ``oracle'' policy fitting procedure, and their result does not apply to infinite policy classes.
Moreover, we were not able to directly build on their proof approach because they critically rely on the fact that under $p^*$, unobserved outcomes $Y$ are exchangeable given the history. In contrast, our result does not require exchangeability at all and technically applies even if $p^*$ is not exchangeable (e.g., nonstationary).

\bo{Flexibility and advantages of Generative TS.}
Generative TS requires the algorithm designer to choose an imputation model $p_\theta$ and a policy class $\Pi$. The modularity of these two components allows one to easily extend TS to more complex, less standard decision-making problems, e.g., (i) \bo{Nonstationarity} can be accommodated with a $p_\theta$ that models trends over time (see discussion before Section \ref{sec:regret}); (ii) \bo{Correlated outcomes} can be modeled using a $p_\theta$ that captures dependencies between outcomes across actions or over time; (iii) \bo{Constrained decision-making} can be done by choosing a policy class $\Pi$ satisfying such constraints, e.g., to ensure fairness one can use standard constrained optimization approaches to learning decision rules~\citep{corbett2017algorithmic} (Appendix~\ref{app:fairness}).

\vspace{-1mm}
\section{Practically implementing generative Thompson Sampling}
\vspace{-1mm}
\label{sec:ourAlg}
\begin{wrapfigure}[10]{r}{0.48\textwidth}
\vspace{-5mm}
  \centering
    \includegraphics[width=1\linewidth]{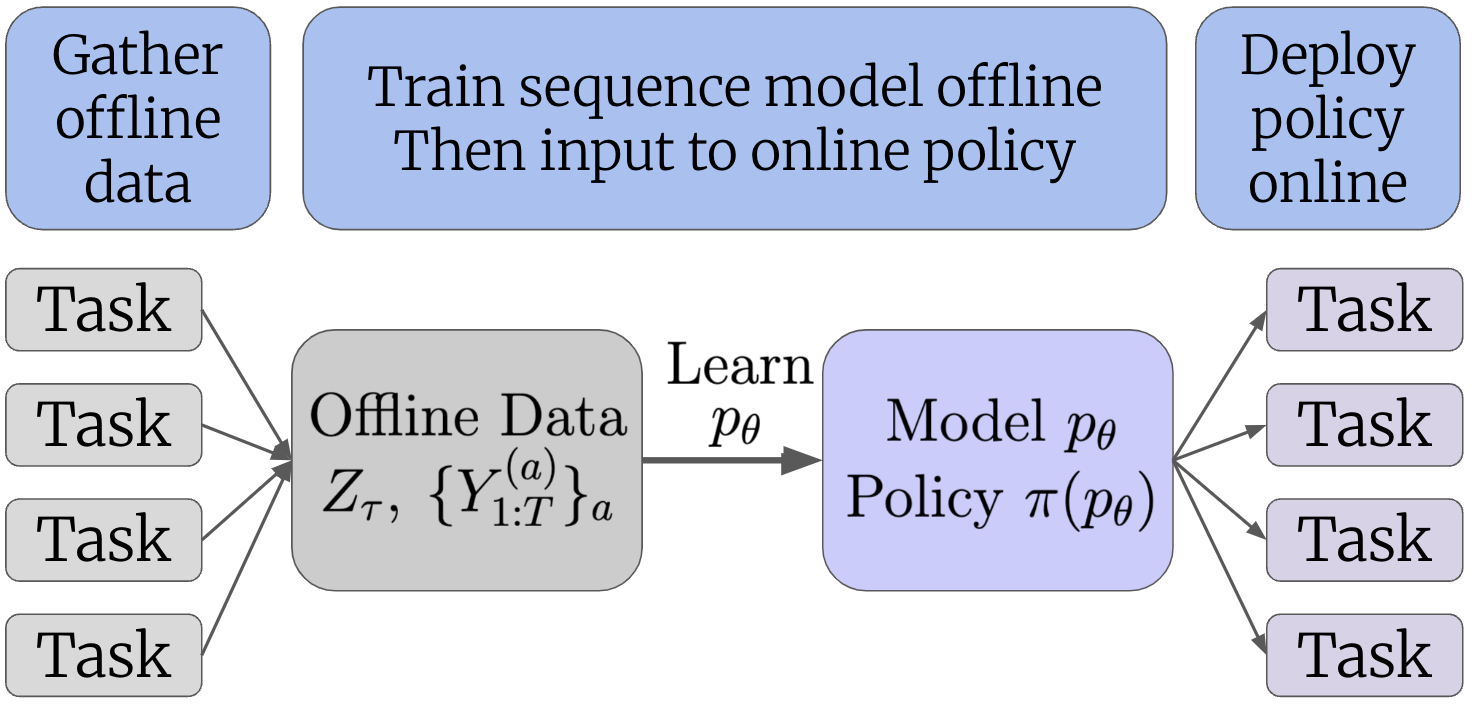}
    \caption{Offline meta-learning and online decision-making across multiple tasks.} %
    \label{fig:process}
\end{wrapfigure}

We now introduce an example of how to learn $p_\theta$ and implement generative TS. Our overall framework is depicted in Figure \ref{fig:process}: In step 1, we use offline data from previous tasks to learn a $p_\theta$ model; Then in step 2, we use the learned $p_\theta$ model to implement generative TS. Here, $p_\theta$ is a sequence model that we meta-learn by pretraining on historical data from previous tasks. As our theory accommodates any $p_\theta$ architecture, our approach can take advantage of recent advances in generative sequence models \citep{attentionisallyouneed}.

\vspace{-0.5mm}
\subsection{Step 1: Offline learning for generative model $p_\theta$.}
\vspace{-0.5mm}
\label{sec:learningptheta}
We now describe learning a generative, sequence model $p_\theta$ from historical data. Our goal is to minimize the loss $\ell(p_\theta)$ from  \eqref{eq:pop_loss}. First note that by rules of conditional probabilities, $\ell(p_\theta) = $ \vspace{-2mm}
\begin{align*}
    - \E \bigg[ \sum_{t=1}^T \left\{ \log p_\theta(X_t | Z_{\tau}, X_{1:t-1},  \{ Y_{1:t-1}^{(a)} \}_{a \in \MC{A}_\tau}) + 
    \log p_{\theta} \big( \{ Y_t^{(a)} \}_{a \in \MC{A}_\tau} | Z_{\tau}, X_{1:t}, \{ Y_{1:t-1}^{(a)} \}_{a \in \MC{A}_\tau} \big)
    \right\} \bigg].
\end{align*}
\vspace{-4mm}

To make learning $p_\theta$ more practical, the model can make a variety of simplifying approximations. For example, $p_\theta$ could model contexts as evolving independently of past outcomes, i.e., $p_\theta(X_t \mid Z_{\tau}, X_{1:t-1},  \{ Y_{1:t-1}^{(a)} \}_{a \in \MC{A}_\tau}) = p_\theta(X_t \mid Z_{\tau}, X_{1:t-1} )$, or model contexts as i.i.d. over time, i.e., $p_\theta(X_t \mid Z_{\tau}, X_{1:t-1},  \{ Y_{1:t-1}^{(a)} \}_{a \in \MC{A}_\tau}) = p_\theta(X_t)$. Additionally, $p_\theta$ could model outcomes independently across actions, i.e., $p_{\theta} ( \{ Y_t^{(a)} \}_{a \in \MC{A}_\tau} \mid Z_{\tau}, X_{1:t}, \{ Y_{1:t-1}^{(a)} \}_{a \in \MC{A}_\tau} ) = \prod_{a \in \MC{A}_\tau} p_{\theta} ( Y_t^{(a)} \mid Z_{\tau}^{(a)}, X_{1:t}, Y_{1:t-1}^{(a)} )$, where $Z_\tau = (Z_\tau^{(a)})_{a \in \MC{A}_\tau}$ for action-specific task features $Z^{(a)}$.

Under the chosen simplifying modeling approximations, one can use gradient descent to optimize $p_\theta$ to minimize an empirical loss. For example, in our experiments, our $p_\theta$ makes several simplifying assumptions, and we minimize the following empirical loss to approximately minimize $\ell(p_\theta)$: \vspace{-2mm}
\begin{align}
    \label{eqn:train_loss}
    - \frac{1}{|\Dtrain|} \sum_{\tau \in \Dtrain} \sum_{t=1}^T \bigg\{ \log p_\theta(X_t) + 
    \sum_{a \in \MC{A}_\tau}\log p_{\theta} \big( \{ Y_t^{(a)} \}_{a \in \MC{A}_\tau} \mid  Z_{\tau}^{(a)}, X_{1:t-1}, Y_{1:t-1}^{(a)}  \big)
    \bigg\}.
\end{align}
\vspace{-4mm}

Above, $\Dtrain$ ideally consists of bandit tasks $\tau \sim p^*$ as described in \eqref{eqn:taskDist}. In practice, one may not have ``complete'' task datasets $\tau = \{ Z_{\tau}, X_{1:T}, \{ Y_1^{(a)}, \dots, Y_T^{(a)} \}_{a \in \MC{A}_{\tau}} \}$, but instead have some partial datasets, e.g., $\{ Z_{\tau}, (X_1, A_1, Y_1),  \dots, (X_T, A_T, Y_T) \}$, collected by a behavior policy. In our experiments, we use several heuristics to construct approximate complete tasks $\tilde{\tau}$ from the partial datasets. We use these approximate task datasets to form $\Dtrain = \{ \tilde{\tau}_1, \tilde{\tau}_2, \tilde{\tau}_3, \dots, \}$. To form  $\tilde{\tau}$, we make a simplifying modeling assumption that the tuples $(X_1, Y_1^{(a)}), \dots, (X_T, Y_T^{(a)})$ are exchangeable over time. We then use bootstrap sampling to form approximate complete task datasets $\tilde{\tau}$; see Appendix \ref{app:pretrain_bootstrap}. In this appendix, we also formalize all the simplifying modeling assumptions we make and show how they match standard stochastic contextual bandits with independent actions. \vspace{-1.5mm}

\begin{algorithm}[h]
  \caption{Offline training of a sequence model}
  \label{alg:offline}
  \begin{algorithmic}[1]
    \REQUIRE Training data $\Dtrain$, model class $\{p_\theta\}_{\theta \in \Theta}$
    \WHILE{not converged}
        \STATE Sample a mini-batch of tasks $\MC{D}^{\TN{mini-batch}} \subset \Dtrain$
        \STATE Compute loss in \eqref{eqn:train_loss} using tasks $\tau \in \MC{D}^{\TN{mini-batch}}$
        \STATE Backpropagate and take a gradient step to update $p_\theta$
    \ENDWHILE
  \end{algorithmic}
\end{algorithm}

\vspace{-2.5mm}
\subsection{Step 2: Online decision-making using the learned generative model $p_\theta$.}
\label{sec:online}
After the sequence model $p_\theta$ is trained offline, it is deployed and used for online decision-making. No additional training of $p_\theta$ is needed. Instead, the sequence model learns from recent online observations ``in-context'' by conditioning \citep{brown2020language}. Specifically, to implement the generative step of Generative TS (line 3 of Algorithm \ref{alg:Thompson}), we use $p_\theta$ to sample future contexts $X$ and missing outcomes $Y$ to form $\hat \tau_t$. We refer to this procedure as \emph{posterior sampling via autoregressive generation}; this is depicted in Figure \ref{fig:autoregressive-generation} and formalized in Algorithm~\ref{alg:posterior_sample} below. %

In Algorithm~\ref{alg:posterior_sample}, we use $\mathcal{M}_a \subset \{1,\ldots,T\}$ to denote the timesteps $t$ for which $Y_t^{(a)}$ has not been observed. 
When generating outcomes in $\hat{\tau}_t$ for arm $a$, we permute pairs of contexts and outcomes $(X,Y)$ so that observed outcomes always precede missing ones; this way, we always condition on all observed outcomes (and corresponding contexts), matching Figure \ref{fig:autoregressive-generation}. 
We use $\prec_a$ to denote this ordering for an action $a \in \MC{A}_{\tau}$; we use $i\prec_a j$ whenever either (a) $i < j$ or (b) $i \notin \mathcal{M}_a$, but $j \in \mathcal{M}_a$.

\begin{figure*}[h]
    \vspace{-1mm}
    \centering\includegraphics[width=\linewidth]{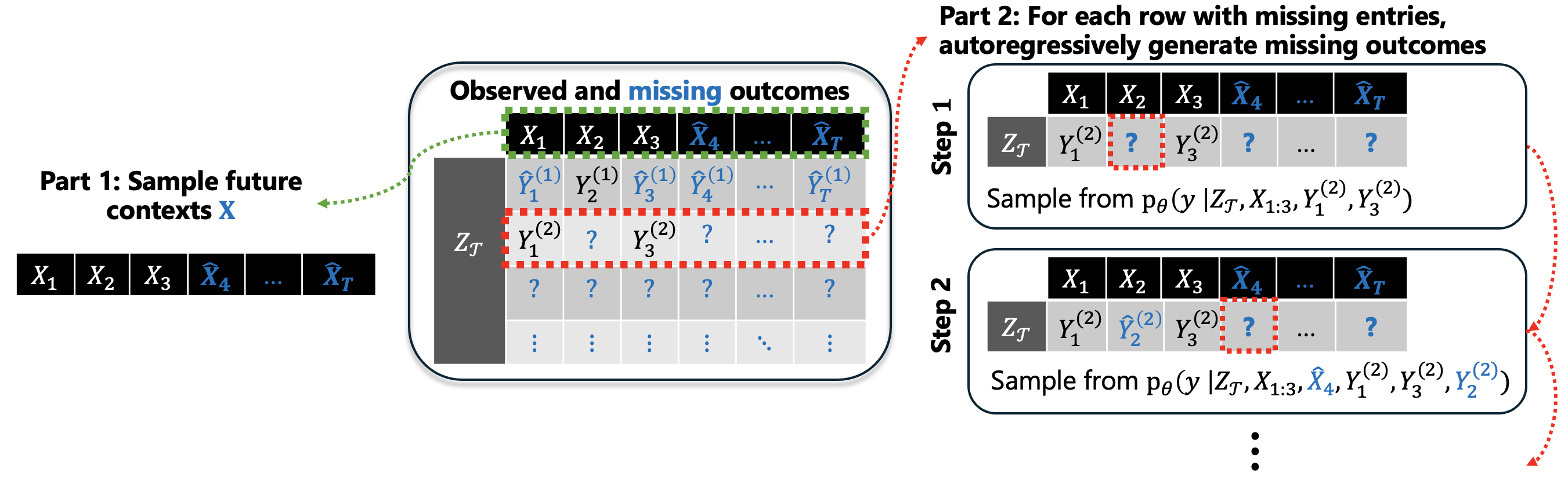}
    \vspace{-6mm}
    \caption{Posterior sampling via autoregressive generation 
    (Algorithm \ref{alg:posterior_sample}). \vspace{-2mm}}
    \label{fig:autoregressive-generation}
\end{figure*}

\begin{algorithm}[t]
  \caption{Posterior sampling via autoregressive generation}
  \label{alg:posterior_sample}
  \begin{algorithmic}[1]
    \REQUIRE Sequence model $p_{\theta}$, actions $\MC{A}_{\tau}$, current timestep $t$, current task history $\HH_t$
    \STATE For each $a \in \MC{A}_{\tau}$, define $\MC{M}^{(a)}$ as the set of times $i \in [1 \colon T]$ where $Y_i^{(a)}$ was not observed in $\HH_t$
    \STATE For each $a \in \MC{A}_{\tau}$, 
    define the ordering $\prec_a$ so that all observed outcomes precede unobserved ones
    \STATE Set $\hat{X}_{1:t} \gets X_{1:t}$ and sample $\hat{X}_{t+1}, \dots, \hat{X}_T$ from $p_\theta$
    \FOR{$a \in \MC{A}_{\tau}$}
    \FOR{$i \in \{1,\ldots,T\}$ in order of $\prec_{a}$}        
        \IF{$i \not\in \MC{M}^{(a)}$} 
            \STATE $\hat{Y}_i^{(a)} \gets Y_i^{(a)}$
        \ELSE
            \STATE Sample $\hat{Y}_i^{(a)} \sim p_{\theta}(\cdotspace \mid Z, \{\hat{X}_j, \hat{Y}_j^{(a)} \}_{j \prec_{a} i}, \hat{X}_i )$
        \ENDIF
    \ENDFOR
    \ENDFOR
    \STATE \TN{\bo{Return:}} \, $\hat{\tau}_t \gets \big\{ Z_\tau, \hat{X}_{1:T}, \{ \hat{Y}_{1:T}^{(a)} \}_{a \in \MC{A}_{\tau}} \big\}$
  \end{algorithmic}
\end{algorithm}

\vspace{-1mm}
\section{Related work}
\label{sec:related-work}
\vspace{-0.5mm}
\bo{Decision-making with generative models.}
Many recent methods use generative models in decision-making that involve imitation learning, i.e., from demonstrations learn to mimic an expert's actions \citep{decisionTransformer,janner2021offline,hussein2017imitation}. \citet{lee2023incontext} discuss how these approaches can be used even without access to expert demonstrations, as long as one is able to fit an approximate ``oracle'' policy from offline bandit environments. Our work differs significantly from \citet{lee2023incontext} and other imitation learning based works because our sequence models are used to sample future \textit{outcomes}, instead of predicting optimal actions. 
Several recent works also use generative models to model future rewards \citep{mukherjee2024pretraining,NguyenGr22, MullerHoArGrHu22, GarneloRoMaRaSaShTeReEs18,liu2016prior}. 
Most previous work on decision-making with sequence models that predict future rewards does not use autoregressive generation to quantify uncertainty \citep{mukherjee2024pretraining,NguyenGr22, MullerHoArGrHu22, GarneloRoMaRaSaShTeReEs18}; Instead, their algorithms only consider uncertainty in the single next timestep's reward under each action, e.g., using softmax sampling \citep{mukherjee2024pretraining}. We find empirically that alternative (non-autoregressive) ways of sampling from the sequence model can lead to inferior decision-making performance (Figure \ref{fig:regret_comparison}).

\bo{(Approximate) TS with neural networks (NN).}
Implementing TS with NN has been a longstanding challenge. \citet{riquelme2018deep} investigated TS with a variety of Bayesian uncertainty quantification techniques for NN; They found that linear TS with the last layer of a NN as context outperformed many more complex methods. While some TS algorithms directly model uncertainty in NN weights \citep{zhang2020neural,local-uncertainty}, the foremost approach in the literature implement TS with deep ensembles \citep{ensembleSampling,lu2017ensemble,dwaracherla2020hypermodels,osband2023approximate,osband2015bootstrapped,osband2023approximate,li2024ensemble++,li2024hyperagent}. Our generative TS algorithm is critically different from ensembling because a) through offline meta-training we are able to learn informed priors from complex task-specific information $Z$ (like text) with benefits that are explicitly reflected in our bound, and b) our approach allows the generative model to learn \textit{in-context} avoiding retraining online using gradient updates on sub-sampled data, which is sensitive to learning rates.

\bo{Meta-bandits.} 
In the bandit literature, many algorithms have been proposed for meta-learning settings. Many prior works focus on a different setup, where bandit tasks are encountered sequentially and leveraged for learning across tasks \citep{lazaric2013sequential,basu2021no,kveton2021meta,wan2021metadata,moradipari2022multi}. In contrast, our approach uses in-context learning, where a single algorithm adapts to a variety of new task it could, it encounters (see Figure \ref{fig:process}). Also, unlike much of the meta-bandit theory literature---which focuses on simple models, e.g., linear \citep{cella2020meta, pmlr-v161-cella21a, moradipari2022multi} or TS with parametric Bayesian priors, including mixture models \citep{wan2021metadata, kveton2021meta, hong2022thompson}---our method accommodates complex sequence models $p_\theta$ with low loss and any policy class with finite VC dimension. A notable exception is \citet{boutilier2020differentiable}, which directly optimizes a non-contextual bandit policy from historical data via gradient descent, but their approach only works for learning differentiable, soft-max based soft-max based algorithms. 

\vspace{-1mm}
\section{Experiments}
\vspace{-1mm}
\label{sec:experiments}
\bo{Problem setting.} 
Throughout, $T=500$, $|\MC{A}| = 10$,\footnote{Recommendation options are often from a pre-filtered set~\citep{davidson2010youtube,covington2016deep}.} outcomes $Y$ are binary, $R(y) = y$, and $Z$ has separate components $Z^{(a)} \in \real^2$ for each action. Our \textsc{synthetic} setting uses a Bayesian logistic regression data-generating process with contexts $X \in \real^5$. Our \textsc{semi-synthetic} setting mimics a cold-start, news recommendation setting using the MIcrosoft News Dataset \citep{wu2020mind}; $Z^{(a)}$ consists of article headline text, contexts $X \in \real^5$ are user features, and $Y \in \{0, 1 \}$ represents whether user click on a recommendation. See Appendix \ref{app:dgp} for details.

\bo{Bandit algorithms.}
We use Generative TS (TS-Gen) as described in Section~\ref{sec:ourAlg}. For $p_\theta$, we use a simple recurrent neural network which takes in prior information $Z$, history $\HH_{t-1}$, and current context $X$, and outputs a distribution over $Y$. In the \textsc{semi-synthetic} setting, $p_\theta$ embeds the article headline $Z$ using DistilBERT \citep{sanh2019distilbert}. We use a logistic regression-based policy class $\Pi$ for the \textsc{synthetic} setting and a multi-layer perceptron (MLP) policy class for the \textsc{semi-synthetic} setting. 
For baselines, three algorithms use the same $p_\theta$ model as TS-Gen, but select actions differently: 1) \textsc{Greedy} deterministically selects the action predicted by $p_\theta$ to have the greatest next reward. 2) \textsc{Epsilon-greedy} employs \textsc{Greedy} with probability $0.9$ and otherwise selects an action uniformly at random. 3) \textsc{TS-Neural-Linear}, which uses the output of the last layer of the $p_\theta$ model as the context for a linear TS algorithm with a multivariate Gaussian prior; we consider variants with an uninformative prior and a prior fit using historical data.
We also compare to a standard linear TS~\citep{agrawal2013thompson}, where $X_t$ is used as the context, as well as LinUCB \citep{li2010contextual}.

\begin{figure*}[t]
\vspace{-5mm}
\centering
\includegraphics[width=0.33\linewidth]{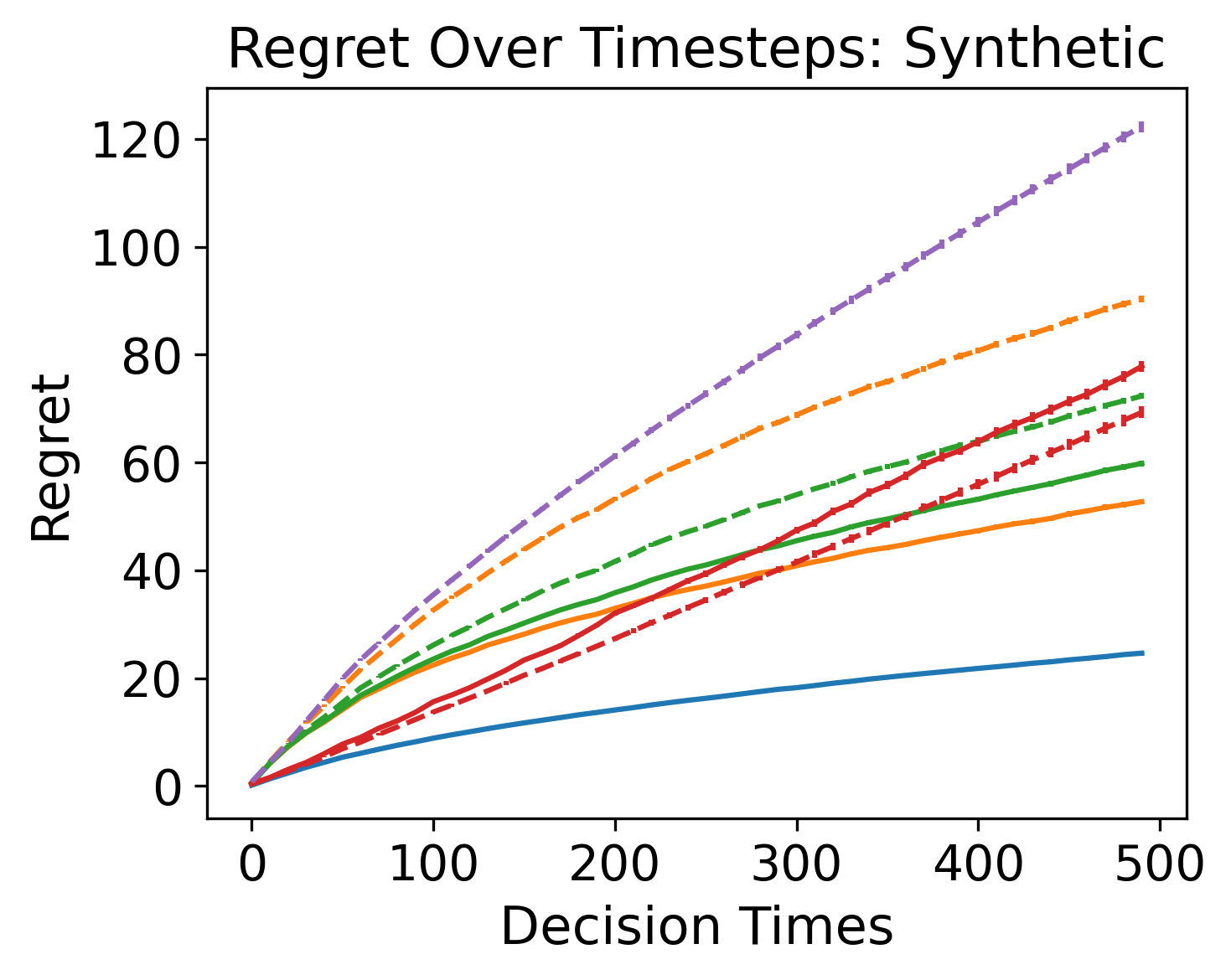}
\hfill
\includegraphics[width=0.34\linewidth]{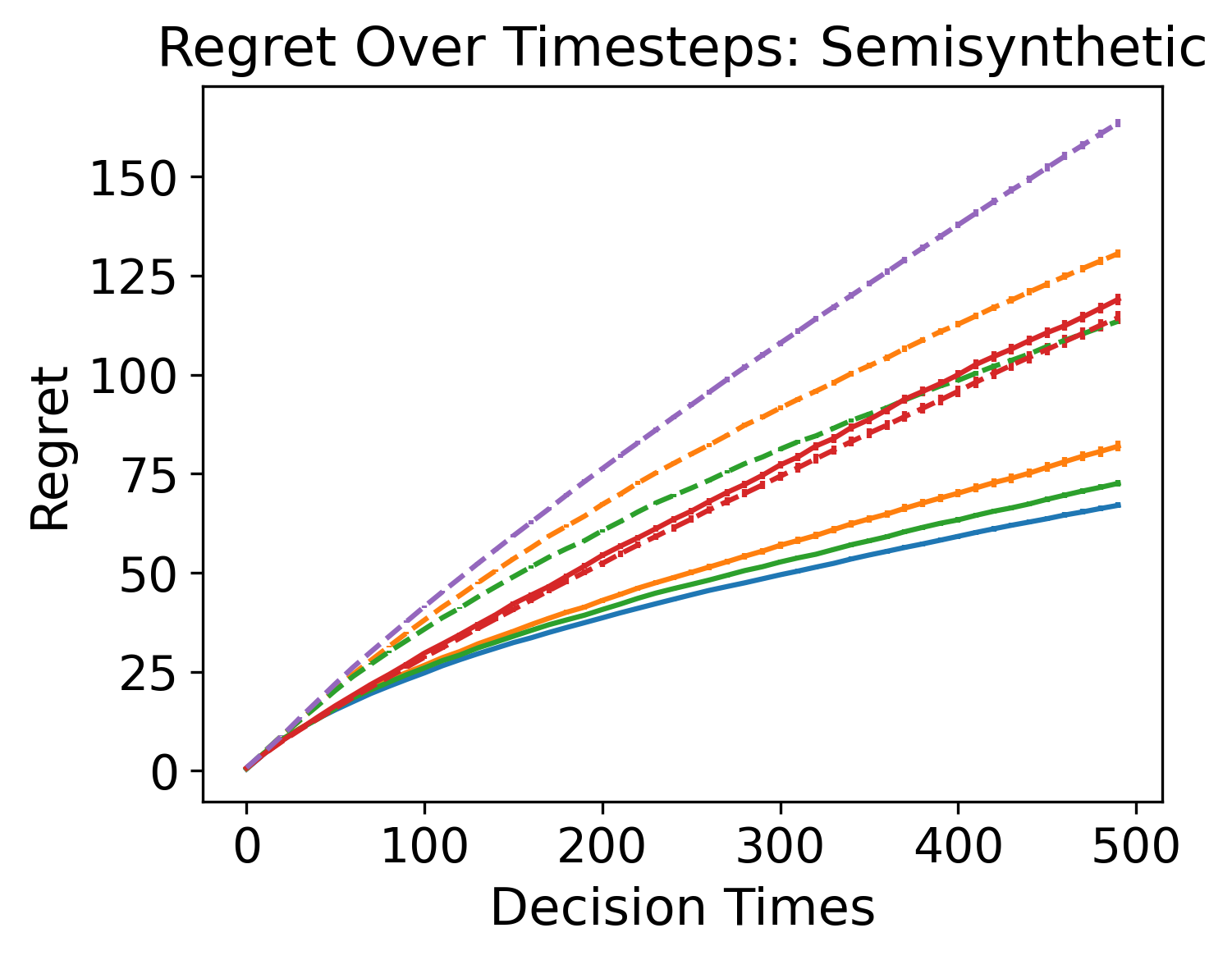}
\includegraphics[width=0.3\linewidth]{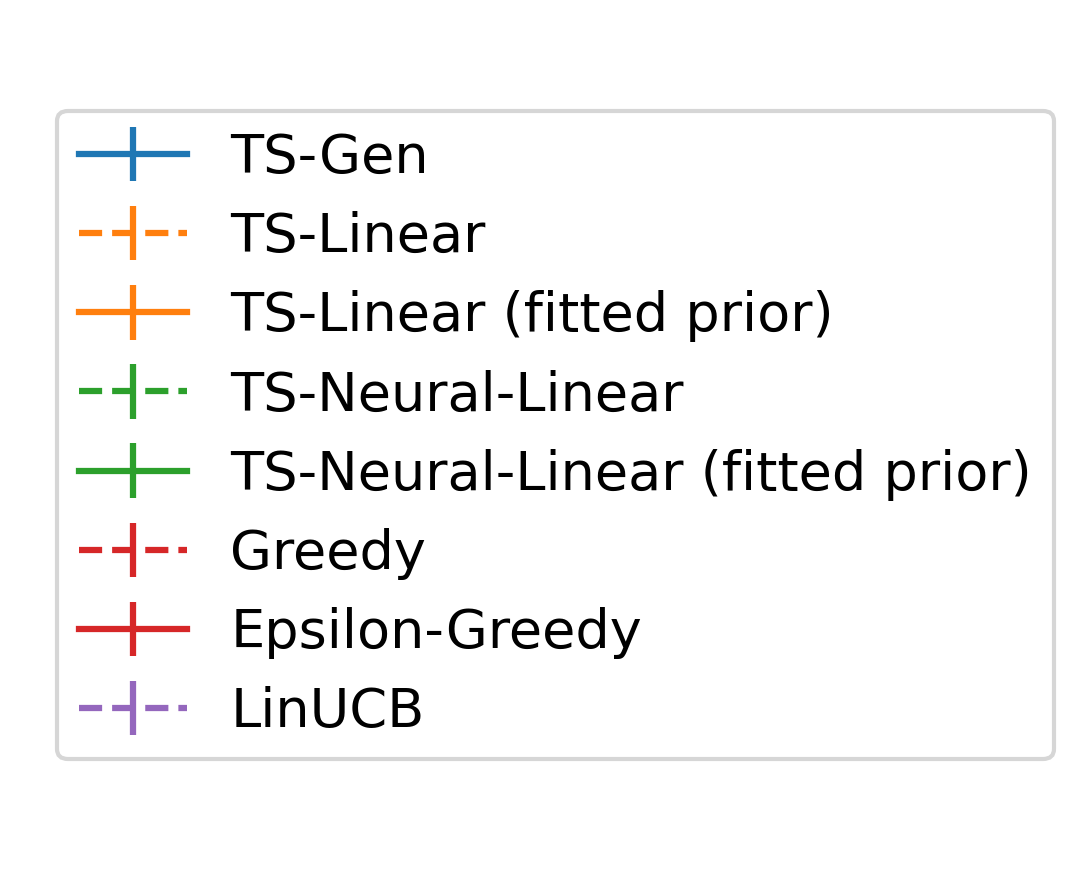}

\vspace{-3mm}
\caption{Cumulative regret averaged over 500 bandit tasks. Regret is against the best fitting policy in $\Pi$ (logistic for synthetic and MLP-based for semisynthetic). TS-Gen outperforms methods that use the same $p_\theta$ model (Greedy, $\epsilon$-Greedy, TS-Neural-Linear). Error bars (barely visible) denote $\pm 1$ s.e. \vspace{-5mm}} 
\label{fig:regret_comparison}
\end{figure*}

\bo{Results.} 
As seen in Figure \ref{fig:regret_comparison}, TS-Gen outperforms other algorithms in both the \textsc{synthetic} and \textsc{semi-synthetic} settings. TS-Gen's superior performance compared to other algorithms that use the same $p_\theta$ model (\textsc{Greedy}, \textsc{Epsilon-greedy}, \textsc{TS-Neural-Linear}) validates the benefit of our generative approach to uncertainty quantification and decision-making. We conjecture TS-Gen's advantage compared to LinUCB and TS-Linear is attributable to our pretraining procedure and the better use of prior information $Z$. %
We also found, as suggested by Theorem \ref{thm:psarRegret}, the lower the offline prediction loss of $p_\theta$, the lower the regret of TS-Gen; see Appendix \ref{app:seqloss}.

\bo{Computational costs.} For our semi-synthetic experiments, the generation and policy fitting times per decision were $4.2$ and $2.2$ seconds, respectively, on CPU (Appendix~\ref{app:compute_resources}). Various approaches could be investigated to speed up the algorithm. \bo{Distillation:} Policy distillation, transferring knowledge from one policy to another, is commonly used to speed up computation. These approaches could distill TS-Gen into a policy that maps the current context $X_t$ and recent task history $\HH_t$ to a distribution over actions \citep{czarnecki2019distilling}. \bo{Generation:} Generation could be sped up by truncating or reducing the number of outcomes generated per timestep. For sequence models more broadly, there is great interest in speeding up inference time through architecture changes \citep{tay2022efficient} and optimizing around hardware constraints \citep{aminabadi2022deepspeed,dao2022flashattention}. \textbf{Policy fitting:} Policy fitting could be done incrementally instead of being refitted from scratch at each decision time.

\vspace{-1mm}
\section{Discussion}
\vspace{-1mm}
We introduce a generative TS algorithm for contextual bandits that is compatible with any generative model with low offline prediction loss and a policy fitting procedure with low VC dimension. We prove a regret bound for our algorithm that allows for misspecification of the generative model, and provides insights into information theoretic analyses for contextual bandits that may be of independent interest.
Open directions include i) developing methods to guide how to choose an appropriate policy class $\Pi$ \citep{foster2020open}, ii) quantifying how much offline data is needed to train a high quality generative model (including settings where offline data is collected by a behavior policy), iii) exploring if the generative approach to modeling uncertainty can be extended to more difficult decision-making settings, like Markov decision processes, and iv) investigating methods to reduce computational cost.

\bo{Limitations.}
We evaluate our generative TS algorithm in only two experimental settings. As a result, our experiments are primarily a proof-of-concept for the viability of the generative TS approach. Additionally, in practice, our approach requires training a generative model $p_\theta$ to approximate complete task datasets, but in practice one may not have access to complete task datasets. We describe heuristic approaches we use to approximate complete task datasets from partial task datasets in Section \ref{sec:learningptheta}. Further work is needed to assess practical feasibility in more complex settings and to formalize how well our heuristic approaches perform theoretically.
Finally, our generative TS algorithm may also be computationally costly, especially when implemented with complex generative models. We discuss potential approaches to improve computation cost at the end of Section~\ref{sec:experiments}. 

\section{Acknowledgments}
This work was partially supported by the AI Agents Initiative at the Columbia Business School. Tiffany Cai was partially supported by the National Science Foundation Graduate Research Fellowship under Grant No.
DGE-2036197.

\bibliography{bib,bib-hong}

\begin{thebibliography}{81}
\providecommand{\natexlab}[1]{#1}
\providecommand{\url}[1]{\texttt{#1}}
\expandafter\ifx\csname urlstyle\endcsname\relax
  \providecommand{\doi}[1]{doi: #1}\else
  \providecommand{\doi}{doi: \begingroup \urlstyle{rm}\Url}\fi

\bibitem[Agarwal et~al.(2017)Agarwal, Luo, Neyshabur, and
  Schapire]{agarwal2017corralling}
Alekh Agarwal, Haipeng Luo, Behnam Neyshabur, and Robert~E Schapire.
\newblock Corralling a band of bandit algorithms.
\newblock In \emph{Conference on Learning Theory}, pages 12--38. PMLR, 2017.

\bibitem[Agrawal and Goyal(2013)]{agrawal2013thompson}
Shipra Agrawal and Navin Goyal.
\newblock Thompson sampling for contextual bandits with linear payoffs.
\newblock In \emph{International conference on machine learning}, pages
  127--135. PMLR, 2013.

\bibitem[Aminabadi et~al.(2022)Aminabadi, Rajbhandari, Awan, Li, Li, Zheng,
  Ruwase, Smith, Zhang, Rasley, et~al.]{aminabadi2022deepspeed}
Reza~Yazdani Aminabadi, Samyam Rajbhandari, Ammar~Ahmad Awan, Cheng Li, Du~Li,
  Elton Zheng, Olatunji Ruwase, Shaden Smith, Minjia Zhang, Jeff Rasley, et~al.
\newblock Deepspeed-inference: enabling efficient inference of transformer
  models at unprecedented scale.
\newblock In \emph{SC22: International Conference for High Performance
  Computing, Networking, Storage and Analysis}, pages 1--15. IEEE, 2022.

\bibitem[Asian et~al.(2009)Asian, Yildiz, and Alpaydin]{asian2009calculating}
Ozlem Asian, Olcay~Taner Yildiz, and Ethem Alpaydin.
\newblock Calculating the vc-dimension of decision trees.
\newblock In \emph{2009 24th International Symposium on Computer and
  Information Sciences}, pages 193--198. IEEE, 2009.

\bibitem[Babakov et~al.(2023)Babakov, Dale, Gusev, Krotova, and
  Panchenko]{hfformality}
Nikolay Babakov, David Dale, Ilya Gusev, Irina Krotova, and Alexander
  Panchenko.
\newblock Don't lose the message while paraphrasing: A study on content
  preserving style transfer.
\newblock In \emph{Natural Language Processing and Information Systems}, pages
  47--61, Cham, 2023. Springer Nature Switzerland.
\newblock ISBN 978-3-031-35320-8.

\bibitem[Basu et~al.(2021)Basu, Kveton, Zaheer, and Szepesv{\'a}ri]{basu2021no}
Soumya Basu, Branislav Kveton, Manzil Zaheer, and Csaba Szepesv{\'a}ri.
\newblock No regrets for learning the prior in bandits.
\newblock \emph{Advances in neural information processing systems},
  34:\penalty0 28029--28041, 2021.

\bibitem[Beygelzimer et~al.(2011)Beygelzimer, Langford, Li, Reyzin, and
  Schapire]{beygelzimer2011contextual}
Alina Beygelzimer, John Langford, Lihong Li, Lev Reyzin, and Robert Schapire.
\newblock Contextual bandit algorithms with supervised learning guarantees.
\newblock In \emph{Proceedings of the Fourteenth International Conference on
  Artificial Intelligence and Statistics}, pages 19--26, 2011.

\bibitem[Boutilier et~al.(2020)Boutilier, Hsu, Kveton, Mladenov, Szepesvari,
  and Zaheer]{boutilier2020differentiable}
Craig Boutilier, Chih-Wei Hsu, Branislav Kveton, Martin Mladenov, Csaba
  Szepesvari, and Manzil Zaheer.
\newblock Differentiable meta-learning of bandit policies.
\newblock \emph{Advances in Neural Information Processing Systems},
  33:\penalty0 2122--2134, 2020.

\bibitem[Brown et~al.(2020)Brown, Mann, Ryder, Subbiah, Kaplan, Dhariwal,
  Neelakantan, Shyam, Sastry, Askell, et~al.]{brown2020language}
Tom Brown, Benjamin Mann, Nick Ryder, Melanie Subbiah, Jared~D Kaplan, Prafulla
  Dhariwal, Arvind Neelakantan, Pranav Shyam, Girish Sastry, Amanda Askell,
  et~al.
\newblock Language models are few-shot learners.
\newblock \emph{Advances in neural information processing systems}, 2020.

\bibitem[Bubeck et~al.(2012)Bubeck, Cesa-Bianchi, et~al.]{bubeck2012regret}
S{\'e}bastien Bubeck, Nicolo Cesa-Bianchi, et~al.
\newblock Regret analysis of stochastic and nonstochastic multi-armed bandit
  problems.
\newblock \emph{Foundations and Trends{\textregistered} in Machine Learning},
  5\penalty0 (1):\penalty0 1--122, 2012.

\bibitem[Cai et~al.(2024)Cai, Namkoong, Russo, and Zhang]{psar2024}
Tiffany~(Tianhui) Cai, Hongseok Namkoong, Daniel Russo, and Kelly~W Zhang.
\newblock Active exploration via autoregressive generation of missing data.
\newblock \emph{arXiv preprint arXiv:2405.19466}, 2024.

\bibitem[Cella and Pontil(2021)]{pmlr-v161-cella21a}
Leonardo Cella and Massimiliano Pontil.
\newblock Multi-task and meta-learning with sparse linear bandits.
\newblock In Cassio de~Campos and Marloes~H. Maathuis, editors,
  \emph{Proceedings of the Thirty-Seventh Conference on Uncertainty in
  Artificial Intelligence}, volume 161 of \emph{Proceedings of Machine Learning
  Research}, pages 1692--1702. PMLR, 27--30 Jul 2021.
\newblock URL \url{https://proceedings.mlr.press/v161/cella21a.html}.

\bibitem[Cella et~al.(2020)Cella, Lazaric, and Pontil]{cella2020meta}
Leonardo Cella, Alessandro Lazaric, and Massimiliano Pontil.
\newblock Meta-learning with stochastic linear bandits.
\newblock In \emph{International Conference on Machine Learning}. PMLR, 2020.

\bibitem[Chen et~al.(2021)Chen, Lu, Rajeswaran, Lee, Grover, Laskin, Abbeel,
  Srinivas, and Mordatch]{decisionTransformer}
Lili Chen, Kevin Lu, Aravind Rajeswaran, Kimin Lee, Aditya Grover, Misha
  Laskin, Pieter Abbeel, Aravind Srinivas, and Igor Mordatch.
\newblock Decision transformer: Reinforcement learning via sequence modeling.
\newblock In \emph{Advances in Neural Information Processing Systems}, 2021.

\bibitem[Clarke and Barron(1990)]{ClarkeBa90}
Bertrand~S. Clarke and Andrew~R. Barron.
\newblock Information-theoretic asymptotics of bayes methods.
\newblock \emph{IEEE Transactions on Information Theory}, 36\penalty0
  (3):\penalty0 453--471, 1990.

\bibitem[Corbett-Davies et~al.(2017)Corbett-Davies, Pierson, Feller, Goel, and
  Huq]{corbett2017algorithmic}
Sam Corbett-Davies, Emma Pierson, Avi Feller, Sharad Goel, and Aziz Huq.
\newblock Algorithmic decision making and the cost of fairness.
\newblock In \emph{Proceedings of the 23rd acm sigkdd international conference
  on knowledge discovery and data mining}, pages 797--806, 2017.

\bibitem[Covington et~al.(2016)Covington, Adams, and Sargin]{covington2016deep}
Paul Covington, Jay Adams, and Emre Sargin.
\newblock Deep neural networks for youtube recommendations.
\newblock In \emph{Proceedings of the 10th ACM conference on recommender
  systems}, pages 191--198, 2016.

\bibitem[Czarnecki et~al.(2019)Czarnecki, Pascanu, Osindero, Jayakumar,
  Swirszcz, and Jaderberg]{czarnecki2019distilling}
Wojciech~M Czarnecki, Razvan Pascanu, Simon Osindero, Siddhant Jayakumar,
  Grzegorz Swirszcz, and Max Jaderberg.
\newblock Distilling policy distillation.
\newblock In \emph{The 22nd international conference on artificial intelligence
  and statistics}, pages 1331--1340. PMLR, 2019.

\bibitem[Dao et~al.(2022)Dao, Fu, Ermon, Rudra, and
  R{\'e}]{dao2022flashattention}
Tri Dao, Dan Fu, Stefano Ermon, Atri Rudra, and Christopher R{\'e}.
\newblock Flashattention: Fast and memory-efficient exact attention with
  io-awareness.
\newblock \emph{Advances in neural information processing systems},
  35:\penalty0 16344--16359, 2022.

\bibitem[Davidson et~al.(2010)Davidson, Liebald, Liu, Nandy, Van~Vleet, Gargi,
  Gupta, He, Lambert, Livingston, et~al.]{davidson2010youtube}
James Davidson, Benjamin Liebald, Junning Liu, Palash Nandy, Taylor Van~Vleet,
  Ullas Gargi, Sujoy Gupta, Yu~He, Mike Lambert, Blake Livingston, et~al.
\newblock The youtube video recommendation system.
\newblock In \emph{Proceedings of the fourth ACM conference on Recommender
  systems}, pages 293--296, 2010.

\bibitem[Dong and Van~Roy(2018)]{dong2018information}
Shi Dong and Benjamin Van~Roy.
\newblock An information-theoretic analysis for thompson sampling with many
  actions.
\newblock \emph{Advances in Neural Information Processing Systems}, 31, 2018.

\bibitem[Dwaracherla et~al.(2020)Dwaracherla, Lu, Ibrahimi, Osband, Wen, and
  Van~Roy]{dwaracherla2020hypermodels}
Vikranth Dwaracherla, Xiuyuan Lu, Morteza Ibrahimi, Ian Osband, Zheng Wen, and
  Benjamin Van~Roy.
\newblock Hypermodels for exploration.
\newblock \emph{arXiv preprint arXiv:2006.07464}, 2020.

\bibitem[Fortini and Petrone(2023)]{fortini2023prediction}
Sandra Fortini and Sonia Petrone.
\newblock Prediction-based uncertainty quantification for exchangeable
  sequences.
\newblock \emph{Philosophical Transactions of the Royal Society A},
  381\penalty0 (2247):\penalty0 20220142, 2023.

\bibitem[Foster et~al.(2019)Foster, Krishnamurthy, and Luo]{foster2019model}
Dylan~J Foster, Akshay Krishnamurthy, and Haipeng Luo.
\newblock Model selection for contextual bandits.
\newblock \emph{Advances in Neural Information Processing Systems}, 32, 2019.

\bibitem[Foster et~al.(2020)Foster, Krishnamurthy, and Luo]{foster2020open}
Dylan~J Foster, Akshay Krishnamurthy, and Haipeng Luo.
\newblock Open problem: Model selection for contextual bandits.
\newblock In \emph{Conference on Learning Theory}, pages 3842--3846. PMLR,
  2020.

\bibitem[Garnelo et~al.(2018)Garnelo, Rosenbaum, Maddison, Ramalho, Saxton,
  Shanahan, Teh, Rezende, and Eslami]{GarneloRoMaRaSaShTeReEs18}
Marta Garnelo, Dan Rosenbaum, Christopher Maddison, Tiago Ramalho, David
  Saxton, Murray Shanahan, Yee~Whye Teh, Danilo Rezende, and SM~Ali Eslami.
\newblock Conditional neural processes.
\newblock In \emph{Proceedings of the 35th International Conference on Machine
  Learning}, pages 1704--1713. PMLR, 2018.

\bibitem[Goan and Fookes(2020)]{goan2020bayesian}
Ethan Goan and Clinton Fookes.
\newblock Bayesian neural networks: An introduction and survey.
\newblock \emph{Case Studies in Applied Bayesian Data Science: CIRM Jean-Morlet
  Chair, Fall 2018}, 2020.

\bibitem[Gouverneur et~al.(2024)Gouverneur, G{\'a}lvez, Oechtering, and
  Skoglund]{gouverneur2024an}
Amaury Gouverneur, Borja~Rodr{\'\i}guez G{\'a}lvez, Tobias Oechtering, and
  Mikael Skoglund.
\newblock An information-theoretic analysis of thompson sampling for logistic
  bandits.
\newblock In \emph{NeurIPS Workshop on Bayesian Decision-making and
  Uncertainty}, 2024.

\bibitem[Haussler and Long(1995)]{haussler1995generalization}
David Haussler and Philip~M Long.
\newblock A generalization of sauer's lemma.
\newblock \emph{Journal of Combinatorial Theory, Series A}, 71\penalty0
  (2):\penalty0 219--240, 1995.

\bibitem[Hong et~al.(2022)Hong, Kveton, Zaheer, Ghavamzadeh, and
  Boutilier]{hong2022thompson}
Joey Hong, Branislav Kveton, Manzil Zaheer, Mohammad Ghavamzadeh, and Craig
  Boutilier.
\newblock Thompson sampling with a mixture prior.
\newblock In \emph{International Conference on Artificial Intelligence and
  Statistics}, pages 7565--7586. PMLR, 2022.

\bibitem[Hussein et~al.(2017)Hussein, Gaber, Elyan, and
  Jayne]{hussein2017imitation}
Ahmed Hussein, Mohamed~Medhat Gaber, Eyad Elyan, and Chrisina Jayne.
\newblock Imitation learning: A survey of learning methods.
\newblock \emph{ACM Computing Surveys (CSUR)}, 50\penalty0 (2):\penalty0 1--35,
  2017.

\bibitem[Janner et~al.(2021)Janner, Li, and Levine]{janner2021offline}
Michael Janner, Qiyang Li, and Sergey Levine.
\newblock Offline reinforcement learning as one big sequence modeling problem.
\newblock \emph{Advances in neural information processing systems},
  34:\penalty0 1273--1286, 2021.

\bibitem[Kaufmann et~al.(2012)Kaufmann, Capp{\'e}, and
  Garivier]{kaufmann2012bayesian}
Emilie Kaufmann, Olivier Capp{\'e}, and Aur{\'e}lien Garivier.
\newblock On bayesian upper confidence bounds for bandit problems.
\newblock In \emph{Artificial intelligence and statistics}, pages 592--600.
  PMLR, 2012.

\bibitem[Kleijn and Van~der Vaart(2012)]{KleijnVa12}
B.J.K. Kleijn and A.W. Van~der Vaart.
\newblock The bernstein-von-mises theorem under misspecification.
\newblock \emph{Electronic Journal of Statistics}, 6:\penalty0 354--381, 2012.

\bibitem[Kveton et~al.(2021)Kveton, Konobeev, Zaheer, Hsu, Mladenov, Boutilier,
  and Szepesvari]{kveton2021meta}
Branislav Kveton, Mikhail Konobeev, Manzil Zaheer, Chih-wei Hsu, Martin
  Mladenov, Craig Boutilier, and Csaba Szepesvari.
\newblock Meta-thompson sampling.
\newblock In \emph{International Conference on Machine Learning}, pages
  5884--5893. PMLR, 2021.

\bibitem[Langford and Zhang(2007)]{langford2007epoch}
John Langford and Tong Zhang.
\newblock The epoch-greedy algorithm for contextual multi-armed bandits.
\newblock \emph{Advances in neural information processing systems}, 20\penalty0
  (1):\penalty0 96--1, 2007.

\bibitem[Lattimore and Szepesv{\'a}ri(2019)]{LattimoreSz19}
Tor Lattimore and Csaba Szepesv{\'a}ri.
\newblock \emph{Bandit algorithms}.
\newblock Cambridge, 2019.

\bibitem[Lattimore and Szepesv{\'a}ri(2020)]{lattimore2020bandit}
Tor Lattimore and Csaba Szepesv{\'a}ri.
\newblock \emph{Bandit algorithms}.
\newblock Cambridge University Press, 2020.

\bibitem[Lazaric et~al.(2013)Lazaric, Brunskill, et~al.]{lazaric2013sequential}
Alessandro Lazaric, Emma Brunskill, et~al.
\newblock Sequential transfer in multi-armed bandit with finite set of models.
\newblock \emph{Advances in Neural Information Processing Systems}, 26, 2013.

\bibitem[Lee et~al.(2023)Lee, Xie, Pacchiano, Chandak, Finn, Nachum, and
  Brunskill]{lee2023incontext}
Jonathan Lee, Annie Xie, Aldo Pacchiano, Yash Chandak, Chelsea Finn, Ofir
  Nachum, and Emma Brunskill.
\newblock In-context decision-making from supervised pretraining.
\newblock In \emph{ICML Workshop on New Frontiers in Learning, Control, and
  Dynamical Systems}, 2023.

\bibitem[Li et~al.(2010)Li, Chu, Langford, and Schapire]{li2010contextual}
Lihong Li, Wei Chu, John Langford, and Robert~E Schapire.
\newblock A contextual-bandit approach to personalized news article
  recommendation.
\newblock In \emph{Proceedings of the 19th international conference on World
  wide web}, pages 661--670, 2010.

\bibitem[Li et~al.()Li, Xu, Wang, and Luo]{li2024ensemble++}
Yingru Li, Jiawei Xu, Baoxiang Wang, and Zhi-Quan Luo.
\newblock Scalable exploration via ensemble++.
\newblock Preprint.
\newblock An early version "Adaptive Foundation Models for Online Decisions:
  HyperAgent with Fast Incremental Uncertainty Estimation" presented at ICML
  2024 Workshops: (1) "Aligning Reinforcement Learning Experimentalists and
  Theorists"; (2) "Automated Reinforcement Learning: Exploring Meta-Learning,
  AutoML, and LLMs".

\bibitem[Li et~al.(2024)Li, Xu, Han, and Luo]{li2024hyperagent}
Yingru Li, Jiawei Xu, Lei Han, and Zhi-Quan Luo.
\newblock {Q-Star Meets Scalable Posterior Sampling: Bridging Theory and
  Practice via HyperAgent}.
\newblock In \emph{The 41st International Conference on Machine Learning
  (ICML)}, Proceedings of Machine Learning Research, 2024.

\bibitem[Liu and Li(2016)]{liu2016prior}
Che-Yu Liu and Lihong Li.
\newblock On the prior sensitivity of thompson sampling.
\newblock In \emph{International Conference on Algorithmic Learning Theory},
  pages 321--336. Springer, 2016.

\bibitem[Lu and Van~Roy(2017)]{lu2017ensemble}
Xiuyuan Lu and Benjamin Van~Roy.
\newblock Ensemble sampling.
\newblock \emph{Advances in neural information processing systems}, 30, 2017.

\bibitem[Mehrabi and Wager(2024)]{MehrabiWa24}
Mohammad Mehrabi and Stefan Wager.
\newblock Off-policy evaluation in markov decision processes under weak
  distributional overlap.
\newblock \emph{arXiv:2402.08201 [stat.ML]}, 2024.

\bibitem[Min and Russo(2023)]{infotheoreticNonstationary}
Seungki Min and Daniel Russo.
\newblock An information-theoretic analysis of nonstationary bandit learning.
\newblock In \emph{Proceedings of the 40th International Conference on Machine
  Learning}, Proceedings of Machine Learning Research, 2023.

\bibitem[Mitchell et~al.(2021)Mitchell, Potash, Barocas, D'Amour, and
  Lum]{mitchell2021algorithmic}
Shira Mitchell, Eric Potash, Solon Barocas, Alexander D'Amour, and Kristian
  Lum.
\newblock Algorithmic fairness: Choices, assumptions, and definitions.
\newblock \emph{Annual review of statistics and its application}, 8\penalty0
  (1):\penalty0 141--163, 2021.

\bibitem[Moradipari et~al.(2022)Moradipari, Ghavamzadeh, Rajabzadeh,
  Thrampoulidis, and Alizadeh]{moradipari2022multi}
Ahmadreza Moradipari, Mohammad Ghavamzadeh, Taha Rajabzadeh, Christos
  Thrampoulidis, and Mahnoosh Alizadeh.
\newblock Multi-environment meta-learning in stochastic linear bandits.
\newblock In \emph{2022 IEEE International Symposium on Information Theory
  (ISIT)}, pages 1659--1664. IEEE, 2022.

\bibitem[Mukherjee et~al.(2024)Mukherjee, Hanna, Xie, and
  Nowak]{mukherjee2024pretraining}
Subhojyoti Mukherjee, Josiah~P Hanna, Qiaomin Xie, and Robert Nowak.
\newblock Pretraining decision transformers with reward prediction for
  in-context multi-task structured bandit learning.
\newblock \emph{arXiv preprint arXiv:2406.05064}, 2024.

\bibitem[M{\"u}ller et~al.(2022x)M{\"u}ller, Hollmann, Arango, Grabocka, and
  Hutter]{MullerHoArGrHu22}
Samuel M{\"u}ller, Noah Hollmann, Sebastian~Pineda Arango, Josif Grabocka, and
  Frank Hutter.
\newblock Transformers can do bayesian inference.
\newblock In \emph{Proceedings of the Tenth International Conference on
  Learning Representations}, 2022x.

\bibitem[Natarajan(1989)]{natarajan1989learning}
Balas~K Natarajan.
\newblock On learning sets and functions.
\newblock \emph{Machine Learning}, 4:\penalty0 67--97, 1989.

\bibitem[Neu et~al.(2022)Neu, Olkhovskaia, Papini, and
  Schwartz]{neu2022lifting}
Gergely Neu, Iuliia Olkhovskaia, Matteo Papini, and Ludovic Schwartz.
\newblock Lifting the information ratio: An information-theoretic analysis of
  thompson sampling for contextual bandits.
\newblock \emph{Advances in Neural Information Processing Systems},
  35:\penalty0 9486--9498, 2022.

\bibitem[Neyman(1992)]{neyman1992two}
Jerzy Neyman.
\newblock On the two different aspects of the representative method: the method
  of stratified sampling and the method of purposive selection.
\newblock In \emph{Breakthroughs in statistics: Methodology and distribution},
  pages 123--150. Springer, 1992.

\bibitem[Nguyen and Grover(2022)]{NguyenGr22}
Tung Nguyen and Aditya Grover.
\newblock Transformer neural processes: Uncertainty-aware meta learning via
  sequence modeling.
\newblock In \emph{Proceedings of the 39th International Conference on Machine
  Learning}, 2022.

\bibitem[Osband and Van~Roy(2015)]{osband2015bootstrapped}
Ian Osband and Benjamin Van~Roy.
\newblock Bootstrapped thompson sampling and deep exploration.
\newblock \emph{arXiv preprint arXiv:1507.00300}, 2015.

\bibitem[Osband et~al.(2023)Osband, Wen, Asghari, Dwaracherla, Ibrahimi, Lu,
  and Van~Roy]{osband2023approximate}
Ian Osband, Zheng Wen, Seyed~Mohammad Asghari, Vikranth Dwaracherla, Morteza
  Ibrahimi, Xiuyuan Lu, and Benjamin Van~Roy.
\newblock Approximate thompson sampling via epistemic neural networks.
\newblock In \emph{Uncertainty in Artificial Intelligence}. PMLR, 2023.

\bibitem[Pedregosa et~al.(2011)Pedregosa, Varoquaux, Gramfort, Michel, Thirion,
  Grisel, Blondel, Prettenhofer, Weiss, Dubourg, Vanderplas, Passos,
  Cournapeau, Brucher, Perrot, and Duchesnay]{scikit-learn}
F.~Pedregosa, G.~Varoquaux, A.~Gramfort, V.~Michel, B.~Thirion, O.~Grisel,
  M.~Blondel, P.~Prettenhofer, R.~Weiss, V.~Dubourg, J.~Vanderplas, A.~Passos,
  D.~Cournapeau, M.~Brucher, M.~Perrot, and E.~Duchesnay.
\newblock Scikit-learn: Machine learning in {P}ython.
\newblock \emph{Journal of Machine Learning Research}, 12:\penalty0 2825--2830,
  2011.

\bibitem[Qin et~al.(2022)Qin, Wen, Lu, and Van~Roy]{ensembleSampling}
Chao Qin, Zheng Wen, Xiuyuan Lu, and Benjamin Van~Roy.
\newblock An analysis of ensemble sampling.
\newblock In S.~Koyejo, S.~Mohamed, A.~Agarwal, D.~Belgrave, K.~Cho, and A.~Oh,
  editors, \emph{Advances in Neural Information Processing Systems}, 2022.

\bibitem[Riquelme et~al.(2018)Riquelme, Tucker, and Snoek]{riquelme2018deep}
Carlos Riquelme, George Tucker, and Jasper Snoek.
\newblock Deep bayesian bandits showdown: An empirical comparison of bayesian
  deep networks for thompson sampling.
\newblock In \emph{International Conference on Learning Representations}, 2018.

\bibitem[Rubin(2005)]{rubin2005causal}
Donald~B Rubin.
\newblock Causal inference using potential outcomes: Design, modeling,
  decisions.
\newblock \emph{Journal of the American statistical Association}, 100\penalty0
  (469):\penalty0 322--331, 2005.

\bibitem[Russo and Van~Roy(2016)]{russo2016information}
Daniel Russo and Benjamin Van~Roy.
\newblock An information-theoretic analysis of thompson sampling.
\newblock \emph{Journal of Machine Learning Research}, 17\penalty0
  (68):\penalty0 1--30, 2016.

\bibitem[Russo and Van~Roy(2018)]{russo2018learning}
Daniel Russo and Benjamin Van~Roy.
\newblock Learning to optimize via information-directed sampling.
\newblock \emph{Operations Research}, 66\penalty0 (1):\penalty0 230--252, 2018.

\bibitem[Russo et~al.(2020)Russo, Roy, Kazerouni, Osband, and
  Wen]{russo2020tutorial}
Daniel Russo, Benjamin~Van Roy, Abbas Kazerouni, Ian Osband, and Zheng Wen.
\newblock A tutorial on thompson sampling, 2020.

\bibitem[Ryzhov et~al.(2012)Ryzhov, Powell, and Frazier]{ryzhov2012knowledge}
Ilya~O Ryzhov, Warren~B Powell, and Peter~I Frazier.
\newblock The knowledge gradient algorithm for a general class of online
  learning problems.
\newblock \emph{Operations Research}, 60\penalty0 (1):\penalty0 180--195, 2012.

\bibitem[Sanh et~al.(2019)Sanh, Debut, Chaumond, and Wolf]{sanh2019distilbert}
Victor Sanh, Lysandre Debut, Julien Chaumond, and Thomas Wolf.
\newblock Distilbert, a distilled version of bert: smaller, faster, cheaper and
  lighter.
\newblock \emph{arXiv preprint arXiv:1910.01108}, 2019.

\bibitem[Sauer(1972)]{sauer1972density}
Norbert Sauer.
\newblock On the density of families of sets.
\newblock \emph{Journal of Combinatorial Theory, Series A}, 13\penalty0
  (1):\penalty0 145--147, 1972.

\bibitem[Savani(2022)]{hfsentiment}
Bhadresh Savani.
\newblock distilbert-base-uncased-sentiment-sst2, 2022.
\newblock URL
  \url{https://huggingface.co/bhadresh-savani/distilbert-base-uncased-sentiment-sst2}.

\bibitem[Shalev-Shwartz and Ben-David(2014)]{ShalevBe14}
Shai Shalev-Shwartz and Shai Ben-David.
\newblock \emph{Understanding Machine Learning: From Theory to Algorithms}.
\newblock Cambridge University Press, 2014.

\bibitem[Shelah(1972)]{shelah1972combinatorial}
Saharon Shelah.
\newblock A combinatorial problem; stability and order for models and theories
  in infinitary languages.
\newblock \emph{Pacific Journal of Mathematics}, 41\penalty0 (1):\penalty0
  247--261, 1972.

\bibitem[Simchowitz et~al.(2021)Simchowitz, Tosh, Krishnamurthy, Hsu, Lykouris,
  Dudik, and Schapire]{simchowitz2021bayesian}
Max Simchowitz, Christopher Tosh, Akshay Krishnamurthy, Daniel~J Hsu, Thodoris
  Lykouris, Miro Dudik, and Robert~E Schapire.
\newblock Bayesian decision-making under misspecified priors with applications
  to meta-learning.
\newblock \emph{Advances in Neural Information Processing Systems}, 2021.

\bibitem[Tay et~al.(2022)Tay, Dehghani, Bahri, and Metzler]{tay2022efficient}
Yi~Tay, Mostafa Dehghani, Dara Bahri, and Donald Metzler.
\newblock Efficient transformers: A survey.
\newblock \emph{ACM Computing Surveys}, 55\penalty0 (6):\penalty0 1--28, 2022.

\bibitem[Thompson(1933)]{thompson1933likelihood}
William~R Thompson.
\newblock On the likelihood that one unknown probability exceeds another in
  view of the evidence of two samples.
\newblock \emph{Biometrika}, 25\penalty0 (3-4):\penalty0 285--294, 1933.

\bibitem[Tran et~al.(2020)Tran, Snoek, and Lakshminarayanan]{tran2020practical}
Dustin Tran, Jasper Snoek, and Balaji Lakshminarayanan.
\newblock Practical uncertainty estimation and out-of-distribution robustness
  in deep learning.
\newblock \emph{NeurIPS Tutorial, Google Brain}, 2020.

\bibitem[Vaswani et~al.(2017)Vaswani, Shazeer, Parmar, Uszkoreit, Jones, Gomez,
  Kaiser, and Polosukhin]{attentionisallyouneed}
Ashish Vaswani, Noam Shazeer, Niki Parmar, Jakob Uszkoreit, Llion Jones,
  Aidan~N Gomez, \L~ukasz Kaiser, and Illia Polosukhin.
\newblock Attention is all you need.
\newblock In \emph{Advances in Neural Information Processing Systems}, 2017.

\bibitem[Verma and Rubin(2018)]{verma2018fairness}
Sahil Verma and Julia Rubin.
\newblock Fairness definitions explained.
\newblock In \emph{Proceedings of the international workshop on software
  fairness}, pages 1--7, 2018.

\bibitem[Wan et~al.(2021)Wan, Ge, and Song]{wan2021metadata}
Runzhe Wan, Lin Ge, and Rui Song.
\newblock Metadata-based multi-task bandits with bayesian hierarchical models.
\newblock \emph{Advances in Neural Information Processing Systems},
  34:\penalty0 29655--29668, 2021.

\bibitem[Wang and Zhou(2020)]{local-uncertainty}
Zhendong Wang and Mingyuan Zhou.
\newblock Thompson sampling via local uncertainty.
\newblock In Hal~Daumé III and Aarti Singh, editors, \emph{Proceedings of the
  37th International Conference on Machine Learning}, volume 119 of
  \emph{Proceedings of Machine Learning Research}, pages 10115--10125, 2020.

\bibitem[Wen et~al.(2021)Wen, Osband, Qin, Lu, Ibrahimi, Dwaracherla, Asghari,
  and Van~Roy]{wen2021predictions}
Zheng Wen, Ian Osband, Chao Qin, Xiuyuan Lu, Morteza Ibrahimi, Vikranth
  Dwaracherla, Mohammad Asghari, and Benjamin Van~Roy.
\newblock From predictions to decisions: The importance of joint predictive
  distributions.
\newblock \emph{arXiv preprint arXiv:2107.09224}, 2021.

\bibitem[Wu et~al.(2020)Wu, Qiao, Chen, Wu, Qi, Lian, Liu, Xie, Gao, Wu,
  et~al.]{wu2020mind}
Fangzhao Wu, Ying Qiao, Jiun-Hung Chen, Chuhan Wu, Tao Qi, Jianxun Lian,
  Danyang Liu, Xing Xie, Jianfeng Gao, Winnie Wu, et~al.
\newblock Mind: A large-scale dataset for news recommendation.
\newblock In \emph{Proceedings of the 58th Annual Meeting of the Association
  for Computational Linguistics}, pages 3597--3606, 2020.

\bibitem[Zhang et~al.(2020)Zhang, Zhou, Li, and Gu]{zhang2020neural}
Weitong Zhang, Dongruo Zhou, Lihong Li, and Quanquan Gu.
\newblock Neural thompson sampling.
\newblock \emph{arXiv preprint arXiv:2010.00827}, 2020.

\end{thebibliography}
\bibliographystyle{plainnat}

\newpage
\appendix
\section{Theory}
\label{app:theory}
\subsection{Notation}
\begin{itemize}[leftmargin=1.5em]
    \item Throughout, we use $\E_t$ to denote expectations conditional on $\HH_t$, i.e., we use 
    \begin{align}
        \label{eqn:EtDefinition}
        \E_t \left[ \cdotspace \right] = \E \left[ \cdotspace \mid \HH_t \right].
    \end{align}
    \item We use $H(Y)$ to denote the entropy of a discrete random variable $Y$, i.e., $H(Y) = \sum_y \PP(Y = y) \log \PP(Y = y) dy$. We also use $H_t(Y) = H(Y \mid \HH_t)$ to denote the entropy of $Y$ conditional on $\HH_t$; Note that is standard in information theory, $H_t(Y)$ \textit{is not} a random variable, rather, it marginalizes over $\HH_t$: \vspace{-2mm}
    \begin{align*}
        H_t(Y) := 
        H(Y \mid \HH_t) = \E \bigg[ \sum_y \PP(Y = y \mid \HH_t) \log \PP(Y = y \mid \HH_t) dy \bigg]; \vspace{-2mm}
    \end{align*}
    Above, the outer expectation marginalizes over the history $\HH_t$.
    \item We also use $I(Z;Y)$ to denote the mutual information between some random variables $Z$ and $Y$, i.e., $I(Z;Y) = \int_z \int_y \PP(Z = z, Y=y) \log \frac{\PP(Z = z, Y=y)}{\PP(Z = z) \PP(Y=y)} dz dy$. We further use $I_t(Z; Y)$ to denote the mutual information between $Z$ and $Y$ conditional on $\HH_t$ (which we then marginalize over $\HH_t$), i.e., \vspace{-1mm}
    \begin{align}
        &I_t(Z;Y) := I(Z;Y \mid \HH_t) \nonumber \\
        &= \E \bigg[ \int_z \int_y \PP(Z = z, Y=y \mid \HH_t) \log \frac{\PP(Z = z, Y=y\mid \HH_t)}{\PP(Z = z\mid \HH_t) \PP(Y=y\mid \HH_t)} dx dy \bigg];
        \label{eqn:mutualInfoDef} \vspace{-1mm}
    \end{align}
    Above, the outer expectation marginalizes over the history $\HH_t$. 
    \item Finally, we use $\dkl{p(Z \mid X)}{p'(Z \mid X)}$ to denote the KL divergence, i.e., \vspace{-1mm}
    \begin{align*}
        \dkl{p(Z \mid X)}{p'(Z \mid X)}
        = \E \bigg[ -\int p(Z \mid X) \log \frac{p(Z \mid X)}{p'(Z \mid X)} \bigg]. \vspace{-1mm}
    \end{align*}
    Above, the outer expectation marginalizes over $X$.
\end{itemize}

\subsection{Showing Algorithm \ref{alg:Thompson} implements Thompson Sampling (Probability Matching)}
\label{app:probMatch}
\begin{customprop}{\ref{prop:probMatch}}[Algorithm \ref{alg:Thompson} Implements Thompson Sampling]
    Algorithm \ref{alg:Thompson} with imputation model $p^*$ implements Thompson Sampling (probability matching), i.e., the following holds almost surely:
    \begin{align*}
        \mathbb{P}(A_t = a \mid \mathcal{H}_t) 
        = \mathbb{P}(\pi^*(X_t; \tau) = a \mid \mathcal{H}_t).
    \end{align*}
\end{customprop}
\begin{proof}
Recall that Algorithm 1 selects actions as follows: $$\mathbb{P}(A_t = a \mid \mathcal{H}_t) = \mathbb{P}(\pi^*(X_t; \hat{\tau}) = a \mid \mathcal{H}_t).$$ Since  $\hat{\tau} \sim p^*( \tau \in \cdot \mid \mathcal{H}_t)$ and from Eq (1) $\tau \sim p^*$, the distributions of $\tau$ and $\hat{\tau}$ are equal given $\mathcal{H}_t$. Hence, with probability $1$ for any $j$:
$$\mathbb{P}(\hat{\tau} = j \mid \mathcal{H}_t) = \mathbb{P}(\tau = j \mid \mathcal{H}_t).$$ The above implies that $$\mathbb{P}(\pi^*(X_t; \hat\tau) = a \mid \mathcal{H}_t) = \mathbb{P}(\pi^*(X_t; \tau) = a \mid \mathcal{H}_t).$$ Combining the above statements gives the result.
\end{proof}

\subsection{Bounding the conditional entropy by VC dimension}
\label{app:VCdim}
\begin{customprop}{\ref{prop:VC}}[Complexity bound on entropy]
    For policy class $\Pi$ over action space $\MC{A}_\tau$ with Nataranjan dimension $d$ (equivalent to VC dimension when $|\MC{A}_\tau| = 2$),
    \begin{align*}
        H ( \piX \mid Z_{\tau} )
        \leq H ( \piX )
        = O( d \cdot \log (T \cdot |\MC{A}_\tau|) ).
    \end{align*}
\end{customprop}
\begin{proof}
The first inequality $H \big( \piX \mid Z_{\tau} \big) \leq H \big( \piX \big)$ holds by the chain rule for entropy.

Note that $\piX$ is a random vector of dimension $T$ where each dimension can take $|\MC{A}_\tau|$ different values.
By a generalization of the Sauer-Shelah lemma \citep{sauer1972density,shelah1972combinatorial}, specifically Theorem 2 and Corollary 3 in \citep{haussler1995generalization}, if a multi-class function that can take on $|\MC{A}_\tau|$ different values has Nataranjan dimension $d$, then that function class can produce at most $\sum_{i=0}^d {T\choose i} (|\MC{A}_\tau|-1)^i = O(T^d |\MC{A}_\tau|^d)$ different labelings of any $T$ points. Thus, since a coarse upper bound on the entropy of a random variable is the log of the number of unique values that variable can take, we get that $H \big( \piX \big) \leq \log \sum_{i=0}^d {T\choose i} (|\MC{A}_\tau|-1)^i = O\big( d \cdot \log (T \cdot |\MC{A}_\tau|) \big)$.
\end{proof}

\subsection{Regret bound for Generative TS with an approximate imputation model}
\label{app:misspecified}

\subsubsection{Lemma \ref{lemma:lossdecomp}: To minimize loss $p_\theta$ needs to approximate $p^*$.}
The next lemma is a standard result connecting the excess expected loss of a sequence model $p_{\theta}$ to its KL divergence from the true sequence model $p^*$. The expected loss of a sequence model $p_\theta$ is denoted $\ell(p_\theta)$; See \eqref{eq:pop_loss}. To minimize loss, $p_\theta$, the learner needs to closely approximate the true sequence model $p^*$.
\begin{lemma}[Decomposing loss under $p_\theta$]
    \label{lemma:lossdecomp}
    For the loss $\ell$ as defined in \eqref{eq:pop_loss},
    \[
    \ell(p_{\theta}) = \ell(p^*) + \dkl{p^* \big( X_{1:T}, \{ Y_{1:T}^{(a)} \}_{a \in \MC{A}_\tau} \mid Z_{\tau} \big)}{p_{\theta} \big( X_{1:T}, \{ Y_{1:T}^{(a)} \}_{a \in \MC{A}_\tau} \mid Z_{\tau} \big)}. 
    \]
\end{lemma}
\vspace{-1.5mm}
\begin{proof}
By the definition of the expected loss in  \eqref{eq:pop_loss},
\begin{align*}
    \ell(p_\theta) - \ell(p^*) &= - \E \left[ \log p_{\theta} \big( X_{1:T}, \{ Y_{1:t-1}^{(a)} \}_{a \in \MC{A}_\tau} \mid Z_{\tau} \big) \right] + \E \left[ \log p^* \big( X_{1:T}, \{ Y_{1:t-1}^{(a)} \}_{a \in \MC{A}_\tau} \mid Z_{\tau} \big) \right] \\
    &= \dkl{p^* \big( Y_{1:T}^{(a)}, X_{1:T} \mid Z_{\tau} \big)}{p_\theta \big( Y_{1:T}^{(a)}, X_{1:T} \mid Z_{\tau} \big)}
\end{align*}
Above, the final equality holds by the definition of the KL divergence.
\end{proof}

\subsubsection{Lemma \ref{lemma:AAbarKLNew}: Action selection under perfect vs. imperfect imputation models.} 
\begin{lemma}[KL Divergence in next action distribution]
\label{lemma:AAbarKLNew}
For any $t$,
\begin{align*}
    \dkl{\PP_t \left(\pi^*(X_t; \tau) = \cdot \right)}{\PP_t \left(A_t = \cdot \right)}
    \leq \ell(p_{\theta}) - \ell(p^*).
\end{align*}
\end{lemma}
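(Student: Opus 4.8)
The plan is to relate the action-distribution divergence to the prior-level sequence-model divergence that Lemma~\ref{lemma:lossdecomp} already controls, via two reductions. Throughout I condition on $X_{1:T}$, so by Assumption~\ref{assump:context} all contexts are known and the only quantities left to impute are outcomes. The first step is a data-processing argument. Writing $Y_{\rm mis}$ for the entries of the outcome table $\{Y_{1:T}^{(a)}\}_{a}$ not revealed in $\HH_t$, the oracle action $\pi^*(X_t;\tau)$ is the value of the deterministic policy-fitting-and-evaluation map $g(\cdot):=\pi^*(X_t;\,\cdot\,)$ applied to $Y_{\rm mis}\sim p^*(\cdot\mid\HH_t,X_{1:T})$, while the algorithm's action $A_t=\pi^*(X_t;\hat\tau_t)$ is the value of the \emph{same} map applied to $Y_{\rm mis}\sim p_\theta(\cdot\mid\HH_t,X_{1:T})$ (the observed outcomes and all contexts feeding into $g$ are shared across the two cases). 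Hence, pointwise in $(\HH_t,X_{1:T})$, the data-processing inequality for KL divergence gives
\[
\dkl{\PP_t(\pi^*(X_t;\tau)=\cdot)}{\PP_t(A_t=\cdot)}\ \le\ \dkl{p^*(Y_{\rm mis}\mid\HH_t,X_{1:T})}{p_\theta(Y_{\rm mis}\mid\HH_t,X_{1:T})}.
\]

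The second step converts this expected \emph{posterior} divergence into a \emph{prior} divergence. Because the observed entries are $\HH_t$-measurable, the right-hand side above equals the divergence between the two posteriors of the \emph{full} table $W=\{Y_{1:T}^{(a)}\}_a$ given $\HH_t$. I would then apply the chain rule for KL in its ``expected posterior divergence $\le$ prior divergence'' form, taking the observation to be $\HH_t$, to obtain
\[
\E\!\left[\dkl{p^*(W\mid\HH_t,X_{1:T})}{p_\theta(W\mid\HH_t,X_{1:T})}\right]\ \le\ \E\!\left[\dkl{p^*(W\mid Z_\tau,X_{1:T})}{p_\theta(W\mid Z_\tau,X_{1:T})}\right].
\]
By Assumption~\ref{assump:indepAction} both $p^*$ and $p_\theta$ factor $W$ across actions, so the right-hand side is $\sum_{a}\E[\dkl{p^*(Y_{1:T}^{(a)}\mid Z_\tau,X_{1:T})}{p_\theta(Y_{1:T}^{(a)}\mid Z_\tau,X_{1:T})}]$, which Lemma~\ref{lemma:lossdecomp} equates to $\ell(p_\theta)-\ell(p^*)$; since $|\MC{A}_\tau|\ge 1$, this is at most $|\MC{A}_\tau|\{\ell(p_\theta)-\ell(p^*)\}$, and chaining the two displays (after taking $\E$ over $\HH_t,X_{1:T}$) yields the claim.

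The step I expect to be the main obstacle is justifying the ``expected posterior $\le$ prior'' inequality, since the observation $\HH_t$ is not passively sampled: the history records only the outcomes of actions the algorithm itself selected, and those actions are chosen by imputing from the \emph{imperfect} model $p_\theta$. The chain-rule bound is valid precisely because the stochastic map $W\mapsto\HH_t$ is identical whether $W$ is drawn from $p^*$ or from $p_\theta$ — it is implemented by the same algorithm — and because, by sequential ignorability, each $A_s=\pi^*(X_s;\hat\tau_s)$ depends on $W$ only through previously revealed outcomes (and exogenous imputation randomness). These two facts let me identify $p^*(W\mid\HH_t,X_{1:T})$ and $p_\theta(W\mid\HH_t,X_{1:T})$ with the genuine conditionals of $W$ under the shared observation channel, so that the nonnegative marginal term $\dkl{p^*(\HH_t\mid X_{1:T})}{p_\theta(\HH_t\mid X_{1:T})}$ produced by the chain rule can be dropped. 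This is exactly where the contextual setting departs from the multi-armed analysis of \citet{psar2024}, which instead leaned on exchangeability of the unobserved outcomes given the history — a property that fails once outcomes are tied to their contexts.
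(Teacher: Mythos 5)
Your proposal is correct and follows essentially the same route as the paper's proof: a data-processing step lifting the action-distribution divergence to the divergence between the $p^*$- and $p_\theta$-posteriors over the outcome table, a chain-rule argument dropping the nonnegative history-marginal term to pass from posterior to prior divergence, and then Assumption~\ref{assump:indepAction} plus Lemma~\ref{lemma:lossdecomp} to identify the prior divergence with $\ell(p_\theta)-\ell(p^*)$ (both you and the paper in fact obtain the tighter bound without the $|\MC{A}_\tau|$ factor). Your explicit justification of the shared-channel/sequential-ignorability point is a correct elaboration of what the paper's step (b) leaves implicit, not a departure from it.
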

\vspace{-1.5mm}
\begin{proof}
Note the following:
\begin{align*}
    &\dkl{\PP_t \left(\pi^*(X_t; \tau) = \cdot \right)}{\PP_t \left(A_t = \cdot \right)} \\
    &\underbrace{\leq}_{(a)} 
    \dkl{\PP_{p^*} \big(X_{1:T}, \{Y_{1:T}^{(a)}\}_{a \in \MC{A}_{\tau}} \mid \HH_t \big)}{\PP_{p_{\theta}}\big( X_{1:T}, \{Y_{1:T}^{(a)}\}_{a \in \MC{A}_{\tau}} \mid \HH_t \big)} \\
    &\underbrace{\leq}_{(b)} 
    \dkl{\PP_{p^*} \big(X_{1:T}, \{Y_{1:T}^{(a)}\}_{a \in \MC{A}_{\tau}} \mid Z_{\tau} \big)}{\PP_{p_{\theta}}\big( X_{1:T}, \{Y_{1:T}^{(a)}\}_{a \in \MC{A}_{\tau}} \mid Z_{\tau} \big)}
    \underbrace{\leq}_{(c)} \ell(p_{\theta}) - \ell(p^*).
\end{align*}
\vspace{-5mm}
\begin{itemize}[leftmargin=1.5em]
    \item Inequality (a) holds because $\pi^*(X_t; \tau)$ and $A_t$ are both are derived by applying the same function to the contexts $X_{1:T}$ and outcomes $\{Y_{1:T}^{(a)}\}_{a \in \MC{A}_{\tau}}$.
    \item Inequality (b) holds because by the chain rule for KL divergence, 
    \begin{align*}
        &\dkl{\PP_{p^*} \big( X_{1:T}, \big\{Y_{1:T}^{(a)}\big\}_{a \in \MC{A}_{\tau}} \mid \HH_t \big)}{\PP_{p_{\theta}}\big( X_{1:T}, \big\{Y_{1:T}^{(a)}\big\}_{a \in \MC{A}_{\tau}} \mid \HH_t \big)} \\
        &= \dkl{\PP_{p^*} \big(X_{1:T}, \{Y_{1:T}^{(a)}\}_{a \in \MC{A}_{\tau}} \mid Z_\tau \big)}{\PP_{p_{\theta}}\big( X_{1:T}, \{Y_{1:T}^{(a)}\}_{a \in \MC{A}_{\tau}} \mid Z_\tau \big)} \\
        &+ \dkl{\PP_{p^*} \big( \HH_t, X_{1:T} \mid Z_{\tau} \big)}{\PP_{p_{\theta}} \big( \HH_t, X_{1:T} \mid Z_{\tau} \big)},
    \end{align*}
    and the KL divergence is non-negative.
    \item Inequality (c) holds by Lemma \ref{lemma:lossdecomp} (Decomposing loss under $p_\theta$).
\end{itemize}
\vspace{-5mm}
\end{proof}

\subsubsection{Lemma \ref{lemma:mutualInfoNew}: Mutual information equivalency.} 
\begin{lemma}[Mutual information equivalency]
\label{lemma:mutualInfoNew}
\begin{multline*}
    I_t \big( \pi^*(X_t; \tau); ( Y_t^{(A_t)}, A_t ) \big) \\
    = \E \bigg[ \sum_{a, \tilde{a} \in \MC{A}_{\tau}} \PP_t \big( A_t = a \big) \PP_t \big( \pi^*(X_t ; \tau) = \tilde{a} \big) \cdot \dkl{ \PP_t \big( Y_t^{(a)} \mid \pi^*(X_t; \tau) = \tilde{a} \big) }{ \PP_t \big( Y_t^{(a)} \big) } \bigg]
\end{multline*}
\end{lemma}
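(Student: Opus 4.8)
The plan is to leverage the defining randomization structure of Thompson sampling via generation. Under Algorithm \ref{alg:Thompson}, the selected action is $A_t = \pi^*(X_t; \hat\tau_t)$, where the imputed dataset $\hat\tau_t$ is drawn from $p_\theta(\cdot \mid \HH_t)$ (Algorithm \ref{alg:posterior_sample}) using auxiliary generation randomness that is independent of the true task $\tau$. Hence, conditional on $\HH_t$ and $X_{1:T}$, the action $A_t$ is a deterministic function of the history and this fresh randomness alone, and in particular never inspects the true outcomes. This gives the key conditional independence
\[
A_t \;\indep\; \big( \pi^*(X_t; \tau),\, \{ Y_t^{(a)} \}_{a \in \MC{A}_\tau} \big) \;\Big|\; \HH_t, X_{1:T}.
\]
I emphasize that this property requires no assumption that $p_\theta = p^*$; it holds for any imputation model, which is what lets the lemma be stated without reference to the perfect model.

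First I would apply the chain rule for mutual information, keeping everything conditional on $\HH_t, X_{1:T}$ (the content of the subscript $t$):
\[
I_t\big( \pi^*(X_t; \tau); (Y_t^{(A_t)}, A_t) \big) = I_t\big( \pi^*(X_t; \tau); A_t \big) + I_t\big( \pi^*(X_t; \tau); Y_t^{(A_t)} \mid A_t \big).
\]
The first summand equals $0$ because $A_t \indep \pi^*(X_t;\tau) \mid \HH_t, X_{1:T}$, so its conditional mutual information vanishes. Next I would expand the surviving term by conditioning on the realized action $A_t = a$. On this event $Y_t^{(A_t)} = Y_t^{(a)}$, and by the displayed conditional independence, conditioning on $A_t = a$ leaves the joint law of $\big(\pi^*(X_t;\tau), Y_t^{(a)}\big)$ given $\HH_t, X_{1:T}$ unchanged. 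Therefore
\[
I_t\big( \pi^*(X_t; \tau); Y_t^{(A_t)} \mid A_t \big) = \E\bigg[ \sum_{a \in \MC{A}_\tau} \PP_t(A_t = a)\, I\big( \pi^*(X_t;\tau); Y_t^{(a)} \mid \HH_t, X_{1:T} \big) \bigg].
\]

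To finish, I would rewrite each realization-wise mutual information in its expected-KL form,
\[
I\big( \pi^*(X_t;\tau); Y_t^{(a)} \mid \HH_t, X_{1:T} \big) = \sum_{A \in \MC{A}_\tau} \PP_t\big(\pi^*(X_t;\tau) = A\big)\, \dkl{ \PP_t\big( Y_t^{(a)} \mid \pi^*(X_t;\tau) = A \big) }{ \PP_t\big( Y_t^{(a)} \big) },
\]
and substitute; collecting the product weights $\PP_t(A_t=a)\,\PP_t(\pi^*(X_t;\tau)=A)$ under the single outer expectation over $\HH_t, X_{1:T}$ reproduces exactly the claimed double sum.

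The main obstacle is the first step: making the conditional-independence claim airtight by cleanly separating the two randomization sources---the environment's generation of the true outcomes under $p^*$ versus the algorithm's internal sampling under $p_\theta$---and verifying that the law of $A_t$ given $\HH_t, X_{1:T}$ depends only on the latter. The remaining manipulations are standard information-theoretic bookkeeping, the one point of care being the marginalization conventions built into $I_t$, $\PP_t$, and $\dkl{\cdot}{\cdot}$.
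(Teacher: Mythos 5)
Your proposal is correct and follows essentially the same route as the paper's proof: the chain rule for mutual information with the first term vanishing by conditional independence of $A_t$ and $\pi^*(X_t;\tau)$ given $\HH_t, X_{1:T}$, expansion over the realized action $A_t = a$, removal of the conditioning on $A_t = a$ via the same independence, and finally the KL-divergence form of mutual information. Your added justification of the conditional independence from the algorithm's structure (the imputed dataset $\hat\tau_t$ uses fresh generation randomness independent of the true task) makes explicit what the paper asserts without elaboration, and your note that this holds for any imputation model $p_\theta$ matches the paper's assumption-free statement of the lemma.
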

\vspace{-1.5mm}
\begin{proof}
Note that
\begin{align*}
    &I_t \big( \pi^*(X_t; \tau); ( Y_t^{(A_t)}, A_t ) \big)
    \underbrace{=}_{(a)} I_t \big( \pi^*(X_t; \tau); Y_t^{(A_t)} \mid A_t \big) \\
    &\underbrace{=}_{(b)} \E \bigg[ \sum_{a \in \MC{A}_{\tau}} \PP_t ( A_t = a ) I_t \big( \pi^*(X_t; \tau); Y_t^{(a)} \mid A_t = a \big) \bigg] 
    \underbrace{=}_{(c)} \E \bigg[ \sum_{a \in \MC{A}_{\tau}} \PP_t ( A_t = a ) I_t \big( \pi^*(X_t; \tau); Y_t^{(a)} \big) \bigg] \\
    &\underbrace{=}_{(d)} \E \bigg[ \sum_{a \in \MC{A}_{\tau}} \PP_t ( A_t = a ) \sum_{\tilde{a} \in \MC{A}_{\tau}} \PP_t \big( \pi^*(X_t; \tau) = \tilde{a} \big) \dkl{\PP_t \big( Y_t^{(a)} \mid \pi^*(X_t; \tau) = \tilde{a} \big)}{\PP_t \big( Y_t^{(a)}\big)} \bigg].
\end{align*}
Above, equality (a) holds since  $\pi^*(X_t; \tau)$ and $A_t$ are independent conditional on $\HH_{t}$.
Equality (b) holds by the definition of conditional mutual information.
Equality (c) holds because $Y_t^{(a)}$ and $\pi^*(X_t; \tau)$ are independent of $A_t$ conditional on $\HH_{t}$.
Equality (d) holds by the KL divergence form of mutual information.
\end{proof}

\subsubsection{Lemma \ref{lemma:mutualInfoPi}: Mutual information bound for policies.} 
\begin{lemma}[Mutual information bound for policies]
\label{lemma:mutualInfoPi}
\begin{align*}
    \sum_{t=1}^T I_t \big( \pi^*(X_t; \tau); \big( Y_t^{(A_t)}, A_t \big) \big)
    \leq H(\piX \mid Z_\tau)
\end{align*}
\end{lemma}
\vspace{-1.5mm}
\begin{proof}
\begin{align*}
    \sum_{t=1}^T I_t \big( \pi^*(X_t; \tau); \big( Y_t^{(A_t)}, A_t \big) \big)
    &\underbrace{\leq}_{(i)} \sum_{t=1}^T I_t \big( \piX; \big( Y_t^{(A_t)}, A_t \big) \big) \\
    &\underbrace{=}_{(ii)} I_1 \big( \piX; \big( Y_t^{(A_t)}, A_t \big)_{t=1}^T \big) \\
    &\underbrace{=}_{(iii)} H_1\left(\piX\right) - H_1\big(\piX \, \big| \, \big( Y_t^{(A_t)}, A_t \big)_{t=1}^T \big) \\
    &\underbrace{\leq}_{(iv)} H_1(\piX) 
    \underbrace{\leq}_{(v)} H(\piX \mid Z_\tau)
\end{align*}
\vspace{-3mm}
\begin{itemize}[leftmargin=1.5em]
    \item For inequality (i), note that for any random variables $X_1, X_2, Y$ (where $X_1, X_2$ are discrete), by properties of mutual information and entropy,
    \begin{align*}
        I((X_1, X_2); Y) &= H(X_1, X_2) - H(X_1, X_2 \mid Y) \\
        &= H(X_1) - H(X_1 \mid Y) + H(X_2 \mid X_1) - H(X_2 \mid Y, X_1) \\
        &= I(X_1; Y) + I(X_2; Y \mid X_1)
    \end{align*}
    The above implies that $I((X_1, X_2); Y) \geq I(X_1; Y)$ since $I(X_2; Y \mid X_1) \geq 0$. Recall that $\piX := \{ \pi^*(X_t; \tau) \}_{t=1}^T$. Thus, since $\pi^*(X_t; \tau) \in \piX$ we have that
    \begin{align*}
        I_t \big( \piX; ( Y_t^{(A_t)}, A_t ) \big)
        \geq I_t\big ( \pi^*(X_t; \tau); ( Y_t^{(A_t)}, A_t ) \big).
    \end{align*}
    \item Equality (ii) uses the chain rule for mutual information.
    \item Equality (iii) holds by the relationship between mutual information and entropy.
    \item Inequality (iv) holds since entropy is always nonnegative.
    \item Inquality (v) uses that $H_1\left(\piX\right) = H(\piX \mid Z_\tau, X_1) \leq H(\piX \mid Z_\tau)$, where the first equality holds by the definition of $H_1$ and the final inequality holds by the chain rule for entropy.
\end{itemize}
\end{proof}

\subsubsection{Proof of Theorem \ref{thm:psarRegret}}

\begin{customthm}{\ref{thm:psarRegret}}[Regret bound for Generative TS with an approximate imputation model]
    For Algorithm \ref{alg:Thompson} with imputation model $p_\theta$, $\mathbb{A}_{\rm{TS-Gen}}(p_\theta)$, 
    \begin{align*}
      \Delta \big( \psar(p_\theta) \big)
      & \leq 
        \underbrace{ 
        \sqrt{ \frac{|\MC{A}_{\tau}|}{2 T} \cdot H ( \piX \mid Z_\tau ) } }_{\TN{Regret bound for Thompson sampling}} 
        + \underbrace{ 
        \sqrt{ 2 \{ \ell(p_\theta) - \ell(p^*) \} } }_{\TN{Penalty for sub-optimal prediction}}.
    \end{align*}
\end{customthm}

\begin{proof}
Note that by the law of iterated expectations,
\begin{align*}
    \Delta(\psar) 
    = \E \bigg[ \frac{1}{T} \sum_{t=1}^T R(Y_{t}^{(\pi^*(X_t; \tau))}) - R(Y_{t}^{(A_t)}) \bigg]
    = \E \bigg[ \frac{1}{T} \sum_{t=1}^T \E_t \left[ R(Y_{t}^{(\pi^*(X_t; \tau))}) - R(Y_{t}^{(A_t)}) \right] \bigg].
\end{align*}
Consider the following for any $t \in [1 \colon T]$:
\begin{align*}
  &\E_t \left[ R(Y_{t}^{(\pi^*(X_t; \tau))}) - R(Y_{t}^{(A_t)}) \right] \nonumber \\
  &= \sum_{a \in \MC{A}_{\tau}} \PP_t(\pi^*(X_t; \tau) = a) \cdot \E_t \big[ R \big( Y_t^{(a)} \big) \mid \pi^*(X_t; \tau) = a \big]
  - \sum_{a \in \MC{A}_{\tau}} \PP_t(A_t = a) \cdot \E_t \big[ R \big( Y_t^{(a)} \big) \mid A_t = a \big] \nonumber \\
  &\underbrace{=}_{(i)} \sum_{a \in \MC{A}_{\tau}} \PP_t(\pi^*(X_t; \tau) = a) \cdot \E_t \big[ R \big( Y_t^{(a)} \big) \mid \pi^*(X_t; \tau) = a \big]
  - \sum_{a \in \MC{A}_{\tau}} \PP_t(A_t = a) \cdot \E_t \big[ R \big( Y_t^{(a)} \big) \big] \nonumber \\
  &= \sum_{a \in \MC{A}_{\tau}} \sqrt{ \PP_t(\pi^*(X_t; \tau) = a) \PP_t(A_t = a) } \big( \E_t \big[ R \big( Y_t^{(a)} \big) \mid \pi^*(X_t; \tau) = a \big]
  - \E_t \big[ R \big( Y_t^{(a)} \big) \big] \big) \nonumber \\
  &+ \sum_{a \in \MC{A}_{\tau}} \big( \sqrt{ \PP_t(\pi^*(X_t; \tau) = a) } - \sqrt{ \PP_t(A_t = a) } \big) \nonumber \\
  &\qquad \qquad \qquad \left( \sqrt{ \PP_t(\pi^*(X_t; \tau) = a) } \E_t \big[ R \big( Y_t^{(a)} \big) \mid \pi^*(X_t; \tau) = a \big]
  + \sqrt{ \PP_t(A_t = a) } \E_t \big[ R \big( Y_t^{(a)} \big) \big] \right) \nonumber \\
  &\underbrace{\leq}_{(ii)} \sum_{a \in \MC{A}_{\tau}} \sqrt{ \PP_t(\pi^*(X_t; \tau) = a) \PP_t(A_t = a) } \big( \E_t \big[ R \big( Y_t^{(a)} \big) \mid \pi^*(X_t; \tau) = a \big]
  - \E_t \big[ R \big( Y_t^{(a)} \big) \big] \big) \nonumber \\
  &\qquad \qquad \qquad \qquad \qquad \qquad \qquad \qquad \qquad \qquad \qquad + \sum_{a \in \MC{A}_{\tau}} \left| \PP_t(\pi^*(X_t; \tau) = a) - \PP_t(A_t = a) \right| \\
  &\underbrace{\leq}_{(iii)} \sqrt{ \frac{|\MC{A}_{\tau}|}{2} \sum_{a \in \MC{A}_{\tau}} \PP_t(A_t = a) \sum_{ \tilde{a} \in \MC{A}_{\tau}} \PP_t(\pi^*(X_t; \tau) = \tilde{a}) \cdot \dkl{ \PP_t \big( Y_t^{(a)} \mid \pi^*(X_t; \tau) = \tilde{a} \big) }{ \PP_t \big( Y_t^{(a)} \big) } } \nonumber \\
  &\qquad \qquad \qquad \qquad \qquad \qquad \qquad \qquad \qquad \qquad \qquad + \sum_{a \in \MC{A}_{\tau}} \left| \PP_t(\pi^*(X_t; \tau) = a) - \PP_t(A_t = a) \right| \nonumber
\end{align*}
Above, equality (i) holds since conditional on $\HH_t$, the action $A_t$ and the outcome $Y_t^{(a)}$ are independent.
Inequality (ii) uses that $R$ takes values in $[0,1]$ in the second term. 
Inequality (iii) above holds because:
\begin{align*}
    &\sum_{a \in \MC{A}_{\tau}} \sqrt{ \PP_t(\pi^*(X_t; \tau) = a) \PP_t(A_t = a) } \big( \E_t \big[ R \big( Y_t^{(a)} \big) \mid \pi^*(X_t; \tau) = a \big]
  - \E_t \big[ R \big( Y_t^{(a)} \big) \big] \big) \nonumber \\
    &\underbrace{\leq}_{(a)} \sqrt{ |\MC{A}_{\tau}| \sum_{a \in \MC{A}_{\tau}} \PP_t(\pi^*(X_t; \tau) = a) \PP_t(A_t = a) \left( \E_t \big[ R \big( Y_t^{(a)} \big) \mid \pi^*(X_t; \tau) = a \big]
  - \E_t \big[ R \big( Y_t^{(a)} \big) \big] \right)^2 } \nonumber \\
    &\underbrace{\leq}_{(b)} \sqrt{ |\MC{A}_{\tau}| \sum_{a \in \MC{A}_{\tau}} \PP_t(A_t = a) \sum_{\tilde{a} \in \MC{A}_{\tau}} \PP_t(\pi^*(X_t; \tau) = \tilde{a}) \left( \E_t \big[ R \big( Y_t^{(a)} \big) \mid \pi^*(X_t; \tau) = \tilde{a} \big]
  - \E_t \big[ R \big( Y_t^{(a)} \big) \big] \right)^2 } \nonumber \\
    &\underbrace{\leq}_{(c)} \sqrt{ \frac{|\MC{A}_{\tau}|}{2} \sum_{a \in \MC{A}_{\tau}} \PP_t(A_t = a) \sum_{ \tilde{a} \in \MC{A}_{\tau}} \PP_t(\pi^*(X_t; \tau) = \tilde{a}) \cdot \dkl{ \PP_t \big( Y_t^{(a)} \mid \pi^*(X_t; \tau) = \tilde{a} \big) }{ \PP_t \big( Y_t^{(a)} \big) } } \nonumber
\end{align*}
Inequality (a) uses Cauchy-Schwartz inequality.
Inequality (b) uses an elementary equality of summation.
Inequality (c) uses Fact 9 of \citet{russo2016information} (which uses Pinsker's inequality).

Using the above result, averaging over $t$ and taking an expectation, we get
\begin{align*}
    &\Delta(\psar) 
    = \E \bigg[ \frac{1}{T} \sum_{t=1}^T \E_t \left[ R(Y_{t}^{(\pi^*(X_t; \tau))}) - R(Y_{t}^{(A_t)}) \right] \bigg] \\
    &\leq \E \bigg[ \frac{1}{T} \sum_{t=1}^T \sqrt{ \frac{|\MC{A}_{\tau}|}{2} \sum_{a \in \MC{A}_{\tau}} \PP_t(A_t = a) \sum_{\tilde{a} \in \MC{A}_{\tau}} \PP_t(\pi^*(X_t; \tau) = \tilde{a}) \cdot \dkl{ \PP_t \big( Y_t^{(a)} \mid \pi^*(X_t; \tau) = \tilde{a} \big) }{ \PP_t \big( Y_t^{(a)} \big) } } \bigg] \\
    &\qquad \qquad \qquad \qquad \qquad \qquad \qquad \qquad \qquad \qquad + \E \bigg[ \frac{1}{T} \sum_{t=1}^T \sum_{a \in \MC{A}_{\tau}} \left| \PP_t(\pi^*(X_t; \tau) = a) - \PP_t(A_t = a) \right| \bigg] \\
    &\underbrace{\leq}_{(i)} \sqrt{ \E \bigg[ \frac{1}{T} \sum_{t=1}^T  \frac{|\MC{A}_{\tau}|}{2} \sum_{a \in \MC{A}_{\tau}} \PP_t(A_t = a) \sum_{\tilde{a} \in \MC{A}_{\tau}} \PP_t(\pi^*(X_t; \tau) = \tilde{a}) \cdot \dkl{ \PP_t \big( Y_t^{(a)} \mid \pi^*(X_t; \tau) = \tilde{a} \big) }{ \PP_t \big( Y_t^{(a)} \big) } \bigg] } \\
    &\qquad \qquad \qquad \qquad \qquad \qquad \qquad \qquad \qquad \qquad + \E \bigg[ \frac{1}{T} \sum_{t=1}^T \sqrt{ 2 \cdot \dkl{\PP_t\big(\pi^*(X_t; \tau) = \cdotspace\big)}{\PP_t \big(A_t = \cdotspace \big)} } \bigg] \\
    &\underbrace{=}_{(ii)} \sqrt{ \frac{|\MC{A}_{\tau}|}{2} \cdot \frac{1}{T} \sum_{t=1}^T I_t \big( \pi^*(X_t; \tau); \big( Y_t^{(A_t)}, A_t \big) \big) }
    + \E \bigg[ \frac{1}{T} \sum_{t=1}^T \sqrt{ 2 \cdot \dkl{\PP_t\big(\pi^*(X_t; \tau) = \cdotspace\big)}{\PP_t \big(A_t = \cdotspace \big)} } \bigg] \\
    &\underbrace{\leq}_{(iii)} \sqrt{ \frac{|\MC{A}_{\tau}|}{2} \cdot \frac{1}{T} \sum_{t=1}^T I_t \big( \pi^*(X_t; \tau); \big( Y_t^{(A_t)}, A_t \big) \big) }
    + \sqrt{ \frac{1}{T} \sum_{t=1}^T 2 \cdot \dkl{\PP_t\big(\pi^*(X_t; \tau) = \cdotspace\big)}{\PP_t \big(A_t = \cdotspace \big)} } \\
    &\underbrace{\leq}_{(iv)} \sqrt{ \frac{|\MC{A}_{\tau}| \cdot H(\piX \mid Z_\tau) }{2T} } + \sqrt{ 2 \{ \ell(p_{\theta}) - \ell(p^*) \} } 
\end{align*}
\vspace{-3mm}
\begin{itemize}[leftmargin=1.5em]
    \item Inequality (i) uses Jensen's inequality on the first term and Fact 9 of \citet{russo2016information} (which uses Pinsker's inequality) on the second term.
    \item Equality (ii) uses Lemma \ref{lemma:mutualInfoNew} (Mutual information equivalency).
    \item Inequality (iii) uses Jensen's inequality. %
    \item The first term in inequality (iv) uses Lemma \ref{lemma:mutualInfoPi} (Mutual information bound for policies) and the second term uses Lemma \ref{lemma:AAbarKLNew} (KL Divergence of next action distribution).
\end{itemize}
\end{proof}

\subsection{Regret bound for Generative TS with a perfectly calibrated imputation model $p^*$}
\label{app:correctlySpecified}

\begin{customthm}{\ref{thm:psarRegretPerfect}}[Regret bound for Generative TS with a perfectly calibrated imputation model $p^*$]
    For Algorithm \ref{alg:Thompson} with imputation model $p^*$, $\mathbb{A}_{\rm{TS-Gen}}(p^*)$, 
    \begin{align*}
        \Delta( \mathbb{A}_{\rm{TS-Gen}}(p^*) ) \leq \sqrt{ \frac{|\MC{A}_\tau|}{2 T} \cdot H ( \piX \mid Z_{\tau} ) }.
    \end{align*}
    Moreover, $\Delta( \mathbb{A}_{\rm{TS-Gen}}(p^*) ) \leq \sqrt{ \frac{\bar{\Gamma}}{T} \cdot H ( \piX \mid Z_{\tau} ) }$, where $\bar{\Gamma}$ bounds the information ratio \citep{russo2016information}, i.e., $\bar{\Gamma} \geq \max_{t} \Gamma_t$ a.s. for
    $\Gamma_t := \frac{\E [ R(Y_t^{(\pi^*(X_t; \tau))}) - R(Y_t^{(A_t)}) \mid \HH_t]^2}{I( \pi^*(X_t; \tau); Y_t^{(A_t)}, A_t \mid \HH_t)}$.
\end{customthm}

\begin{proof}
The first result that $\Delta( \mathbb{A}_{\rm{TS-Gen}}(p^*) ) \leq \sqrt{ \frac{|\MC{A}_\tau|}{2 T} \cdot H ( \piX \mid Z_{\tau} ) }$, holds as a direct corollary of Theorem \ref{thm:psarRegret} by setting $p_\theta = p^*$.

We now show the second result that $\Delta( \mathbb{A}_{\rm{TS-Gen}}(p^*) ) \leq \sqrt{ \frac{\bar{\Gamma}}{T} \cdot H ( \piX \mid Z_{\tau} ) }$. It holds by a very similar argument as Proposition 1 of \citep{russo2016information}.
\begin{align*}
    \Delta(\psar) 
    &= \E \bigg[ \frac{1}{T} \sum_{t=1}^T R(Y_{t}^{(\pi^*(X_t; \tau))}) - R(Y_{t}^{(A_t)}) \bigg] 
    \underbrace{=}_{(i)} \E \bigg[ \frac{1}{T} \sum_{t=1}^T \E_t \big[ R(Y_{t}^{(\pi^*(X_t; \tau))}) - R(Y_{t}^{(A_t)}) \big] \bigg] \\
    &\underbrace{=}_{(ii)} \E \bigg[ \frac{1}{T} \sum_{t=1}^T \sqrt{ \Gamma_t \cdot I_t( \pi^*(X_t; \tau); Y_t^{(A_t)}, A_t ) } \bigg] 
    \leq \sqrt{\bar{\Gamma}} \cdot \E \bigg[ \frac{1}{T} \sum_{t=1}^T \sqrt{ I_t( \pi^*(X_t; \tau); Y_t^{(A_t)}, A_t ) } \bigg] \\
    &\underbrace{\leq}_{(iii)} \sqrt{\bar{\Gamma} \cdot \E \bigg[ \frac{1}{T} \sum_{t=1}^T I_t( \pi^*(X_t; \tau); Y_t^{(A_t)}, A_t ) } \bigg] 
    \underbrace{\leq}_{(iv)} \sqrt{ \frac{\bar{\Gamma}}{T}  \cdot H(\piX \mid Z_\tau) }
\end{align*}
Equality (i) holds by the law of iterated expectations.
Equality (ii) holds by the definition of $\Gamma_t$.
Inequality (iii) holds by Cauchy-Shwartz.
Inequality (iv) holds by Lemma \ref{lemma:mutualInfoPi} (Mutual information bound for policies).
\end{proof}

\subsection{Comparison to existing regret bounds}
\label{app:comparison}

\begin{lemma}[Bounding information ratio for linear, non-contextual bandits]
    Suppose $\E \left[ R(Y_t) \mid A_t = a \right] = \varphi(A_t)^\top \theta^*$ for some $\theta^* \in \real^d$. Let the policy class $\Pi$ be such that for any $\pi \in \Pi$, $\pi(a) = \varphi(A_t)^\top \theta$ for some $\theta \in \real^d$. Then, $\Gamma_t \leq \frac{d}{2}$ a.s.
    \label{lemma:infoRatioLinear}
\end{lemma}

\begin{proof}
This result follows by Proposition 5 of \citet{russo2016information}.
\end{proof}

\paragraph{Generative TS regret bound for linear and logistic reward settings.}
By our Theorem \ref{thm:psarRegretPerfect}, we have that the per round average Bayesian regret is bounded by $\sqrt{ \frac{ |\MC{A}_\tau| }{2 T} H( \piX \mid Z_\tau ) }$. Note that by Theorem 29.7 in \citet{ShalevBe14} a linear multiclass predictor of the form $\TN{argmax}_{a \in \MC{A}_\tau} \theta^\top \varphi(x, a)$ for $\theta \in \real^d$ has Nataranjan dimension less than or equal to $d$. Thus, by applying Proposition \ref{prop:VC}, we have that $H( \piX \mid Z_\tau ) \leq d \cdot \log( T \cdot |\MC{A}_\tau|)$, so the per round average Bayesian regret is bounded by $\sqrt{ \frac{ d |\MC{A}_\tau| }{2 T} \log( T \cdot |\MC{A}_\tau|) }$.

Alternatively we can use the second result Theorem \ref{thm:psarRegretPerfect} to conclude that the per round average Bayesian regret is bounded by $\sqrt{ \frac{\bar{\Gamma}}{T} \cdot H ( \piX \mid Z_{\tau} ) }$. By Lemma \ref{lemma:infoRatioLinear} we can choose $\bar{\Gamma} = \frac{d}{2}$, so by applying the same Proposition \ref{prop:VC} argument as above, we have that the per round average Bayesian regret is bounded by $\sqrt{ \frac{ d^2 }{2 T} \log( T \cdot |\MC{A}_\tau|) }$.

Thus, by combining the above two results, we have that
\begin{align}
    \label{eqn:multiclassRegret}
    \Delta(\psar(p^*)) \leq \sqrt{ \frac{ d \min(d, |\MC{A}_\tau|) }{2 T} \log( T \cdot |\MC{A}_\tau|) }
\end{align}

\paragraph{Linear logistic bandits.}

We compare to Theorem 4 of \citet{neu2022lifting}. We only provide a brief overview of their result here; Please see the paper for additional details. Additionally note that their result applies to adversarial contextual bandits, whereas our result only applies for stochastic contextual bandits.

In their problem setup, the rewards are generated using a logistic model where $\theta^* \in \real^d$:
\begin{align*}
    R(Y_t) \mid X_t, A_t \sim \TN{Bernoulli} \left( \TN{expit} \left( \varphi(X_t, A_t)^\top \theta^* \right) \right)
\end{align*}
They show that cumulative Bayesian regret of Thompson sampling (with a correctly specified Bayesian model) is bounded by $\sqrt{2 |\MC{A}_\tau| T d \{ \log(2 SCT +1) +1 \} }$, where $\| \theta^* \| \leq S$ and $C$ is related Lipschitz smoothness of the logistic function. This means that the per round average Bayesian regret is bounded by $\sqrt{\frac{ 2 d |\MC{A}_\tau| }{T} \{ \log(2 SCT +1) +1 \}}$. Our result from \eqref{eqn:multiclassRegret} matches up to log factors.

\paragraph{Linear non-contextual bandits.}
We now compare to the result in Section 6.5 of \citet{russo2018learning}. Again, we only provide a brief overview of their result here; Please see the paper for additional details. 

In their non-contextual problem setup, the rewards are generated using a linear model where $\theta^* \in \real^d$:
\begin{align*}
    \E \left[ R(Y_t) \mid A_t = a \right] = \varphi(A_t)^\top \theta^*.
\end{align*}
They show that cumulative Bayesian regret of Thompson sampling (with a correctly specified Bayesian model) is bounded by $\sqrt{\frac{1}{2} \log(|\MC{A}_\tau|) d T}$. This means that the per round average Bayesian regret is bounded by $\sqrt{\frac{d}{2T} \log(|\MC{A}_\tau|)}$. Our result from \eqref{eqn:multiclassRegret} differs by a factor $\sqrt{\min(d, |\MC{A}_\tau|)}$ and a $\log(T)$ term.

The additional $|\MC{A}_\tau|$ and $\log(T)$ factors, we believe, are not artifacts of our specific algorithm, but rather are a consequence of the generality of our analysis.
\begin{itemize}[leftmargin=*]
    \item The $\log(T)$ term comes from our use of the Natarajan dimension, a generalization of the VC dimension. This term is common in bandit regret bounds that rely on VC dimension-based analysis, as seen in other work (e.g., \citet{beygelzimer2011contextual}). It appears to be an unavoidable consequence of this type of generalized bound.
    \item The $|\MC{A}|$ term is a consequence of the generality of our analysis, which does not utilize a shared parameterization across actions. The \citet{russo2018learning} bound for linear bandits is tighter because it leverages the linear structure, where $\E[R(Y_t) \mid A_t = a] = \varphi(a)^\top \beta$. In this setting, the parameter $\beta$ is common to all actions, meaning information gained from observing an action can be used to inform beliefs about the rewards of all other actions. Our analysis, however, does not assume or utilize such a shared structure. Instead, our regret-bound scales with the number of actions, similar to bounds for multi-armed bandits where the reward distribution for each arm is learned independently. This makes our bound applicable to a broader class of problems, but also looser for specific settings like linear bandits with shared parameters across actions.
\end{itemize}
While our result does not provide the tightest possible regret bound for a specific parametric model, we present a general and robust theoretical framework that characterizes the performance of Thompson Sampling variants that use modern generative sequence models and general policy classes.

\section{Experiment details}
\label{app:experiments}

\subsection{Data generating environment}
\label{app:dgp}
\subsubsection{Synthetic bandit setting.}
\label{app:dgpSynthetic}
We form samples of tasks $\tau = \{ Z, ( X_t, Y_t^{(a)} \}_{a \in \MC{A}_\tau} )_{t=1}^T \}$ as follows. The task features $Z$ for a given bandit task consist of one feature per action, i.e. $Z=\{Z^{(a)}\}_{a\in \mathcal A_{\tau}}$, where only $Z^{(a)} \in \real^2$. We sample task features $Z^{(a)}\sim N(0_2,I_2)$ independently across all $|\MC{A}_\tau| = 10$ actions and contexts $X_t\sim N(0_5,I_5)$ independently across time. 
We let $R(y)=y$ and use the following generative model for $Y_t^{(a)}$:
\begin{align}
\label{eq:synthetic_dgp}
Y^{(a)}_t \mid W_t^{(a)} \sim \textrm{Bernoulli}(\sigma(W^{(a)}_t)),
\end{align}
where
\begin{align*}
    W^{(a)}_t = U^{(a)}_{\rm const} + U^{(a)}_{Z} Z^{(a)}+U^{(a)}_{X} X_t + X_{t,1:2}^\top U^{(a)}_{\rm cross} Z^{(a)},
\end{align*}
for $\sigma(w):=(1+\exp(-w))^{-1}$. Above we use $ X_{t,1:2}$ to denote the first two dimensions of $X_t$. 
The latent variables are multivariate Gaussian: $U_{\rm const}^{(a)}\sim N(0,1)$, $U_Z^{(a)}\sim N(1_2,I_2 \cdot 0.25^2)$,
$U_X^{(a)}\sim N(1_5,I_5 \cdot 0.25^2)$, and $U^{(a)}_{\rm cross}$ is a diagonal matrix where the diagonal entries are drawn independently from $N(1,0.25^2)$. 

\subsubsection{Semi-synthetic setting.}
\label{app:dgpSemisynthetic}
We form samples of tasks $\tau = \{ Z, ( X_t, Y_t^{(a)} \}_{a \in \MC{A}_\tau} )_{t=1}^T \}$ as follows. We consider a semi-synthetic news recommendation setting in which we use text headlines $Z^{(a)}$ for action $a$. We let $R(y)=y$ and use the following generative model for $Y_t^{(a)}$:
\begin{align}
\label{eq:semisynthetic_dgp}
    Y^{(a)}_t \mid W^{(a)}_t \sim \textrm{Bernoulli}(\sigma(W^{(a)}_t)),
\end{align}
where
\begin{align*}
    W^{(a)}_t = U^{(a)}_{\rm const} + U^{(a)}_{Z} \phi_Z (Z^{(a)})+U^{(a)}_{X} \phi_X( X_t) + \phi_X (X_t)_{1:2}^\top U^{(a)}_{\rm cross} \phi_Z( Z^{(a)}).
\end{align*}
Above, $\phi_X(X_t) \in \real^4$ and $\phi_Z(Z^{(a)}) \in \real^2$ are complex nonlinear function of $X_t, Z^{(a)}$, which increases the difficulty of the learning task; We describe these functions in detail below. Note, $\phi_X( X_t)_{1:2}$ denotes the first two dimensions of $\phi_X( X_t)\in\mathbb \real^2$. 
The latent variables are multivariate Gaussian: $U_{\rm const}^{(a)}\sim N(0,1)$, $U_Z^{(a)}\sim N(1_2,I_2 \cdot 0.25^2)$,
$U_X^{(a)}\sim N(1_4, 0.25^2 \cdot I_4)$, and the matrix 
$U^{(a)}_{\rm cross}$ is diagonal with diagonal entries drawn independently from $N(1,0.25^2)$. 

\paragraph{Contexts and $\phi_X$.}
The contexts $X_t\sim N(0_5,I_5)$ independently over time. We use
\begin{align*}
    \phi_X(X_t) = X_{t,1:4}\cdot  \textrm{sign}(X_{t,5}),
\end{align*}
i.e., $\phi_X$ multiplies the first four dimensions of $X_t$ by the sign of the fifth dimension. Above, $X_{t,1:4}$ denotes the first $4$ dimensions of $X_t$.

\paragraph{Tasks features and $\phi_Z$.}
To form a task, we sample $|\MC{A}_\tau| = 10$ headlines $Z^{(a)}$ uniformly from the MIND large dataset \citep{wu2020mind}. $\phi_Z( Z^{(a)}) \in \real^2$ where the each dimension is the output of a pre-trained binary classifier evaluated on the news article. The first dimension is the output of probability output of a pre-trained sentiment classifier \citep{hfsentiment} and the second dimension is the probability output of a pre-trained formality classifier \citep{hfformality}; The outputs are normalized to have mean 0 and variance 1 based on their distribution in the training set. Both classifier models were obtained from huggingface.com.

\subsection{Offline pretraining}
\label{app:offline_training}

\subsubsection{Sequence model architecture}
\label{app:sequence_models}
\begin{figure}[h]
\centering
\includegraphics[width=0.6\linewidth]{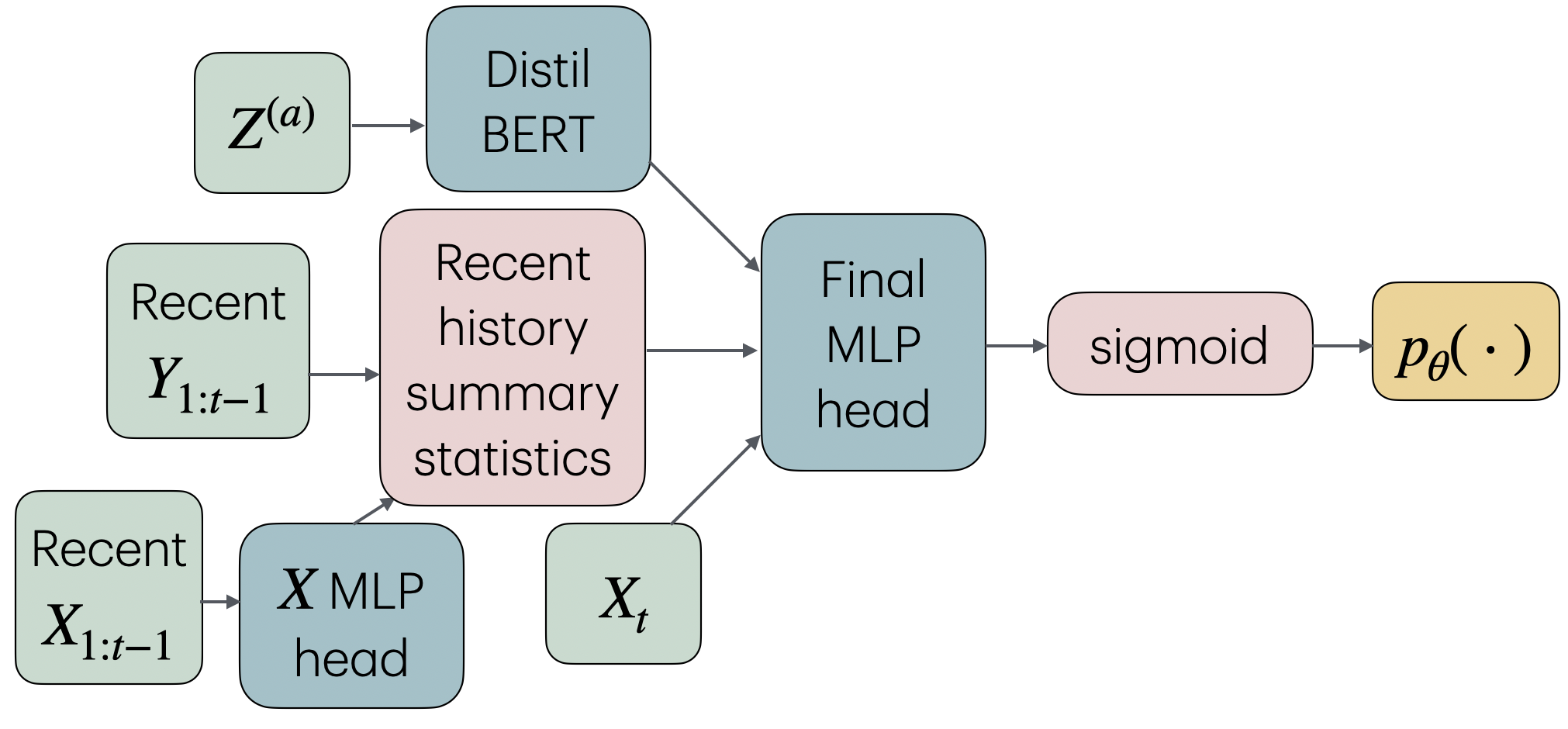}    
\caption{Diagram of model architecture for $p_\theta$, for semisynthetic settings. In synthetic settings, the model architecture is the same, except that it does not include the DistilBERT~\citep{sanh2019distilbert} encoder to process text, or the MLP encoder to process contexts $X_t$. }
\label{fig:architecture}
\end{figure}

\paragraph{Synthetic setting.} 
This architecture is described by Figure~\ref{fig:architecture} except the $X$ MLP head and DistilBERT head should be replaced by identity mappings. In the synthetic setting $p_\theta$ is simple recurrent neural network where the MLP takes as input $Z^{(a)}$, current context $X_t$, as well as summary statistics of the history $\HH_t$ (discussed below). Before being fed into the MLP head, the summary statistics are then repeated 100 times and concatenated into a single vector. The $Z^{(a)}$, the current context $X_t$, and the repeated summary statistics of the history are fed into the final MLP head, which has 3 hidden layers, each with width 100. Note that the MLP consists of a linear layer taking the input to the first hidden layer, the 3 hidden linear layers, and finally a linear layer taking the output from the last hidden layer to the output before the sigmoid, which is a total of 5 linear layers. The output of the MLP head is fed through a sigmoid function to obtain a prediction for the probability that the next outcome is 1 (rather than 0). 

The summary statistic of $\HH_t$ only contains information about action $a$, i.e., $\{(X_s, Y_s): s<t, A_s=a\}$. For these summary statistics, we aggregate the context vectors $X_s$ into a matrix $\mathbf X$, where each row is one element in $\{X_s:s<t,A_s=a\}$. We do the same for $\{Y_s:s<t,A_s=a\}$ to construct vector $\mathbf Y$. The $X_s$ and $Y_s$ appear in $\mathbf X$ and $\mathbf Y$ in order according to timestep $s$. The summary statistics are $(\mathbf{X}^\top \mathbf{X}+I)^{-1}$ and $\mathbf{X}^\top \mathbf{Y}$.

\paragraph{Semisynthetic setting.}
This architecture is described by Figure~\ref{fig:architecture}. In the semisynthetic setting, $p_\theta$ is implemented to take as input action-specific task feature $Z^{(a)}$, current context $X_t$, as well as summary statistics of the history $\HH_t$ (discussed below). As displayed in Figure~\ref{fig:architecture}, the model architecture is as follows. We concatenate a DistilBert \citep{sanh2019distilbert} embedding of headline $Z^{(a)}$ with $X_t$, and a summary statistics of the history (desribed below) that is repated 100 times. Then, this concatenated vector is fed into the final MLP head (3 hidden layers, width 100). Finally, the output of the MLP is fed through a sigmoid function to obtain a prediction for the probability that the next outcome is 1 (rather than 0). 

The summary statistic of $\HH_t$ only contains information about action $a$, i.e., $\{(X_s, Y_s): s<t, A_s=a\}$. For these summary statistics, we aggregate a learnable MLP embedding $\hat \phi_X$ (of depth 2 and width 100, labeled ``X MLP Head'' in Figure~\ref{fig:architecture}) of the context vectors $\hat\phi_X(X_s)$ into a matrix $\hat\phi_X(\mathbf X)$, where each row is one element in $\{\hat\phi_X(X_s):s<t,A_s=a\}$. We do the same for $\{Y_s:s<t,A_s=a\}$ to construct vector $\mathbf Y$. The $\hat\phi_X(X_s)$ and $Y_s$ appear in $\hat\phi_X(\mathbf X)$ and $\mathbf Y$ in order according to timestep $s$.

\subsubsection{Forming approximate complete task datasets from partial datasets}
\label{app:pretrain_bootstrap}

As described in Section~\ref{sec:ourAlg}, $\Dtrain$ ideally consists of bandit tasks $\tau \sim p^*$ as described in \eqref{eqn:taskDist}. In practice, one may not have ``complete'' task datasets $\tau = \{ Z_{\tau}, X_{1:T}, \{ Y_1^{(a)}, \dots, Y_T^{(a)} \}_{a \in \MC{A}_{\tau}} \}$, but instead have some partial datasets, e.g., $\{ Z_{\tau}, (X_1, A_1, Y_1),  \dots, (X_T, A_T, Y_T) \}$, collected by a behavior policy. In our experiments we use a several heuristics to construct approximate complete tasks $\tilde{\tau}$ from the the partial datasets. We use these approximate task datasets to form $\Dtrain = \{ \tilde{\tau}_1, \tilde{\tau}_2, \tilde{\tau}_3, \dots, \}$. 

The bootstrapping procedure we use makes several modeling simplifying assumptions, which are all common in the bandit literature:
\begin{itemize}[leftmargin=1.5em]
    \item \bo{Stationarity over time.} We model the $X_t$'s as being drawn i.i.d. from an unknown distribution. Additonally, we model the $(X_t, Y_t^{(a)})$ as exchangeable over time, i.e., $( X_t, Y_t^{(a)} )_{t \in [1 \colon T]} 
    \overset{D}{=} 
    ( X_{\sigma(t)}, Y_{\sigma(t)}^{(a)} )_{t \in [1 \colon T]}$.
    \item \bo{Independence across actions.} For a given task $\tau$, we model the outcomes $Y_{1:T}^{(a)}$ as i.i.d. conditional on $X_{1:T}$ and $Z$. This means that the outcomes $Y_{1:T}^{(a)}$ are not correlated with those from other actions, given contexts and task features.
\end{itemize}

Due the independence across actions assumption, instead of generating $\tilde{\tau}$, we instead impute rows $\tilde{\tau}^{(a)} = \{ X_{1:T}, Y_{1:T}^{(a)} \}$ for individual actions $a$. We use a bootstrapping procedure to construct $\tilde{\tau}^{(a)}$, described  in Algorithm~\ref{alg:bootstrap_offline} below. 

\begin{algorithm}[h]
  \caption{Bootstrapping historical data to form $\tilde{\tau}^{(a)}$}
  \label{alg:bootstrap_offline}
  \begin{algorithmic}[1]
  
\REQUIRE Historical data from action $a$, denoted $S^{(a)} \gets \{ (X_t, Y_t) : A_t = a \}$
\STATE Sample (with replacement) $T$ tuples from $S^{(a)}$:
\begin{align*}
    (\tilde{X}_1, \tilde{Y}_1^{(a)}), \dots (\tilde{X}_T, \tilde{Y}_T^{(a)}) \mid S^{(a)} \iidsim \frac{1}{|S^{(a)}|} \sum_{(x,y) \in S^{(a)}} \delta_{(x,y)}
\end{align*}
\RETURN $\tilde{\tau}^{(a)} = \big\{ (\tilde{X}_1, \tilde{Y}_1^{(a)}), \dots (\tilde{X}_T, \tilde{Y}_T^{(a)}) \big\}$
  \end{algorithmic}
\end{algorithm}

\subsubsection{Additional sequence model training details}
\label{app:poolactions}

\paragraph{Synthetic setting.} 
For offline training of $p_\theta$, we sample $20$k independent ``task action'' datasets $\{ Z^{(a)}, X_{1:N^{(a)}}, Y_{1:N^{(a)}}^{(a)} \}$ according to the data generating process from Appendix \ref{app:dgpSynthetic}; Specifically we use $N^{(a)} = 1000$ for all $a$. This dataset is split into training and validation sets where $10$k actions are in each set. 
The training set is used for training $p_\theta$ via gradient descent for 100 epochs, with loss from display~\eqref{eqn:train_loss}; Note for approximating the distribution of $X_t$, we use the empirical distribution of $1000$ contexts $X$'s from the training set (no gradient descent training). In each training batch, we use bootstrap resampling, specifically, Algorithm \ref{alg:bootstrap_offline}. 
The validation set is for choosing best hyperparameters and training epoch. We optimize weights in $p_\theta$ with the AdamW optimizer. We try learning rates $\{0.1,0.01,0.001\}$ and choose the learning rate with the lowest validation loss, which is 0.01. We set weight decay to 0.01. The batch size is 500 actions $a$ per batch.

\paragraph{Semi-synthetic setting.} 
For offline training of $p_\theta$, we sample independent ``task action'' datasets $\{ Z^{(a)}, X_{1:N^{(a)}}, Y_{1:N^{(a)}}^{(a)} \}$. For $Z^{(a)}$'s use $104$k headlines from the MIND dataset \citep{wu2020mind}; $20$k are used for the training set, $10$k are used for validation, and $74$k are used for bandit evaluation. The outcomes $X$ and $Y$ are generated according to the process described in Appendix \ref{app:dgpSemisynthetic}; Specifically we use $N^{(a)} = 1000$ for all $a$. 
The training set is used for training $p_\theta$ via gradient descent for 40 epochs, with loss from display~\eqref{eqn:train_loss}; Note for approximating the distribution of $X_t$, we use the empirical distribution of $1000$ contexts $X$'s from the training set (no gradient descent training). In each training batch, we use bootstrap resampling, specifically, Algorithm \ref{alg:bootstrap_offline}. 
We optimize weights in $p_\theta$ with the AdamW optimizer. We try learning rates $\{0.1,0.01,0.001\}$ and choose the learning rate and also the training epoch with the lowest validation loss; the learning rate chosen is 0.01. We set weight decay to 0.01. The batch size is 500. We do not fine-tune the DistilBERT encoder, i.e., its weights are frozen.

\subsection{Online learning}
Bandit datasets are constructed as described in Appendix~\ref{app:dgp}. 
In the semisynthetic setting, the headlines used are as described in Appendix~\ref{app:poolactions}. 

\subsubsection{TS-Gen policy-fitting details}
\label{app:more_generation}
Here we describe additional details used to fit $\pi^*(\cdotspace; \hat{\tau}_t) \in \Pi$ given an imputed task dataset $\hat{\tau}$. 
Using $\hat{\tau}_t$, for each action $a \in \MC{A}_\tau$, we fit an action-specific model to predict (binary) outcome $Y$ given context $X$; We use $f^{(a)}(X; \hat{\tau}_t)$ to denote this fitted action-specific model. Note that these models do not incorporate task features $Z^{(a)}$. Then,
\begin{align*}
    \pi^*(x; \hat{\tau}_t) = \argmax_{a \in \MC{A}_\tau} f^{(a)}(x; \hat{\tau}_t).
\end{align*}

In our experiments we choose $f$ to be either a logistic regression function or an MLP. 
\begin{itemize}[leftmargin=1.5em]
    \item For logistic $f$, we use the default logistic regression implementation from \texttt{scikit-learn} \citep{scikit-learn}. 
    \item For MLP-based policies, we use the default MLP classifier implementation (including hyperparameters), also from \texttt{scikit-learn} \citep{scikit-learn}. This is an MLP with one hidden layer of width 100, with ReLU activation, trained with Adam optimizer, with initial learning rate 0.001, and batch size 200. There is no early stopping or additional validation split.
\end{itemize}

\subsubsection{Baseline bandit methods}
\label{app:baseline_bandit_methods}
The first three (Greedy, Epsilon-Greedy, and Softmax) are alternative ways to make decisions using an existing pre-trained sequence model $p_\theta$. The others (Linear Thompson Sampling, LinUCB) are contextual bandit methods that do not use $p_\theta$. 

\paragraph{Greedy.} 
We use the samed trained sequence model $p_\theta$ as used by TS-Gen. In the online step, at time $t$, we feed the history $\HH_t$ (which includes the current context $X_t$) into the model $p_\theta$. We look at the predicted mean reward $\E \left[ R( Y_t ) \mid \HH_t, A_t = a \right]$ for each action $a$ according to $p_\theta$ and select the action with the largest predicted mean reward.

\paragraph{Epsilon-Greedy} 
This algorithm also uses $p_\theta$ and at each decision time follows the Greedy policy with probability $1-\epsilon$ and selects an action uniformly at random from $\MC{A}_t$ with probability $\epsilon$. We use $\epsilon = 0.1$. 

\paragraph{Softmax sampling.} 
Softmax sampling also uses the sequence model $p_\theta$ to select actions. Just like the Greedy algorithm, at time $t$, we feed the history $\HH_t$ (which includes the current context $X_t$) into the model $p_\theta$.  We look at the predicted mean reward $\E \left[ R( Y_t ) \mid \HH_t, A_t = a \right]$ for each action $a$ according to $p_\theta$ and put these values through a softmax function with temperature $b > 0$. We then sample the action $A_t$ according to the softmax probabilities. Note that softmax sampling is also called Boltzmann sampling and is also called PreDeToR-$\tau$ in \citet{mukherjee2024pretraining}. Following \citet{mukherjee2024pretraining}, we set $b = 0.05$. 

For lack of space, this is omitted in the main text but we compare PreDeToR-$\tau$ with Greedy and Epsilon-Greedy later in this Appendix. 

\paragraph{Linear Thompson Sampling (Isotropic Gaussian prior).} %
We use Linear TS \citep{agrawal2013thompson} with the following Bayesian model with a non-informative prior. For each arm $a\in\mathcal A_{\tau}$ and time $t$, outcomes are modeled as a linear function of $X_t$,
\begin{align*}
    Y_t^{(a)}&=X_t^\top \beta^{(a)} + \epsilon_t^{(a)}
    \qquad \TN{where}~~~
\beta^{(a)} \sim N(\mu, \Sigma) ~~~\TN{and}~~~
\epsilon_t^{(a)} \sim N(0,\sigma^2)
\end{align*}
where $\epsilon_t^{(a)}$ is modeled as Gaussian with mean 0 and variance 1/4 (since the maximum variance of a Bernoulli is 1/4). Note that unlike TS-Gen, linear Thompson sampling does not learn a rich and flexible prior based on task features $Z_\tau$. 
\paragraph{Linear Thompson Sampling (Fitted prior).}
We use Linear TS \citep{agrawal2013thompson} with the following Bayesian model with a prior fit using historical data $\MC{D}^{\TN{offline}}$. We use $\MC{A}^{\TN{offline}}$ to denote all actions across all tasks in $\MC{D}^{\TN{offline}}$. We fit the following Bayesian linear regression model for each action $a \in \MC{A}^{\TN{offline}}$:
\begin{align*}
    Y_t^{(a)}&=X_t^\top \beta^{(a)} + \epsilon_t^{(a)}
    \qquad \TN{where}~~~
\beta^{(a)} \sim N(\mu, \Sigma) ~~~\TN{and}~~~
\epsilon_t^{(a)} \sim N(0,\sigma^2)
\end{align*}
where $\beta^{(a)}$ are drawn iid across $a$, and $\epsilon_t^{(a)}$ are drawn iid across $a,t$,
so that 
$$Y^{(a)}_t\mid X_t \sim N(\mu^\top X_t, \sigma^2+X_t^\top \Sigma X_t).$$

For fitting $\mu,\Sigma,\sigma^2$, we do the following: 
\begin{itemize}[leftmargin=*]
    \item For each action $a \in \MC{A}^{\TN{offline}}$ in the available historical data (see Appendix \ref{app:poolactions}), we fit the action-specific least squares model:
\begin{align*}
    \hat\beta^{(a)}=\textrm{argmin}_\beta ~ \sum_{t\in\mathcal T_1}(Y^{(a)}_t-X_t\beta)^2,
\end{align*}
    where $\MC{T}_1$ denotes the first $80\%$ of timesteps in $[1, 2, \dots, T]$.
    \item Then we set $\hat \mu = \frac{1}{|\MC{A}^{\TN{offline}}|} \sum_{a \in \MC{A}^{\TN{offline}}} \hat\beta^{(a)}$ %
    and $\hat \Sigma$ to be the sample covariance of the $\hat\beta^{(a)}$ across $a \in \MC{A}^{\TN{offline}}$.
    We set $\hat \sigma^2$ to the sample variance of the residuals, i.e. the sample variance of $Y_t^{(a)}-X_t\hat\beta^{(a)}$ across $a$ and $t$, where $a\in \MC{A}^{\TN{offline}}$ and $t\in\mathcal T_2$, and where $\MC{T}_2$ denotes the final $20\%$ of timesteps in $[1, 2, \dots, T]$.
\end{itemize}

\paragraph{Linear Thompson Sampling Using Learned Features (Isotropic Gaussian prior)} 
Here, we propose a variant of Linear Thompson Sampling above, but using features extracted from the learned sequence model $p_\theta$. Let $\phi_\theta(Z^{(a)},X_t)$ denote the last-layer feature embedding (using the output of the last hidden layer) in the MLP head in the sequence model $p_\theta$ used for TS-Gen (see Section~\ref{app:sequence_models}) evaluated for the current context $X_t$ and action feature $Z^{(a)}$; note we do not feed any history into the sequence model $p_\theta$ when forming $\phi_\theta(Z^{(a)},X_t)$. 

We use the following Bayesian linear regression model, which is linear in $\phi_\theta(Z^{(a)},X_t)$:
\begin{align*}
    Y_t^{(a)} &= \phi_\theta(Z^{(a)},X_t)^\top \beta^{(a)} + \epsilon_t^{(a)}
    \qquad \TN{where}~~~
\beta^{(a)} \sim N(0, I_d) ~~~\TN{and}~~~
\epsilon_t^{(a)} \sim N(0,1/4).
\end{align*}
where $\beta^{(a)}$ are drawn iid across $a$, and $\epsilon_t^{(a)}$ are drawn iid across $a,t$. Above, the noise variance is set to $1/4$, the maximum variance of a Bernoulli random variable.
Note that while this version of linear Thompson sampling does use $p_\theta$ to form the context $\phi_\theta(Z^{(a)},X_t)$, it does not utilize a fitted prior. 

\paragraph{Linear Thompson Sampling Using Learned Features (Fitted prior)} 
Here, we propose a variant of the Linear Thompson Sampling Using Learned Features method above, but fit the prior using historical data $\MC{D}^{\TN{offline}}$. We use $\MC{A}^{\TN{offline}}$ to denote all actions across all tasks in $\MC{D}^{\TN{offline}}$. We fit the following Bayesian linear regression model for each action $a \in \MC{A}^{\TN{offline}}$:
\begin{align*}
    Y_t^{(a)}&=\phi_\theta(Z^{(a)},X_t)^\top \beta^{(a)} + \epsilon_t^{(a)}
    \qquad \TN{where}~~~
\beta^{(a)} \sim N(\mu, \Sigma) ~~~\TN{and}~~~
\epsilon_t^{(a)} \sim N(0,\sigma^2)
\end{align*}
where $\beta^{(a)}$ are drawn iid across $a$, and $\epsilon_t^{(a)}$ are drawn iid across $a,t$.

For fitting $\mu,\Sigma,\sigma^2$, we do the following: 
\begin{itemize}[leftmargin=*]
    \item For each action $a \in \MC{A}^{\TN{offline}}$ in the available historical data (see Appendix \ref{app:poolactions}), we fit the action-specific least squares ridge-regression model using the corresponding historical data from $\MC{D}^{\TN{offline}}$:
    \begin{align*}
        \hat\beta^{(a)}=\textrm{argmin}_\beta ~ \bigg\{\alpha \|\beta\|^2_2 + \sum_{t\in\mathcal T_1}(Y^{(a)}_t-\phi_\theta(Z^{(a)},X_t)^\top \beta)^2 \bigg\},
    \end{align*}
    where $\MC{T}_1$ denotes the first $80\%$ of timesteps in $[1, 2, \dots, T]$. We set the ridge parameter $\alpha=0.1$. %

    We add the ridge penalty term because $\phi_\theta(Z^{(a)},X_t)$ is $100$-dimensional and we found that the adding the ridge penalty leads to more stable coefficient estimates.
    
    \item Then we set $\hat \mu = \frac{1}{|\MC{A}^{\TN{offline}}|} \sum_{a \in \MC{A}^{\TN{offline}}} \hat\beta^{(a)}$ %
    and $\hat \Sigma$ to be the sample covariance of the $\hat\beta^{(a)}$ across $a \in \MC{A}^{\TN{offline}}$; to ensure the covariance matrix is well-conditioned (to avoid numerical issues when computing posteriors), we add $10^{-4} \cdot \mathbf{I}_d$ to the sample covariance.
    We set $\hat \sigma^2$ to the sample variance of the residuals, i.e. the sample variance of $Y_t^{(a)}-\phi_\theta(Z^{(a)},X_t)^\top \hat\beta^{(a)}$ across $a$ and $t$, where $a\in \MC{A}^{\TN{offline}}$ and $t\in\mathcal T_2$, and where $\MC{T}_2$ denotes the final $20\%$ of timesteps in $[1, 2, \dots, T]$.
\end{itemize}

\paragraph{LinUCB.}
We implement LinUCB-disjoint in \citep{li2010contextual}, on contexts $X_t$. We set $\alpha=0.1$ as it performs well in comparison to a small set of other values tried (\{0.1,1,2\}). Note that unlike TS-Gen, LinUCB does not learn a rich and flexible prior based on task features $Z_\tau$.

\subsection{Additional simulation results}
\subsubsection{Sequence loss vs. regret under TS-Gen (Figure \ref{fig:loss_vs_regret})}
\label{app:seqloss}
We examine the relationship between sequence model loss $\ell(p_\theta)$ and regret of TS-Gen using $p_\theta$ in the \textsc{Synthetic} setting. Our Theorem \ref{thm:psarRegret} suggests that the lower the loss of a sequence model $p_\theta$ the lower the regret of TS-Gen using that sequence model $p_\theta$. We examine this by varying the amount of training tasks used to learn $p_\theta$ and thus obtain sequence models with different losses. Indeed, in Figure~\ref{fig:loss_vs_regret}, models trained on more data tend to have lower sequence loss, which tend to have lower regret. 
\label{sec:loss_vs_regret}
\begin{figure}[h]
    \centering
\includegraphics[width=0.415\linewidth]{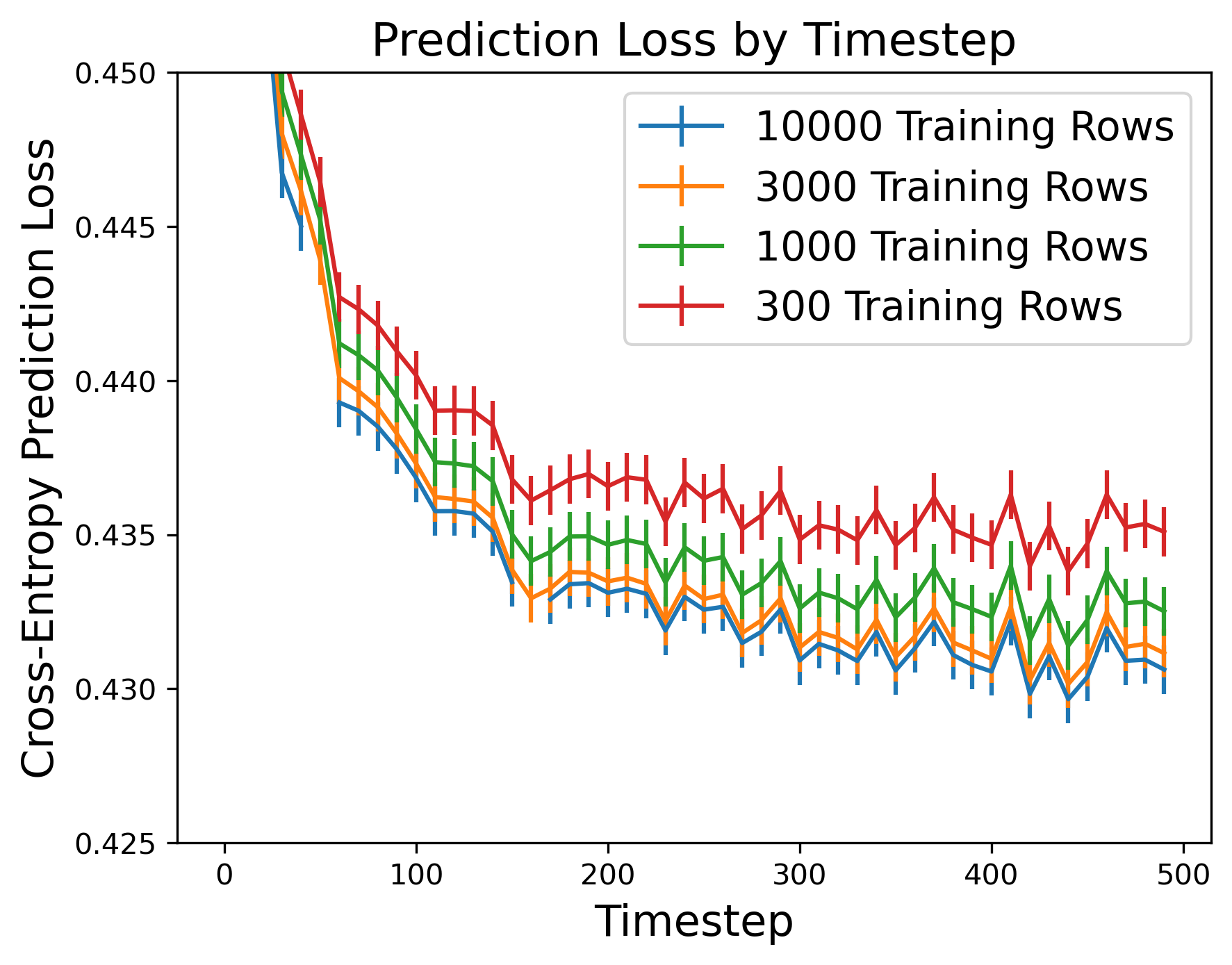}
\includegraphics[width=0.4\linewidth]{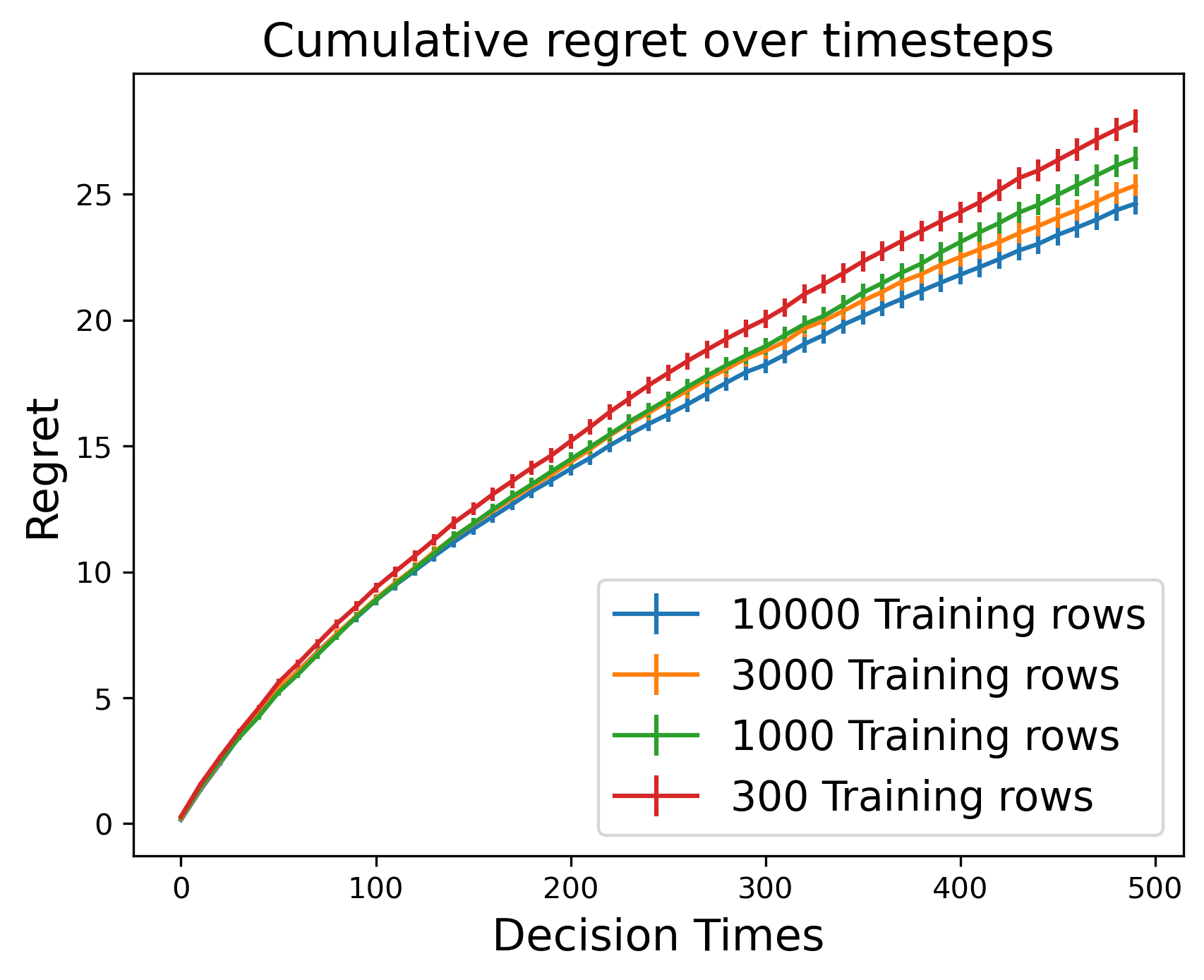}
\caption{\bo{Sequence loss vs. bandit regret:} 
We demonstrate the relationship between sequence loss and regret for TS-Gen by pre-training our sequence models offline on varying dataset sizes in the semisynthetic setting. As training dataset sizes are smaller, sequence loss (left) is higher (worse), and bandit regret (right) is higher (worse). ``Training rows'' refers to the number of actions used in the pool of actions to select from to form tasks (Appendix \ref{app:poolactions}).
\bo{(Left)}: Prediction loss by timestep. We plot an empirical estimate of the per-timestep (non-cumulative) loss from \eqref{eq:pop_loss} by evaluating our sequence models on an held-out validation set. Error bars represent $\pm 1$ s.e. 
\bo{(Right)}: Cumulative regret for TS-Gen using the corresponding sequence models, with logistic policy class, and relative to the logistic ``oracle''. Error bars represent $\pm 1$ s.e. averaged over 500 re-drawn bandit environments.}
\label{fig:loss_vs_regret}
\end{figure}

\subsubsection{Policy class for TS-Gen (Figure \ref{fig:semisynthetic_policy_class_comparison})}
\label{sec:policy_class}
The choice of policy class $\Pi$ affects both the reward achieved by TS-Gen, and the ``oracle''; see Figure \ref{fig:semisynthetic_policy_class_comparison}. In the semisynthetic setting, TS-Gen has moderately greater reward using an MLP-based policy than a logistic policy. In contrast, the ``oracle'' using an MLP-based policy is much better than the ``oracle'' using a logistic policy. 

\begin{figure}[h]
\centering
\includegraphics[width=0.4\linewidth]{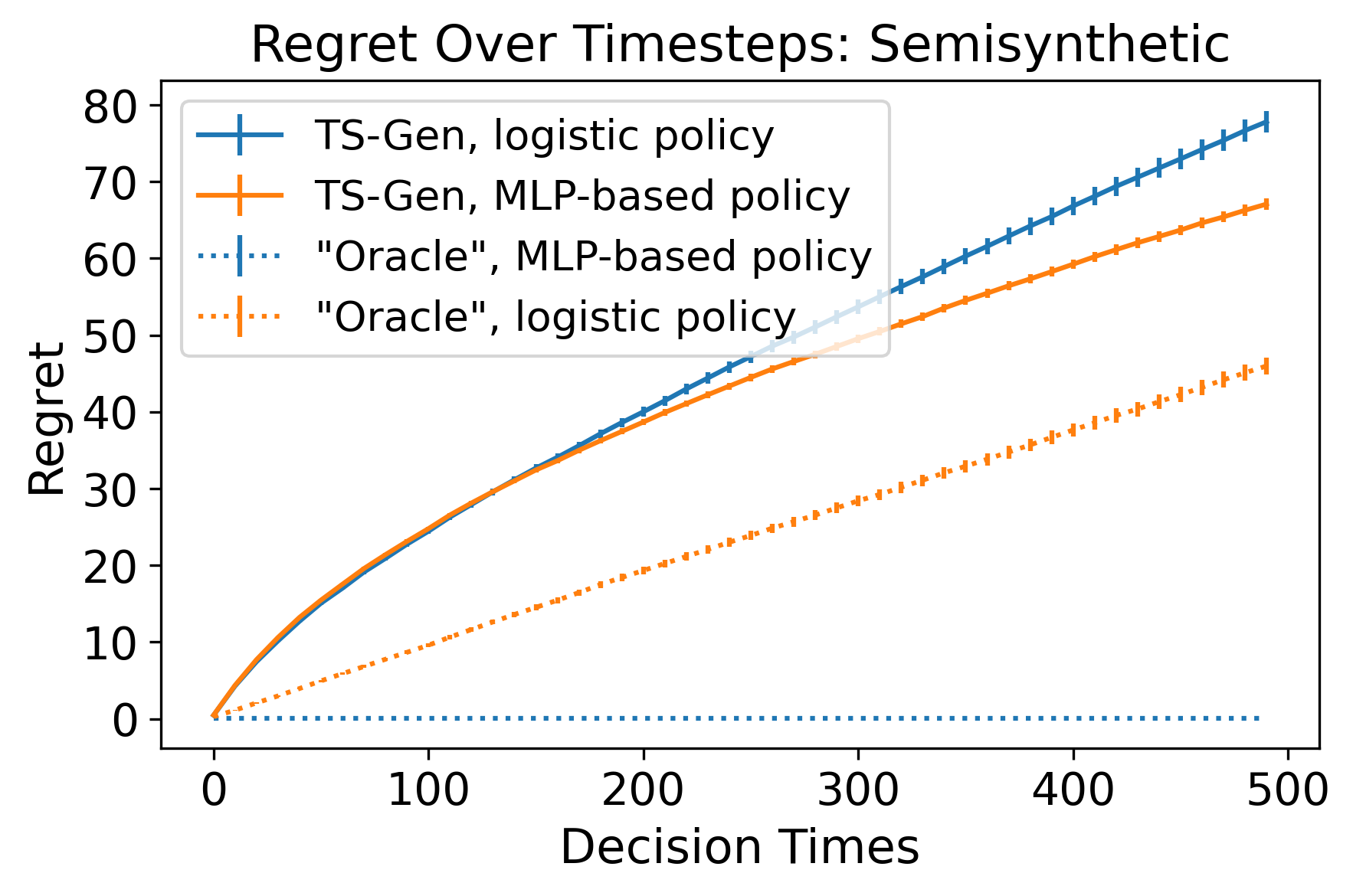}
\centering\includegraphics[width=0.4\linewidth]{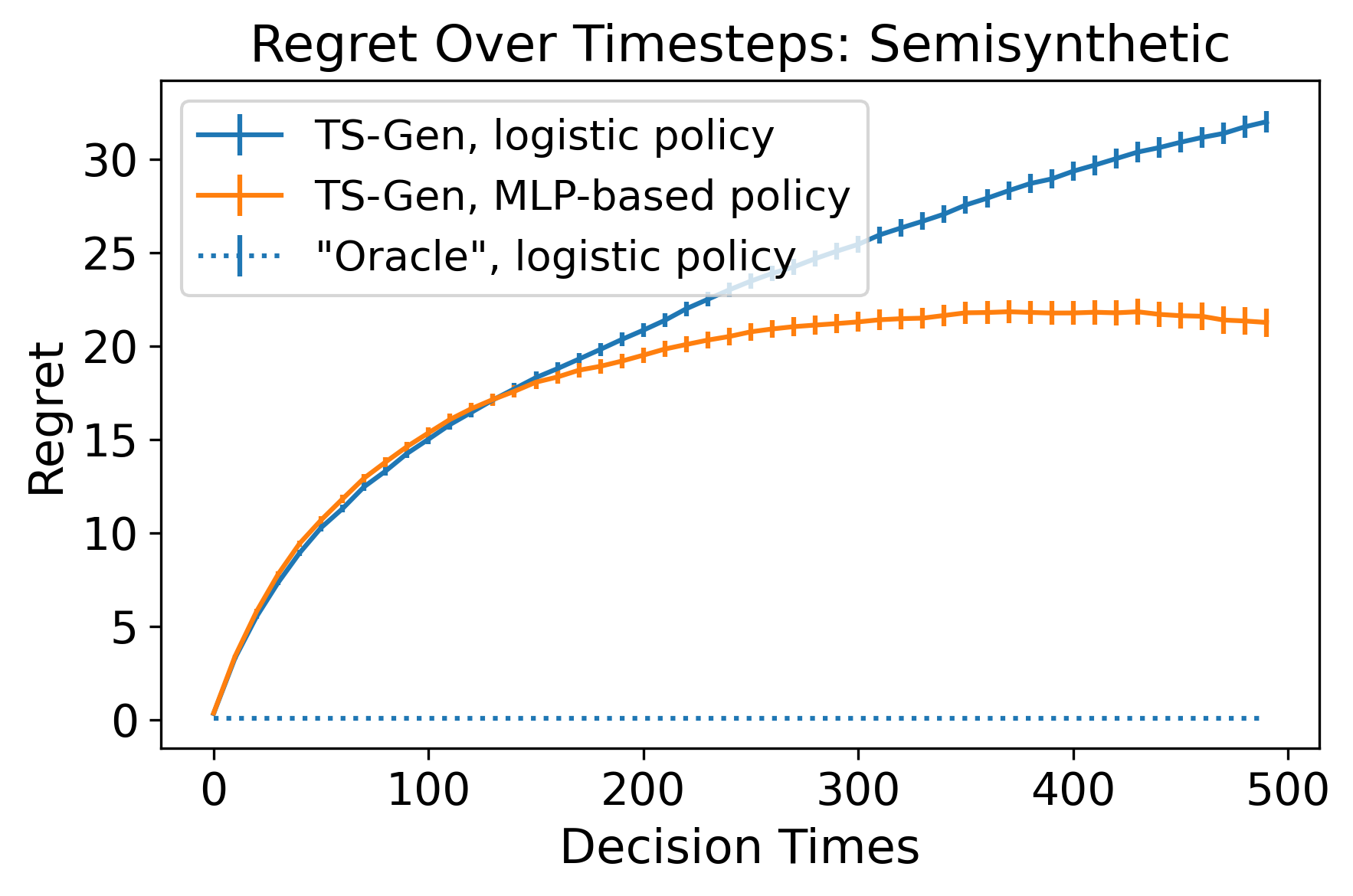}
\caption{Varying policy classes in the semisynthetic setting. The same experimental results are plotted on the left and the right. The plot on the right calculates regret relative to the logistic ``oracle'', while the left calculates regret relative to the MLP-based ``oracle''. 
Error bars are $\pm 1$ s.e. across 500 bandit environments.}
\label{fig:semisynthetic_policy_class_comparison}
\end{figure}

\subsection{TS-Gen with truncated imputation horizon}
TS-Gen imputes missing outcomes up to the horizon $T$. In practice, one may want to truncate the number of imputed timesteps in Algorithm~\ref{alg:posterior_sample} to a smaller number than $T$ %
in order to reduce computation cost in the decision-making step for TS-Gen, or to run TS-Gen when the total number of timesteps $T$ is unknown.
Fortunately, regret does not degrade quickly when the imputation horizon is truncated, which we observe in Figure~\ref{fig:fewer_imputation_steps}.
\begin{figure}[h]
    \includegraphics[width=0.4\linewidth]{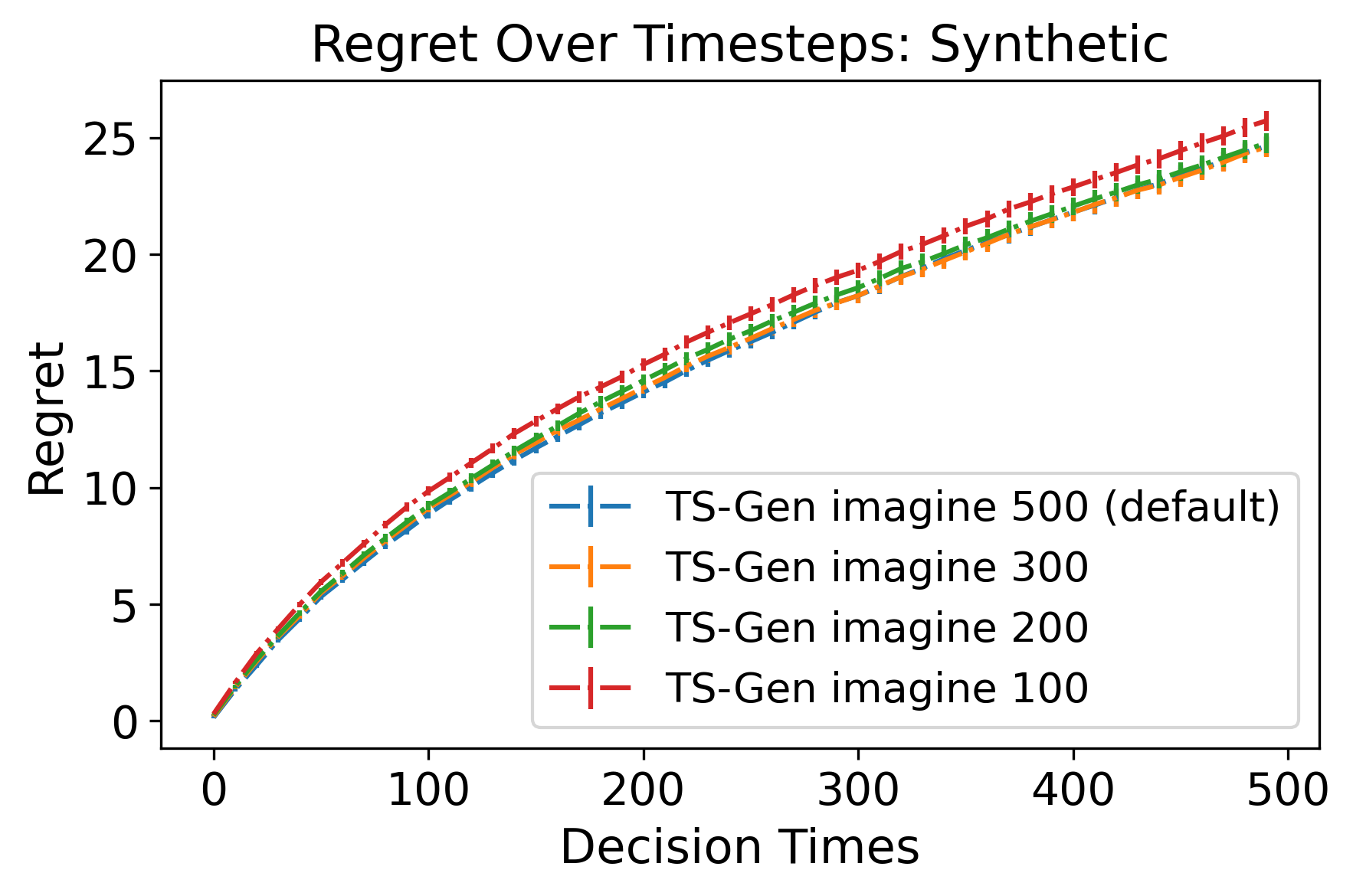}
    \includegraphics[width=0.4\linewidth]{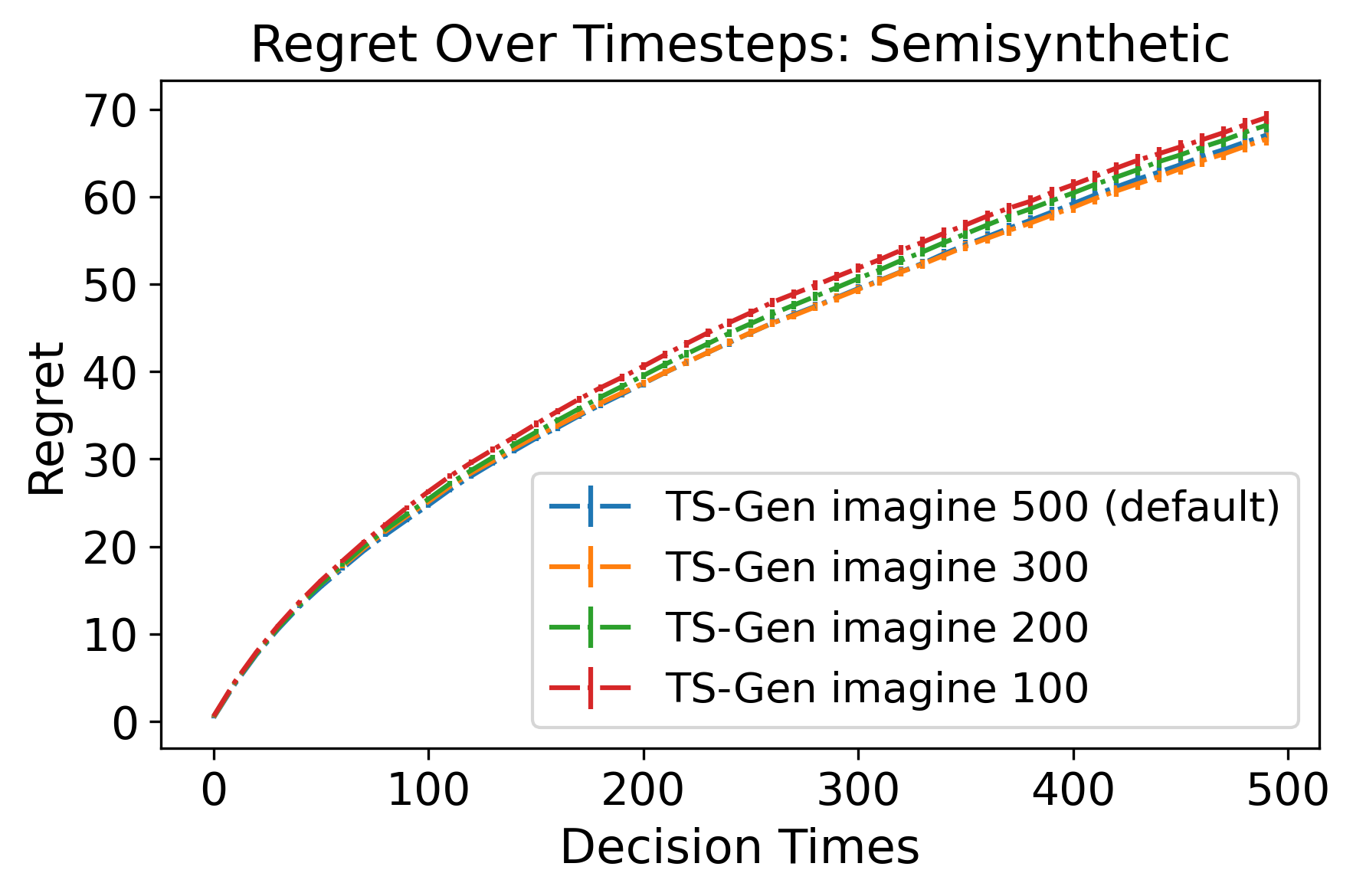}
    \caption{Regret for TS-Gen with truncated imputation horizon in the synthetic (left) and semisynthetic (right) settings. Performance degrades slowly and smoothly with reduced number of imputation steps. Error bars are $\pm 1$ s.e. across 500 Monte Carlo repetitions.}\label{fig:fewer_imputation_steps}
\end{figure}

\subsection{TS-Gen with simpler sequence models}
To understand how the regret for TS-Gen depends on the complexity of the sequence model $p_\theta$, we compare TS-Gen with simpler sequence models. Specifically, we compare the regret results in Section~\ref{sec:experiments} with their counterparts where the final MLP head of the sequence model (Figure~\ref{fig:architecture}) is replaced with an MLP with fewer layers. Recall that the usual TS-Gen has an input layer, 3 hidden layers, and an output layer (Section~\ref{app:sequence_models}), adding up to 5 total layers. We compare regret for such variants of TS-Gen in Figure~\ref{fig:simpler_MLP}.
\begin{figure}[h]
    \includegraphics[width=0.4\linewidth]{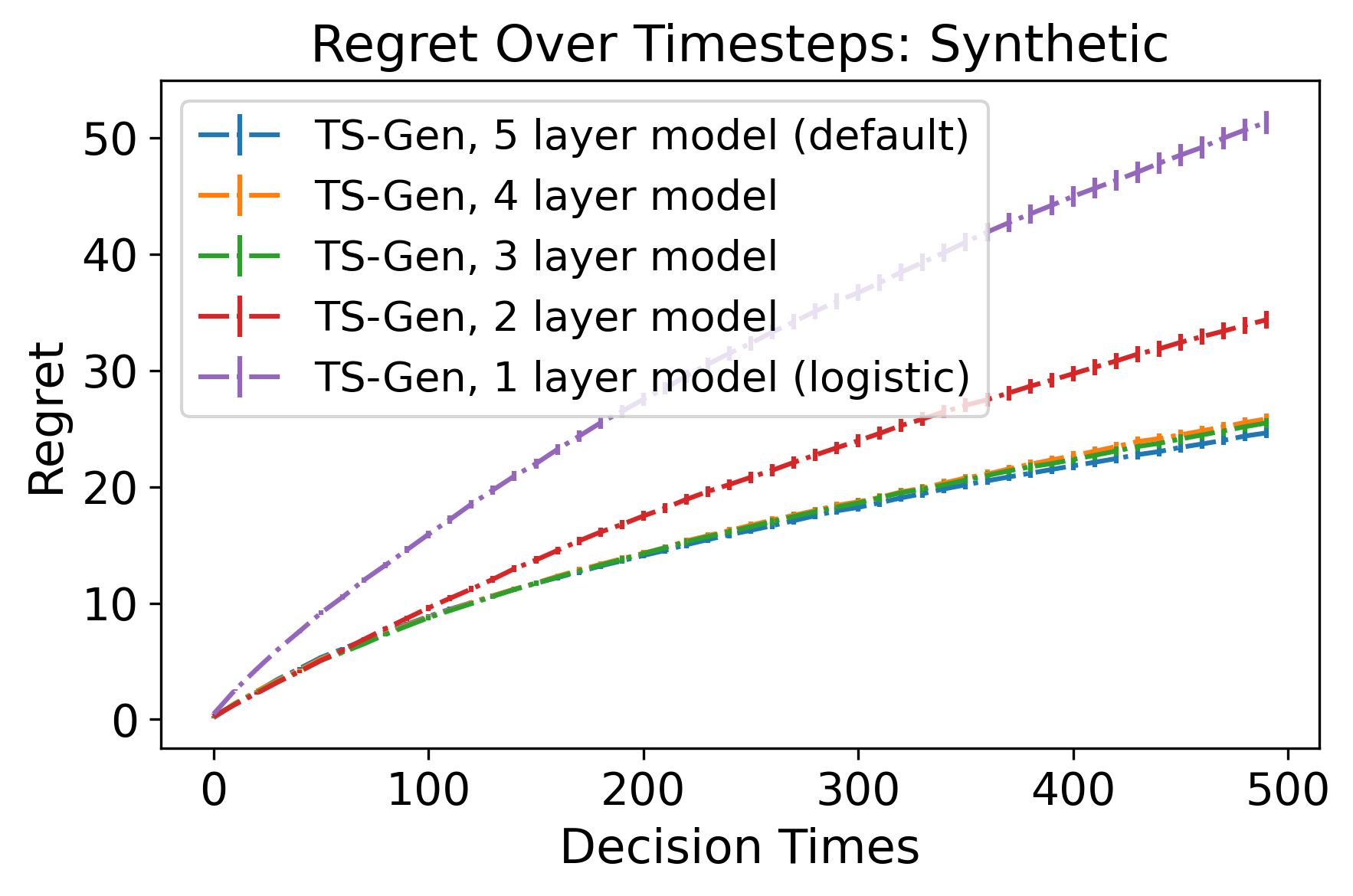}
    \includegraphics[width=0.4\linewidth]{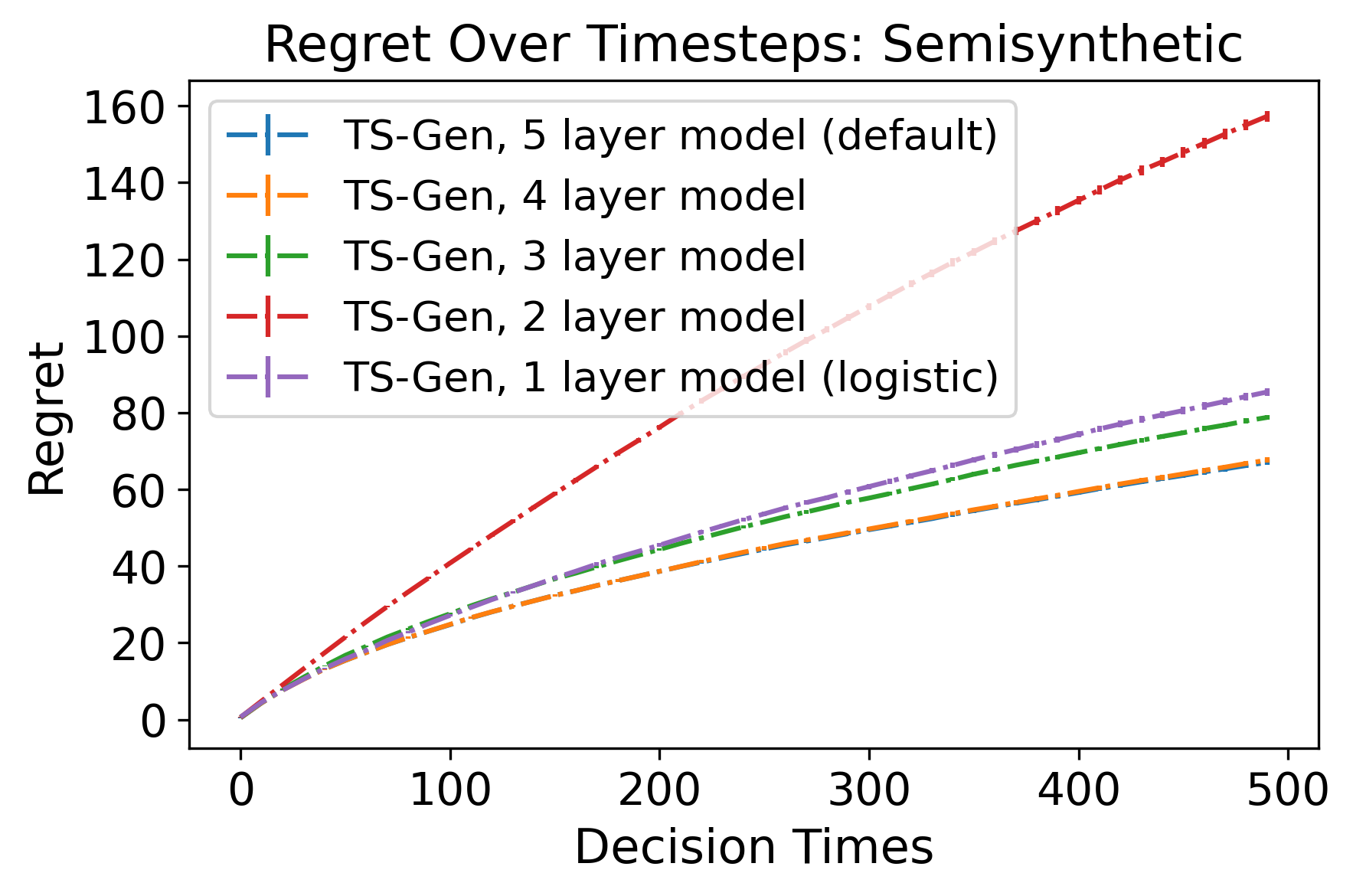}
    \caption{Regret for TS-Gen with simpler sequence models $p_\theta$ in the synthetic (left) and semisynthetic (right) settings. Error bars are $\pm 1$ s.e. across 500 bandit environments.}\label{fig:simpler_MLP}
\end{figure}

\subsection{Softmax Sampling vs. Greedy}
Here we compare Softmax Sampling as described in Appendix~\ref{app:baseline_bandit_methods}, with Greedy, and $\epsilon$-Greedy in Figure~\ref{fig:greedy_variants}. Softmax Sampling is another bandit algorithm that uses $p_\theta$. Like $\epsilon$-Greedy, it ``explores'' while using $p_\theta$, but it does not adequately handle uncertainty as TS-Gen does, as evidenced by the difference in regret. 
\begin{figure}[h]
    \includegraphics[width=0.4\linewidth]{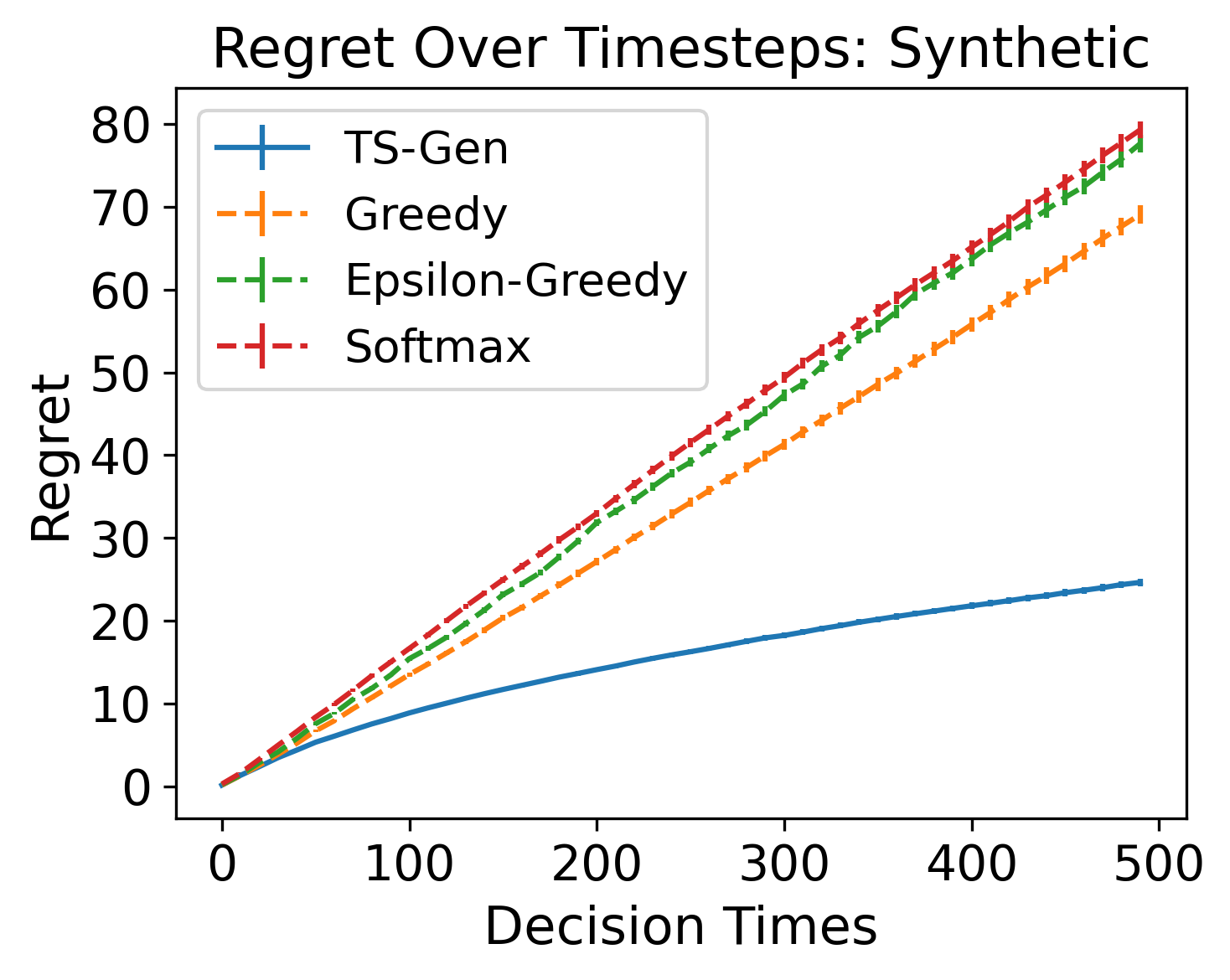}
    \includegraphics[width=0.4\linewidth]{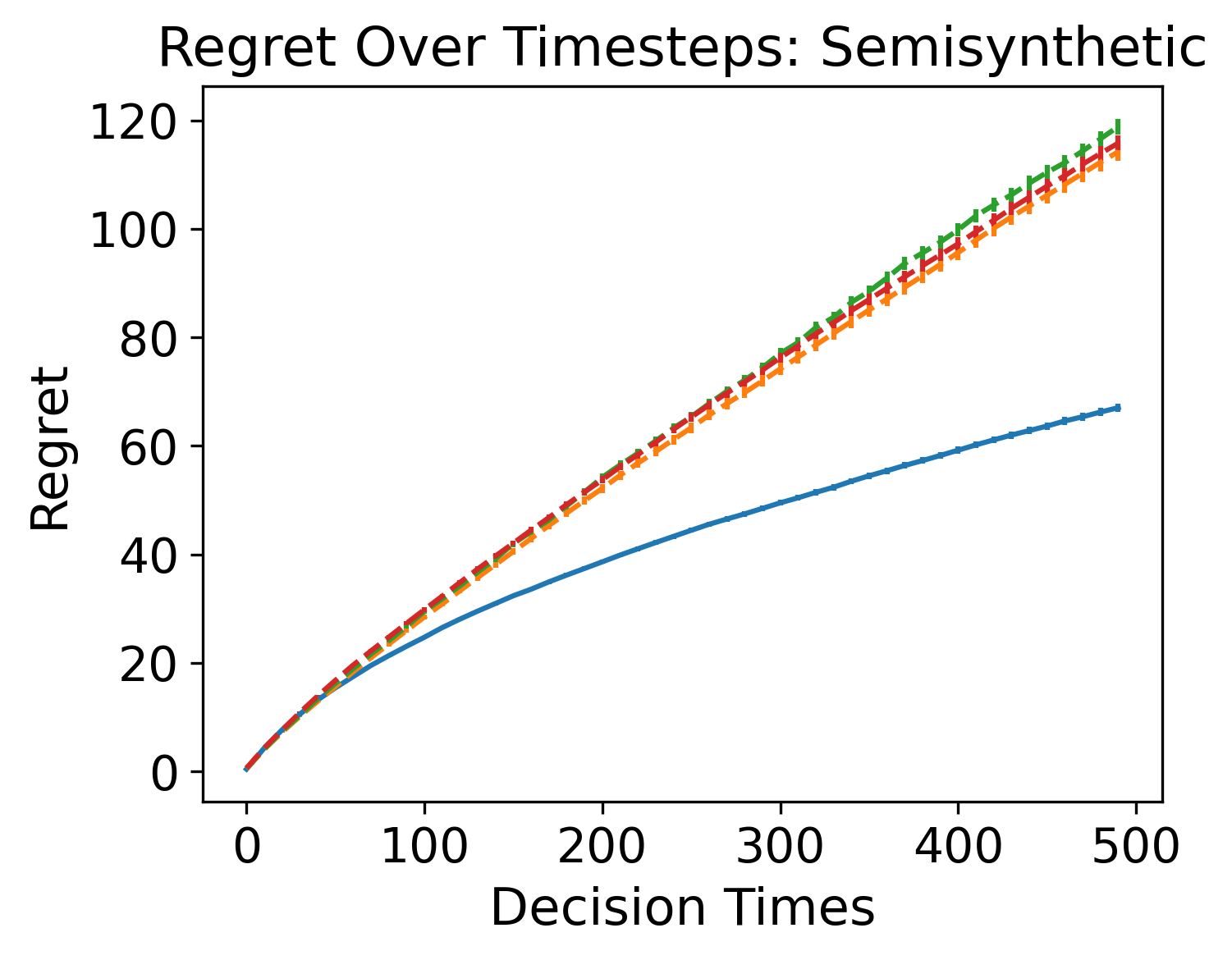}
    \caption{Regret for Softmax Sampling vs Greedy vs $\epsilon$-Greedy in the synthetic (left) and semisynthetic (right) settings. Error bars are $\pm 1$ s.e. across 500 bandit environments.}
    \label{fig:greedy_variants}
\end{figure}

\subsection{Compute resources}
\label{app:compute_resources}

\paragraph{Offline pretraining.} Pretraining a single $p_\theta$ for the semisynthetic setting took at most $\approx 12$ hours; We use a CPU cluster at Columbia GSB and request at most 50GB of memory per job.
The semisynthetic data generating process also involves evaluating two pre-trained text classifiers, and then caching their outputs and/or embeddings (DistilBERT embeddings + text classifier outputs in the semisynthetic setting); this was done once on a single GPU at negligible time cost (several minutes). 

\paragraph{Online decision-making.} 
For online decision-making, we also use a CPU cluster at Columbia GSB and for each job we request at most 10GB of memory. Below is a sample of decision-making time \emph{per timestep}, across 20 sampled semisynthetic bandit tasks ($10,000$ decisions total). Note that we cache the DistilBERT embedding representing the news article text so this is not included in the computation. In each sampled bandit task, we compute the average per-timestep time in seconds for generation vs policy fitting; then, we report mean and variance of these quantities across the sampled bandit tasks. We write these times below as mean $\pm $ standard deviation across the $10,000$ decisions.
\begin{itemize}[leftmargin=1.5em]
    \item \bo{TS-Gen, using logistic policies:} $3.1\pm 0.5$ seconds for generating $\hat\tau_t$, $0.01\pm 0.02$ seconds for policy fitting, $3.1\pm 0.5$ seconds total
    \item \bo{TS-Gen, using MLP-based policies:} $4.2\pm 0.5$ seconds for generating $\hat\tau_t$, $2.2\pm 0.03$ seconds for policy fitting, $6.4\pm 0.5$ seconds total
    \item \bo{Neural Linear Thompson Sampling:} $1.9\pm 0.2$ seconds total%
\end{itemize}

\subsection{Constrained policy classes}
\label{app:fairness}

Algorithmic fairness is a topic of general interest \citep{MehrabiWa24,mitchell2021algorithmic}, and fairness can be thought of as a modeling constraint~\citep{corbett2017algorithmic}. Because our proposed method takes a policy class as an input, results can be immediately adapted to settings that require specific kinds of constraints, such as fairness or balancing constraints. 

As a simple example, we could enforce the constraint that at any given timestep $t$, a fitted policy $\pi^*(\cdot; \hat\tau_t)$ must satisfy the condition that it would give a specific treatment to approximately the same proportion of user contexts $X_t$ across two pre-specified groups. For example, these groups can be two sets of specific individuals, representatively drawn from the population, where each group selects individuals from a different geographic region, and where the groups are not related to contexts drawn in $\hat\tau_t$. 
This kind of fairness constraint is essentially the notion of predictive parity \citep{verma2018fairness}. 

To implement such a policy class, we would modify the policy fitting procedure in Line 4 in Algorithm \ref{alg:Thompson} as follows: Letting $\hat\tau_t$ be the imputed table, and letting $G_1=(X_{1,1},X_{1,2},\ldots,X_{1,N_1})$ and $G_2=(X_{2,1},X_{2,2},\ldots,X_{2,N_2})$ be these predefined sets of user contexts $X$, we would be solving $\pi^*=\argmax_\pi \sum_{t=1}^T Y_t^{\pi(X_t;\hat\tau_t)}$ subject to the constraint that $\left|\frac{1}{N_1}\sum_{i=1}^{N_1} \mathbf{1} \{\pi(X_{1,i})=a\} -\frac{1}{N_2}\sum_{i=1}^{N_2} \mathbf{1} \{\pi(X_{2,i})=a\}\right| \leq \epsilon$ for some chosen $\epsilon>0$. 

\subsection{Licenses}
\paragraph{MIND news dataset} We use the MIND news dataset \citep{wu2020mind}. 
It is under a Microsoft Research License at \href{https://github.com/msnews/MIND/blob/master/MSR%20License_Data.pdf}{https://github.com/msnews/MIND/blob/master/MSR\%20License\_Data.pdf}, which we comply with. 
The terms of use are at \href{https://www.microsoft.com/en-us/legal/terms-of-use}{https://www.microsoft.com/en-us/legal/terms-of-use}.

\paragraph{DistilBERT} Our semisynthetic sequence models use DistilBERT \citep{sanh2019distilbert} from \href{https://huggingface.co/distilbert/distilbert-base-uncased}{https://huggingface.co/distilbert/distilbert-base-uncased}. It has an apache-2.0 license, with license and terms of use at \href{https://huggingface.co/datasets/choosealicense/licenses/blob/main/markdown/apache-2.0.md}{https://choosealicense.com/licenses/apache-2.0/}. 

\paragraph{Text classifiers for semisynthetic setting}
We use text classifiers for the data generating process in the semisynthetic experiment setting. 
We use a sentiment classifier
\citep{hfsentiment}, accessed at \href{https://huggingface.co/bhadresh-savani/distilbert-base-uncased-sentiment-sst2}{https://huggingface.co/bhadresh-savani/distilbert-base-uncased-sentiment-sst2}, and a formality classifier \citep{hfformality}, accessed at \href{https://huggingface.co/s-nlp/roberta-base-formality-ranker}{https://huggingface.co/s-nlp/roberta-base-formality-ranker}. Both models were obtained from huggingface.com. The sentiment classifier is not associated with a paper and is under an Apache 2.0 license \href{https://choosealicense.com/licenses/apache-2.0/}{https://choosealicense.com/licenses/apache-2.0/}, which we comply with. The formality classifier is associated with a paper, as cited, and is under a cc-by-nc-sa-4.0 license \href{https://spdx.org/licenses/CC-BY-NC-SA-4.0}{https://spdx.org/licenses/CC-BY-NC-SA-4.0}, which we also comply with.

\end{document}